\theoremstyle{plain}
\newtheorem{theorem}{Theorem}[section]
\newtheorem{proposition}[theorem]{Proposition}
\newtheorem{lemma}[theorem]{Lemma}
\newtheorem{corollary}[theorem]{Corollary}
\theoremstyle{definition}
\newtheorem{definition}[theorem]{Definition}
\newtheorem{assumption}[theorem]{Assumption}
\theoremstyle{remark}
\newtheorem{remark}[theorem]{Remark}
\icmltitlerunning{TnALE: Solving Tensor Network Structure Search with Fewer Evaluations}
\begin{document}

\twocolumn[
\icmltitle{Alternating Local Enumeration (TnALE):\\ Solving Tensor Network Structure Search with Fewer Evaluations}



\icmlsetsymbol{equal}{*}

\begin{icmlauthorlist}
\icmlauthor{Chao Li}{aip}
\icmlauthor{Junhua Zeng}{equal,gut,aip}
\icmlauthor{Chunmei Li}{equal,heu,WASEDA}
\icmlauthor{Cesar Caiafa}{iar,aip}
\icmlauthor{Qibin Zhao}{aip}
\end{icmlauthorlist}

\icmlaffiliation{aip}{RIKEN-AIP, Tokyo, Japan}
\icmlaffiliation{heu}{College of Information and Communication Engineering, Harbin Engineering University, Harbin, China}

\icmlaffiliation{gut}{School of Automation, Guangdong University of Technology, Guangzhou, China}

\icmlaffiliation{WASEDA}{Department of Computer Science and Communications Engineering, WASEDA University, Tokyo, Japan}

\icmlaffiliation{iar}{Instituto Argentino de Radioastronomía, CONICET CCT La Plata/CIC-PBA/UNLP, V. Elisa, ARGENTINA
}

\icmlcorrespondingauthor{Qibin Zhao}{qibin.zhao@riken.jp}
\icmlcorrespondingauthor{Chao Li}{chao.li@riken.jp}

\icmlkeywords{Tensor Network, Tensor Decomposition, Model Selection, High-Dimensional Data}

\vskip 0.3in
]



\printAffiliationsAndNotice{\icmlEqualContribution} 

\begin{abstract}
    Tensor network (TN) is a powerful framework in machine learning, but selecting a good TN model, known as TN structure search (TN-SS), is a challenging and computationally intensive task.
    The recent approach TNLS~\cite{li2022permutation} showed promising results for this task. 
    However, its computational efficiency is still unaffordable, requiring too many evaluations of the objective function. 
    We propose TnALE, a surprisingly simple algorithm that updates each structure-related variable alternately by local enumeration, \emph{greatly} reducing the number of evaluations compared to TNLS.
    We theoretically investigate the descent steps for TNLS and TnALE, proving that both the algorithms can achieve linear convergence up to a constant if a sufficient reduction of the objective is \emph{reached} in each neighborhood.
    We further compare the evaluation efficiency of TNLS and TnALE, revealing that $\Omega(2^K)$ evaluations are typically required in TNLS for \emph{reaching} the objective reduction, while ideally $O(KR)$ evaluations are sufficient in TnALE, where $K$ denotes the dimension of search space and $R$ reflects the \emph{``low-rankness''} of the neighborhood.
    Experimental results verify that TnALE can find practically good TN structures with vastly fewer evaluations than the state-of-the-art algorithms.
    Our code is available at \url{https://github.com/ChaoLiAtRIKEN/TnALE}.
\end{abstract}
\section{Introduction}
Tensor network (TN) has been widely applied to solving high-dimensional problems in both machine learning~\cite{anandkumar2014tensor,novikov2015tensorizing,zhe2015scalable,glasser2019expressive,kossaifi2020tensor,miller2021tensor,pmlr-v139-richter21a,malik2022more} and quantum physics~\cite{orus2019tensor,felser2021quantum}.
The success of \emph{AlphaTensor}~\cite{AlphaTensor2022} once again confirmed the usefulness of tensors in various fields.
TN practitioners, on the other hand, have to 
face in practice challenging
problems associated with model selection, known as \emph{TN structure search (TN-SS)}, for example: (1) how to determine the TN-ranks?; (2) should we prefer tensor-train~(TT,~\citealt{oseledets2011tensor}), tensor-ring~(TR,~\citealt{zhao2016tensor}) or other TN topology?; (3) how to relate the tensor modes to core tensors of a TN~(the permutation problem,~\citealt{li2022permutation}), and so on.
Unfortunately, some of these problems have been proven to be NP-hard~\cite{hillar2013most}\footnote{For instance, it proves that determining the optimal ranks for Tucker decomposition~\cite{tucker1966some} is NP-hard.}, and most of them suffer from the \emph{``combinatorial explosion''}\footnote{It means the rapid growth of TN structure searching space
due to the combinatorics of ranks, 
topologies, permutations, etc. } so that the brute force search is not a viable option in practice.

Several works have put effort into different aspects of 
TN-SS (see Section \ref{sec:relworks}), 
but many of the methods are restricted to 
specific tasks or work poorly in practice, so a general and efficient TN-SS method is needed.
Recently,~\citet{li2022permutation} proposed an algorithm dubbed TNLS, which addressed the rank and permutation selection problem for TNs.
However, its computational complexity is high since it requires evaluating the objective function on a large number of structure candidates.

To address this issue, we accelerate TNLS by equipping the algorithm with \textbf{A}\textit{lternating} \textbf{L}\textit{ocal} \textbf{E}\textit{numeration} --- a surprisingly simple but efficient searching method in neighborhoods.
The new algorithm, named \emph{TnALE}, can improve the evaluation efficiency of TNLS \emph{greatly}.
To be specific, TnALE follows the ``local-searching'' scheme as TNLS but alternately updates each structure-related variable by enumerating its values within a neighborhood.
The intuition is that the alternately updating avoids the combinatorial explosion and the enumeration in neighborhoods guarantees the non-increasing of values of the objective in the search.
On the other hand, the original TNLS applies random sampling, causing the majority of samples would not provide helpful information for decreasing the value of the objective function.

The theoretical advantage of TnALE is also clear.
We prove that, with new-defined \emph{discrete} convexity-related assumptions of the objective function, both TNLS and TnALE can achieve a linear convergence up to a constant if a sufficient reduction 
of the objective function is reached in each neighborhood~(Theorem~\ref{thm:rateFixedP}).
We also prove that the number of evaluations required in TNLS would grow \emph{exponentially} with the dimension of search space (Prop.~\ref{thm:TNLSSammpling}), with respect to the dimension of TN-ranks and the TN order, while TnALE shows a \emph{linear} growth in the ideal case (Prop.~\ref{thm:TnALESampling}).
Our analysis reveals that such
an improvement in the evaluation efficiency essentially comes from the \emph{low-rankness} of the optimization landscape in neighborhoods, attributed to the close relationship between TnALE and cross-approximation methods for matrices and tensors \cite{tyrtyshnikov2000incomplete, oseledets2010tt, sozykin2022ttopt}.
Numerically, extensive experiments on both synthetic and real-world data are implemented to assess the evaluation efficiency and the superior performance provided by TnALE.

Our main contributions 
can be summarized as follows:
\begin{itemize}
[noitemsep,topsep=0pt,parsep=5pt,partopsep=0pt]
    \item We propose TnALE, a novel algorithm that \emph{greatly} reduces the computational cost for the task of TN structure search (TN-SS);
    \item We establish for the first time the convergence analysis for both TNLS~\cite{li2022permutation} and TnALE, and rigorously prove their evaluation efficiency.
\end{itemize}

\subsection{Related Works}{\label{sec:relworks}
}

\noindent\textbf{Tensor network structure search (TN-SS).}
TN-SS can be specified into three 
sub-problems:
(1) TN-rank selection (TN-RS)~\citep{rai2014scalable,zhao2015bayesian,yokota2016smooth,cheng2020novel,mickelin2020algorithms,cai2021blind,hawkins2021bayesian,long2021bayesian,sedighin2021adaptive,yin2022batude,ghadiri2023Approximately}; (2) TN-topology selection (TN-TS)~\cite{hashemizadeh2020adaptive,haberstich2021active,nie2021adaptive,falco2023geometry,hikihara2023automatic,kodryan2023mars,liu2023adaptively}; and (3) TN-permutation selection (TN-PS)~\cite{acharya2022qubit,chen2022one}.
Recently, some methods 
to solve the TN-SS problem were
developed via discrete optimization~\cite{hayashi2019exploring,hashemizadeh2020adaptive,li2020evolutionary,li2021heuristic,li2022permutation,solgi2022evolutionary}. 
Although these methods typically achieve higher precision than their counterparts in practice, they suffer from the expensive computational cost and the lack of theoretical analysis.
Our work follows the TNLS algorithm~\cite{li2022permutation} in this direction but develops a new approach to improve its computational efficiency and fill in the missing theoretical analysis for convergence and evaluation efficiency.

\noindent\textbf{Finding the 
extreme entry value within a tensor.}
As discussed later in Section~\ref{sec:samplingEfficiency}, the \emph{alternating local enumeration} 
method is technically equivalent to finding the minimum entry within a multidimensional landscape.
As such, our work is strongly related to the recently published method 
TTOpt~\cite{sozykin2022ttopt}, which finds the near-maximum entry of a tensor by cross-sampling~\cite{tyrtyshnikov2000incomplete,zhang2019cross} in TT topology~\cite{oseledets2010tt}.
Compared to TTOpt, the proposed TnALE 
recursively finds
the extreme entry within a tensor associated with the neighborhood rather than the global search space, so that TnALE can handle the situation of \emph{infinite} candidates (entries) in the optimization.




\section{Preliminaries}
In this section, we first summarize notations and review several central concepts related to the tensor network structure search (TN-SS).
Then, we provide a quick review of TNLS~\cite{li2022permutation}, a recently proposed algorithm for solving TN-SS, and point out that
\emph{TNLS suffers from the curse of dimensionality} in evaluation
efficiency.


\subsection{Notations}
Throughout the paper, we typically use blackboard letters to denote sets of objects, \textit{e.g.}, $\mathbb{G,F}$.
In particular, $\mathbb{R}$, $\mathbb{Z}_{+}$ and $\mathbb{Z}_{\geq{}0}$ denote real numbers, positive integers and non-negative integers, respectively.
We use boldface letters to denote vectors and matrices,~\textit{e.g.}, $\mathbf{x}\in\mathbb{Z}_+^{K}$ and $\mathbf{A}\in\mathbb{R}^{I\times{}J}$.
For tensors of \emph{arbitrary} order, we denote them using calligraphic letters,~\textit{e.g.}, $\mathcal{A,B}\in\mathbb{R}^{I_1\times{}I_2\times\cdots\times{}I_N}$.
Given a vector, such as $\mathbf{x}\in\mathbb{Z}_+^K$, $\Vert\mathbf{x}\Vert$ and $\Vert\mathbf{x}\Vert_\infty$ denote the $l_2$-norm and $l_\infty$-norm of $\mathbf{x}$, respectively.
The norms are also directly applied to matrices and tensors by thinking of them as vectors.
Following the notational conventions, $\vert{}x\vert$ denotes the absolute value of $x$ if $x\in\mathbb{R}$ is a scalar, while $\vert\mathbb{A}\vert$ denotes the cardinality if $\mathbb{A}$ is a set.
For the normed vector spaces armed with $\Vert\mathbf{x}\Vert_\infty$, we use $B_\infty(\mathbf{x},r_\mathbf{x})$ to denote the neighborhoods centered at $\mathbf{x}$ with the radius $r_\mathbf{x}>0$.
For any two functions $f:\mathbb{B}\rightarrow\mathbb{C}$ and $g:\mathbb{A}\rightarrow{}\mathbb{B}$, the operation $f\circ{}g:\mathbb{A}\rightarrow{}\mathbb{C}$ denotes the function composition.

\subsection{Tensor Network Structure Search~(TN-SS)}
We consider the \emph{tensor network}~(TN,~\citealp{Ye2019Tensor}) as a set of real tensors of the dimension $I_1\times{}I_2\times\cdots\times{}I_N$,
denoted $TNS(G,\mathbf{r})$, whose elements are in the form of contractions of \emph{core tensors}~\cite{cichocki2017tensor}, associated to the TN structure modeled by the pair $(G,\mathbf{r})$, where $G=(V,E)$ denotes a simple \emph{graph} of $N$ vertices modeling the \emph{TN-topology}~\cite{li2020evolutionary} 
and 
$\mathbf{r}\in\mathbb{Z}_{+}^{\vert{}E\vert}$ is a vector, whose entries are edge labels of $G$ corresponding to the \emph{TN-ranks}~\cite{Ye2019Tensor}.

\emph{Tensor network structure search} (TN-SS) aims generally to find the most compressed TN models for computational purposes while preserving the models' expressivity.
Furthermore, the most compressed TN models also imply the potential advantage for the generalization capability in learning tasks~\cite{khavari2021lower}.
Suppose a dataset $D$ and the task-specific loss function $\pi_D:\mathbb{R}^{I_1\times{}I_2\times\cdots\times{}I_N}\rightarrow{}\mathbb{R}_+$ involving $D$.
TN-SS is to solve the following bi-level discrete optimization problem
\begin{equation}
    \begin{split}
        \min_{(G,\mathbf{r})\in{}\mathbb{G}\times{}\mathbb{F}_G}{}\left(\phi(G,\mathbf{r})+\lambda\cdot\min_{\mathcal{Z}\in{}TNS(G,\mathbf{r})}\pi_D(\mathcal{Z})\right)\label{eq:basicModel}
    \end{split},
\end{equation}
where $G\in\mathbb{G}$ is a graph of $N$ vertices and $K$ edges, \mbox{$\mathbf{r}\in\mathbb{F}_G\subseteq\mathbb{Z}_+^K$}, $\phi:\mathbb{G}\times{}\mathbb{Z}_+^K\rightarrow{}\mathbb{R}_+$ represents the function measuring the model complexity of a TN whose structure is modeled by $(G,\mathbf{r})$, and $\lambda>0$ is a tuning parameter.
The intuition of~\eqref{eq:basicModel} is that, the inner minimization is to evaluate the task-specific expressivity for a TN structure, while the outer minimization is to find the optimal structure for the task by balancing the complexity and the expressivity 
of a TN model.

We remark that the formulation~\eqref{eq:basicModel} can be specified as different TN-SS sub-problems by restricting the feasible set $\mathbb{G}\times{}\mathbb{F}_G$ into different forms:
for TN-PS, we specify $\mathbb{F}_G=\mathbb{Z}_+^{K}$ and $\mathbb{G}$ to be the set containing the isomorphic graphs to a ``template'' graph~\cite{li2022permutation};
for TN-RS, it typically restricts $G$ to be fixed, and only finds TN-ranks \textit{i.e.}, $\mathbb{F}_G=\mathbb{Z}_+^K$;
last for TN-TS, it relaxes $\mathbb{G}$ to be the set containing all possible simple graphs of $N$ vertices and $\mathbf{r}$ is set to be fixed~\cite{li2020evolutionary} or searchable~\cite{hashemizadeh2020adaptive}.
Note from~\citet{Ye2019Tensor,lirank} that TN-TS (with rank selection) essentially coincides with TN-RS of a ``fully-connected'' TN~\cite{zheng2021fully}.

\subsection{TNLS: a Discrete Optimization Approach to TN-PS}
Recently, \citet{li2022permutation} proposed an 
algorithm called TNLS for solving~\eqref{eq:basicModel} effectively by stochastic search.
The core process of TNLS is reviewed in Alg.~\ref{alg:TNLS}, from which we see that the candidate of the optimal TN structure is updated if the algorithm finds better structures within a neighborhood using \emph{random sampling}.
Although \citet{li2022permutation} illustrates the superiority of TNLS compared to its counterparts, the algorithm is still time-consuming as acknowledged in their work.
To understand the reason, we theoretically observe that TNLS suffers from the curse of dimensionality, due to the random sampling.
More specifically, we state that

\emph{under reasonable conditions, $\Omega(2^K/\epsilon)$ samples are required by random sampling (wrt. step 1 in Alg.~\ref{alg:TNLS}) for achieving a constant probability $Pr\geq{}\epsilon$ for decreasing the objective of~\eqref{eq:basicModel} in a neighborhood, where $\epsilon>0$ and $K$ denotes the dimension of the search space.}
\begin{algorithm}[tb]
  \caption{\emph{The core process} of TNLS~\cite{li2022permutation}.}
  \label{alg:TNLS}
\begin{algorithmic}
  \STATE {\bfseries Initialize:}
  Randomly choose a TN structure as the center of the neighborhood.
  \WHILE{not convergence}
  \STATE 1. Sampling $(G,\mathbf{r})$'s \underline{\emph{randomly}} in the neighborhood;
  \STATE 2. Evaluating the samples with the objective of~\eqref{eq:basicModel};
  \STATE 3. Updating the center if better samples are obtained;
  \ENDWHILE
  \STATE {\bfseries Output:} The center of the neighborhood.
\end{algorithmic}
\end{algorithm}

A formal statement is deferred to Prop.~\ref{thm:TNLSSammpling}.
It is known from Alg.~\ref{alg:TNLS} that each sampled structure should be evaluated by solving the inner minimization of~\eqref{eq:basicModel}, so the huge number of samples implies the prohibitive cost in computation.
To address this problem, we introduce a more sampling-efficient approach by reforming the random sampling in Alg.~\ref{alg:TNLS}, to accelerate the  TN-SS procedure with fewer evaluations.

\section{TnALE: Accelerating TNLS via Alternating Local Enumeration}\label{sec:methods}
We present now the new searching algorithm dubbed~\emph{TnALE} for solving the TN-SS problem.
Figure~\ref{fig:TnALE} demonstrates the key idea for TnALE.
In the rest of this section, we mainly focus on the technical details of \emph{alternating local enumeration (ALE)}, which replaces the random sampling operation in Alg.~\ref{alg:TNLS} as the key factor for algorithm acceleration.
A complete introduction of TnALE, including the pseudocode and other details, is given in Appendix~\ref{apd:sec:TnALE}.

In TnALE, we find better structures within a neighborhood by updating each structure-related variable alternately.
For instance, Figure~\ref{fig:ALS} illustrates how ALE solves the TN-PS problem, searching for the optimal ranks and permutations for tensor ring (TR) decomposition of order-$4$.
As shown in the initialization of panel (b), all structure-related variables, including $r_{\{1,2,3,4\}}$ and $G$, are initialized with the center of a given neighborhood.
To start the search, TN structures are sampled by enumerating all values of $r_1$ \emph{within} the neighborhood while fixing other variables.
Next, the sampled TN structures with varying $r_1$ are evaluated individually by calculating the objective of~\eqref{eq:basicModel}, and $r_1$ is subsequently updated right off by choosing the one with the minimum objective in the samples.
Following $r_1$, the same operation is applied to variables from $r_2$ to $G$ sequentially (see panel (b)).
After updating $G$, we turn the updating direction backward from $r_4$ to $r_1$,~\textit{i.e.}, in a ``round-trip'' manner (see panel (c)).
Overall, the ALE will be stopped if all variables are not changed anymore.
We empirically find that a \emph{one-time} ``round-trip'' is sufficient to reach a good TN structure for the next iteration.
Note that the operation of ALE for TN-RS and TN-TS is in the same fashion, except that the graph $G$ will be fixed in TN-RS or enumerated in differently-defined neighborhoods in TN-TS for considering all TN-topologies.
In TnALE, we follow~\citet{li2022permutation} to construct the neighborhood of TN structures. 

\begin{figure}
    \centering
    \includegraphics[width=1\columnwidth]{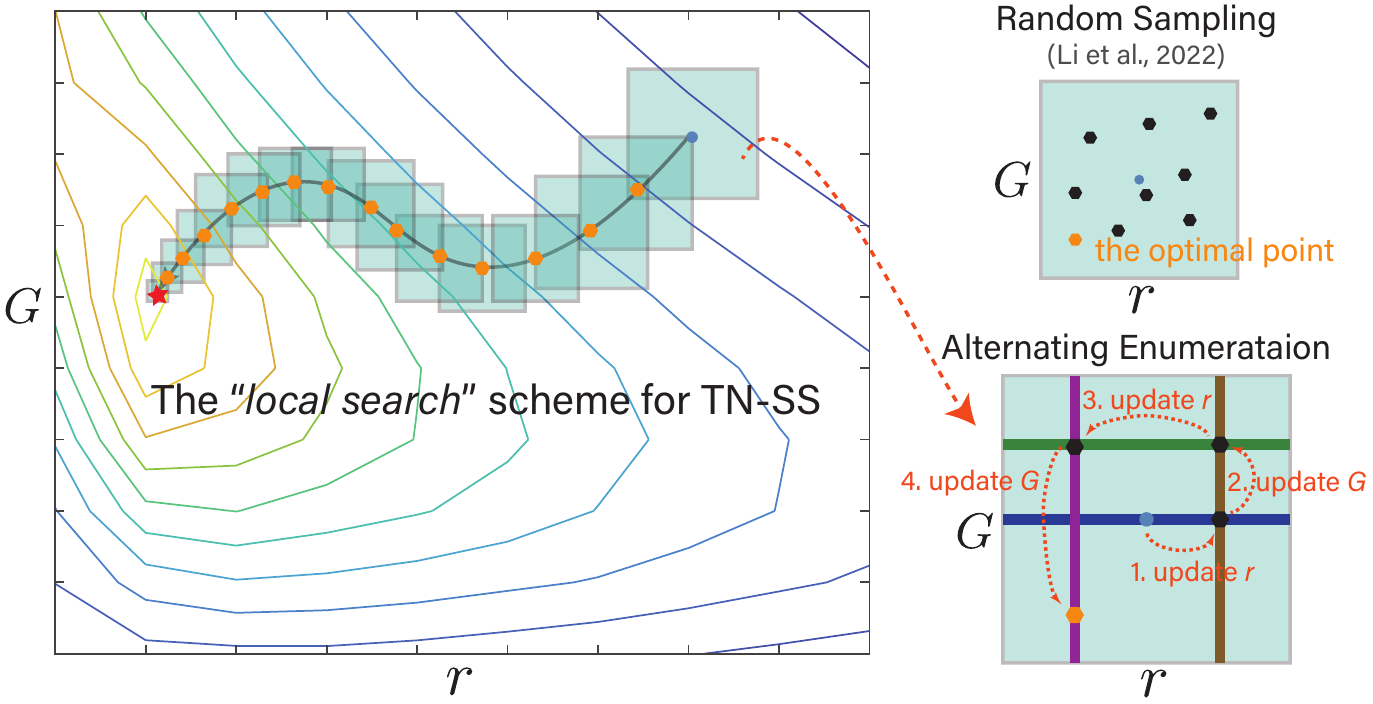}
    \vspace{-0.4cm}
    \caption{
    Schematic demonstration of \emph{TnALE} and its discrepancy from TNLS~\cite{li2022permutation}, where $r,\,G$ denote two structure-related variables for example, and the squares represent the neighborhoods. 
    The alternating (local) enumeration is further illustrated in detail in Figure~\ref{fig:ALS}.
    }
    \label{fig:TnALE}
    \vspace{-0.4cm}
\end{figure}

\begin{remark}[\textbf{tricks: knowledge transfer}]\label{remark:tricks}
    An additional merit of implementing enumeration is the \emph{``knowledge transfer''} capability~\cite{hashemizadeh2020adaptive}.
    We know that increasing the TN-ranks would decrease \emph{monotonically} the value of the objective function in many learning tasks.
    Instead of evaluating each structure explicitly, it thus inspires us to accelerate the structure evaluation by reusing in enumeration the knowledge of the well-optimized core tensors and their corresponding objective.
    In particular, the acceleration by the knowledge transfer trick is two-fold: one is to use the well-optimized core tensors associated with the lower-rank structures to be the initialization for the ones with higher-rank structures, as in~\citet{hashemizadeh2020adaptive}; the other is to employ the \emph{objective estimation}, in which we apply linear interpolation to predicting the objective in the evaluation phase in place of the explicit calculation
    (see Appendix~\ref{apd:sec:lossEstimation}).
    Although the objective estimation would be of no precision, it helps in the first $1\sim{}2$ iterations of TnALE (with a large radius) as initialization for quickly finding a reasonable structure candidate.
\end{remark}

\begin{remark}[\textbf{computational complexity}]
    TnALE is a \emph{meta}-algorithm for TN-SS, in which the inner minimization of~\eqref{eq:basicModel} can be solved by any practitioner-appointed algorithms.
    Therefore, the computational complexity of TnALE is mainly affected by the number of evaluations.
    Suppose the searching problem of order-$N$ with the TN-ranks of dimension $K$.
    Furthermore, suppose each entry $r_i,\,i\in[K]$ of $\mathbf{r}$ is enumerated in $I$ values in the neighborhood, \textit{e.g.}, the interval $[r_i-I/2,r_i+I/2]$, and $D$ times of the ``round-trip'' updates.
    In this setting, for the TN-RS problem, TnALE requires $O(DKI)$ evaluations in one neighborhood; 
    for TN-PS, it requires $O(DKI+DN^2/2)$ evaluations since the neighborhood of $G$ in TN-PS contains $(N-1)N/2$ elements~\cite{li2022permutation} in general;
    last for TN-TS, $O(DN^2I)$ evaluations are required.
    In practice, the value of $I$ is typically set to $3$ or $5$ and $D=1$.
\end{remark}

In summary, the evaluation complexity of TnALE grows polynomially with the TN-order and the dimension of the TN-ranks.
In the next section, we prove that such the number of evaluations is theoretically sufficient in TnALE for achieving quick convergence for TN-SS.

\begin{figure}
    \centering
    \includegraphics[width=1\columnwidth]{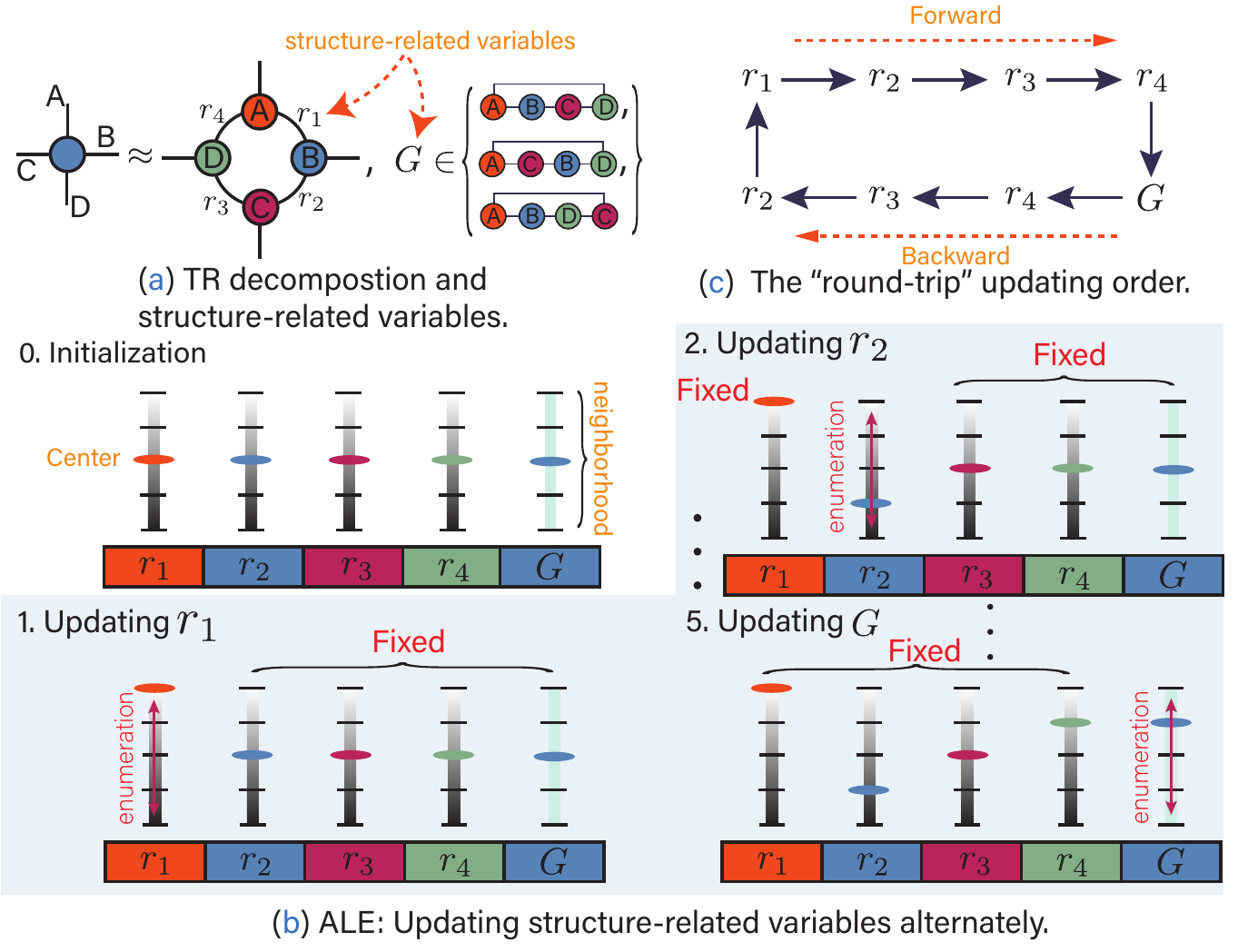}
    \vspace{-0.4cm}
    \caption{Illustration of \emph{alternating local enumeration (ALE)} for TN-PS of TR decomposition.
    The search for TN-RS and TN-TS is similar, except that the ranges of the TN-ranks $r_{\{1,2,3,4\}}$ and the graph $G$ need to be adjusted.
    }
    \label{fig:ALS}
    \vspace{-0.4cm}
\end{figure}

\section{Theoretical Results}
\label{sec:analysis}
In this section, we first analyze the descent steps for both TNLS and TnALE, proving that using the ``local search''~\cite{li2022permutation} scheme the algorithms would achieve a linear convergence rate up to a constant if the objective is sufficiently ``convex'' in the \emph{discrete} domain.
Following this, we analyze the evaluation efficiency for TNLS and TnALE, showing that the required number of evaluations in TNLS would grow exponentially with the dimension of the search space, while it can be ideally reduced to be a linear growth in TnALE if the neighborhood is \emph{low-rank}.
The proofs in this section are given in Appendix~\ref{apd:sec:proofs}.

\subsection{Analysis of Descent Steps}\label{sec:descent}
We start the analysis by rewriting~\eqref{eq:basicModel} into a more general form
\begin{equation}
    \min_{\mathbf{x}\in\mathbb{Z}_+^K,p\in\mathbb{P}}f_p(\mathbf{x}):=f\circ{}p(\mathbf{x}),
    \label{eq:generalModel}
\end{equation}
where $f:\mathbb{Z}_{\geq{}0}^L\rightarrow\mathbb{R}_+$ is a general form of the objective function of~\eqref{eq:basicModel}, $\mathbf{x}\in\mathbb{Z}_+^{K}$ corresponds to the TN-ranks $\mathbf{r}$ of~\eqref{eq:basicModel}, and the operator $p:\mathbb{Z}_+^K\rightarrow\mathbb{Z}_{\geq{}0}^L$ and its feasible set $\mathbb{P}$ correspond to the graph variable $G$ and its feasible set $\mathbb{G}$ of~\eqref{eq:basicModel}, respectively.
The specific relationship of $p$ and $G$ is discussed in Appendix~\ref{apd:sec:analysis}.


The framework used in the proof follows from~\citet{golovin2019gradientless} for the zeroth-order convex optimization.
However, due to the discrete essence of~\eqref{eq:generalModel}, we re-establish discrete analogues of the fundamental concepts such as the gradient, strong convexity and smoothness for the analysis, and all the proofs are re-derived non-trivially in the discrete domain.




In doing so, we begin with the definition of the finite gradient~\cite{olver2014introduction}, as the alternative to the classic gradient in the continuous domain.

\begin{definition}[\textbf{finite gradient}]\label{def:finiteGradient}
        For any function $f:\mathbb{Z}_{\geq{}0}^L\rightarrow{}\mathbb{R}$, its finite gradient $\Delta{}f:\mathbb{Z}_{\geq{}0}^L\rightarrow{}\mathbb{R}^{L}$ with respect to $\mathbf{x}\in\mathbb{Z}_{\geq{}0}^L$ is defined as follows:
    \begin{equation}
    \begin{split}
        &\Delta{}f(\mathbf{x})=\\
        &\left[f(\mathbf{x}+\mathbf{e}_1)-f(\mathbf{x}),\ldots,f(\mathbf{x}+\mathbf{e}_L)-f(\mathbf{x})\right]^\top,\label{eq:finiteGradient}
    \end{split}
    \end{equation}
    where $\mathbf{e}_i,\,1\leq{}i\leq{}L$ denote the unit vectors with the $i$-th entry being one and the others being zeros.
\end{definition}

Next, we redefine the convexity and smoothness of the objective with finite gradients.


\begin{definition}[\textbf{$\alpha$-strong convexity with finite gradient}]\label{def:strongConvexity}
        We say $f$ is $\alpha$-strongly convex for $\alpha\geq{}0$ if $f(\mathbf{y})\geq{}f(\mathbf{x})+\left<\Delta{}f(\mathbf{x})-\frac{\alpha}{2}\mathbf{1},\mathbf{y-x}\right>+\frac{\alpha}{2}\Vert{}\mathbf{y-x}\Vert^2$ for all $\mathbf{x,y}\in\mathbb{Z}_{\geq{}0}^L$, where $\mathbf{1}\in\mathbb{R}^L$ denotes the vector with all entries being one.
    We simply say $f$ is convex if it is $\alpha$-strongly convex and $\alpha=0$.
\end{definition}
\begin{definition}[\textbf{$(\beta_1,\beta_2)$-smoothness with finite gradient}]\label{def:smoothness}
    We say $f$ is $(\beta_1,\beta_2)$-smooth for $\beta_1,\beta_2>0$ if
    \begin{enumerate}
        \item $\vert{}f(\mathbf{x})-f(\mathbf{y})\vert\leq{}\beta_1\Vert\mathbf{x-y}\Vert$ for all $\mathbf{x,y}\in\mathbb{Z}_{\geq{}0}^L$;
        \item The function $l(\mathbf{x}):=\frac{\beta_2}{2}\Vert{}\mathbf{x}\Vert^2-f(\mathbf{x})$ is convex.  
    \end{enumerate}
\end{definition}
We remark that Definition~\ref{def:strongConvexity} and~\ref{def:smoothness} are partially different from the ones used in~\citet{golovin2019gradientless} or other literature for convex analysis.
Particularly in Definition~\ref{def:smoothness}, the smoothness is defined by additionally taking the Lipschitz continuity (corresponding to \emph{Item 1}) into account, which controls the changing rate of $f$, while \textit{Item 2} in Definition~\ref{def:smoothness} controls the changing rate of the finite \emph{gradient} of $f$.
See Lemma~\ref{apd:thm:smooth2Item} in Appendix for the discussion.
With the new definitions, we next give the main assumptions used in the results.
\begin{assumption}\label{def:assumption}
Assume that  $f:\mathbb{Z}_{\geq{}0}^L\rightarrow{}\mathbb{R}_+$ of~\eqref{eq:generalModel} is $\alpha$-strongly convex, $(\beta_1,\beta_2)$-smooth, and its minimum, denoted $(p^*,\mathbf{x}^*)=\arg\min_{p,\mathbf{x}}f\circ{}p(\mathbf{x})$, satisfies $\Vert\Delta{}f_{p^*}(\mathbf{x}^*)-\frac{\beta_2}{2}\mathbf{1}\Vert\leq{}\gamma$ where $0\leq\gamma<\alpha\leq{}\beta_1\leq\beta_2\leq{}1$.
\end{assumption}

In Assumption~\ref{def:assumption}, the inequality $\Vert\Delta{}f_{p^*}(\mathbf{x}^*)-\frac{\beta_2}{2}\mathbf{1}\Vert\leq{}\gamma$ implies that, up to a (small) bias $\frac{\beta_2}{2}\mathbf{1}$, the finite gradient at $(p^*,\mathbf{x}^*)$ should be sufficiently small, which can be analogous to the ``gradient-equal-zero''~\cite{boyd2004convex} property of the stationary points for convex functions in the continuous domain.
The upper bound ``$1$'' is arbitrarily chosen just for simplifying the calculation.

Next, we reveal that the local-search scheme, used in both TNLS and TnALE, can achieve the linear convergence rate up to a constant.
We first focus on TN-RS and TN-TS to simplify the problem, where $p^*$ is assumed to be ~\emph{known} beforehand.
After that, we discuss TN-PS, showing that the searchable $p$ would make the convergence more difficult.


\begin{theorem}[\textbf{convergence rate when $p^*$ is known}]\label{thm:rateFixedP}
    Suppose Assumption~\ref{def:assumption} is satisfied, the operator $p$ of~\eqref{eq:generalModel} is fixed to be $p^*$, and $0\leq{}\theta\leq{}1$.
    Then, for any $\mathbf{x}$ with $\Vert{}\mathbf{x-x}^*\Vert_\infty\leq{}c$, we can find a neighborhood $B_\infty(\mathbf{x},r_\mathbf{x})$ where $r_\mathbf{x}\geq\theta{}c+\frac{1}{2}$, such that there exists an element $\mathbf{y}\in{}B_\infty(\mathbf{x},r_\mathbf{x})$ satisfying
    \begin{equation}
        \begin{split}
            f_{p^*}(\mathbf{y})-f_{p^*}(\mathbf{x}^*)&\leq{}(1-\theta)(f_{p^*}(\mathbf{x})-f_{p^*}(\mathbf{x}^*))+\frac{7}{8}K.\label{eq:rateFixedP}
        \end{split}
    \end{equation}
\end{theorem}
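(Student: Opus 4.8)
The plan is to reduce the statement to a \emph{discrete Jensen} inequality obtained by moving from $\mathbf{x}$ toward $\mathbf{x}^*$ along the segment and then rounding to the integer lattice, so that the contraction factor $(1-\theta)$ comes purely from convexity while the additive $\frac{7}{8}K$ absorbs only the rounding error. Since $p$ is fixed to $p^*$, I write $g:=f_{p^*}$ and first transfer Assumption~\ref{def:assumption} to $g$ on $\mathbb{Z}_+^K$ (the composition with the fixed $p^*$, discussed in Appendix~\ref{apd:sec:analysis}); in particular $g$ is $\alpha$-strongly convex and $(\beta_1,\beta_2)$-smooth with $\|\Delta g(\mathbf{x}^*)-\frac{\beta_2}{2}\mathbf{1}\|\le\gamma$. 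A preliminary observation I would record is that Definition~\ref{def:strongConvexity} is \emph{equivalent} to the plain finite-gradient convexity of the shifted function $h:=g-\frac{\alpha}{2}\|\cdot\|^2$, i.e.\ $h(\mathbf{b})\ge h(\mathbf{a})+\langle\Delta h(\mathbf{a}),\mathbf{b}-\mathbf{a}\rangle$ for all lattice points $\mathbf{a},\mathbf{b}$; this is verified by expanding $\Delta(\frac{\alpha}{2}\|\cdot\|^2)(\mathbf{a})=\alpha\mathbf{a}+\frac{\alpha}{2}\mathbf{1}$ and checking that the ``excess'' of $h$ coincides term-by-term with that of Definition~\ref{def:strongConvexity}. (The symmetric computation on $l(\mathbf{x})=\frac{\beta_2}{2}\|\mathbf{x}\|^2-g(\mathbf{x})$ yields the smoothness upper bound of Lemma~\ref{apd:thm:smooth2Item}, which I keep in reserve.)

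Next I construct the candidate. Let $\mathbf{y}^*:=(1-\theta)\mathbf{x}+\theta\mathbf{x}^*$ and let $\mathbf{y}$ be the nearest lattice point, so that $\boldsymbol{\eta}:=\mathbf{y}-\mathbf{y}^*$ satisfies $\|\boldsymbol{\eta}\|_\infty\le\frac12$. Because $\mathbf{y}^*$ is a convex combination of positive integer vectors, $\mathbf{y}\in\mathbb{Z}_+^K$; and since $\|\mathbf{y}^*-\mathbf{x}\|_\infty=\theta\|\mathbf{x}^*-\mathbf{x}\|_\infty\le\theta c$, the triangle inequality gives $\|\mathbf{y}-\mathbf{x}\|_\infty\le\theta c+\frac12\le r_\mathbf{x}$, so $\mathbf{y}\in B_\infty(\mathbf{x},r_\mathbf{x})$ as required. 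The heart of the argument is to apply the finite-gradient convexity of $h$ \emph{at the single point} $\mathbf{y}$ to the two test points $\mathbf{x}$ and $\mathbf{x}^*$ and take the $(1-\theta),\theta$ convex combination of the resulting inequalities. The linear terms collapse to $\langle\Delta h(\mathbf{y}),\,(1-\theta)\mathbf{x}+\theta\mathbf{x}^*-\mathbf{y}\rangle=-\langle\Delta h(\mathbf{y}),\boldsymbol{\eta}\rangle$, yielding the clean bound $h(\mathbf{y})\le(1-\theta)h(\mathbf{x})+\theta h(\mathbf{x}^*)+\langle\Delta h(\mathbf{y}),\boldsymbol{\eta}\rangle$. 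This elegantly sidesteps any multidimensional ``below-the-chord'' subtlety, since it only invokes the tangent inequality at lattice points.

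It then remains to translate this from $h$ back to $g$ and to control the error. Writing $h=g-\frac{\alpha}{2}\|\cdot\|^2$ everywhere and using the variance identity $(1-\theta)\|\mathbf{x}\|^2+\theta\|\mathbf{x}^*\|^2=\|\mathbf{y}^*\|^2+\theta(1-\theta)\|\mathbf{x}-\mathbf{x}^*\|^2$ together with $\|\mathbf{y}\|^2-\|\mathbf{y}^*\|^2=2\langle\mathbf{y}^*,\boldsymbol{\eta}\rangle+\|\boldsymbol{\eta}\|^2$ and $\Delta h(\mathbf{y})=\Delta g(\mathbf{y})-\alpha\mathbf{y}-\frac{\alpha}{2}\mathbf{1}$, I expect the potentially large terms $\langle\mathbf{y},\boldsymbol{\eta}\rangle$ (which scale with $\|\mathbf{y}\|$) to cancel exactly, leaving $g(\mathbf{y})-g(\mathbf{x}^*)\le(1-\theta)(g(\mathbf{x})-g(\mathbf{x}^*))-\frac{\alpha}{2}\theta(1-\theta)\|\mathbf{x}-\mathbf{x}^*\|^2-\frac{\alpha}{2}\|\boldsymbol{\eta}\|^2+\langle\Delta g(\mathbf{y})-\frac{\alpha}{2}\mathbf{1},\boldsymbol{\eta}\rangle$. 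Dropping the two non-positive quadratic terms and bounding the residual by H\"older with the Lipschitz estimate $\|\Delta g(\mathbf{y})\|_\infty\le\beta_1$ (Item~1 of Definition~\ref{def:smoothness}), together with $\frac{\alpha}{2}\le\frac12$ and $\|\boldsymbol{\eta}\|_1\le\frac{K}{2}$, gives $\langle\Delta g(\mathbf{y})-\frac{\alpha}{2}\mathbf{1},\boldsymbol{\eta}\rangle\le(\beta_1+\frac{\alpha}{2})\frac{K}{2}\le\frac{3}{4}K\le\frac{7}{8}K$, which is exactly~\eqref{eq:rateFixedP}. The main obstacle I anticipate is precisely this bookkeeping: I must ensure the rounding error is an absolute multiple of $K$ independent of the (unbounded) magnitudes $\|\mathbf{x}\|,\|\mathbf{x}^*\|$, and the exact cancellation of the $\langle\mathbf{y},\boldsymbol{\eta}\rangle$ terms in the $h\!\to\!g$ conversion is what makes this possible; the transfer of Assumption~\ref{def:assumption} through the fixed $p^*$ is the other point needing care.
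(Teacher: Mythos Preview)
Your proposal is correct and follows essentially the same route as the paper. Both arguments take the real convex combination $\mathbf{y}^*=(1-\theta)\mathbf{x}+\theta\mathbf{x}^*$, round it to a lattice point, apply the finite-gradient strong-convexity inequality at that rounded point to the two endpoints, and then control the residual inner product via $\|\Delta g\|_\infty\le\beta_1$ and $\|\boldsymbol{\eta}\|_\infty\le\tfrac12$. Your derivation via the shifted function $h=g-\tfrac{\alpha}{2}\|\cdot\|^2$ is exactly the content of the paper's Lemma~\ref{thm:convexComb} (and its preparatory Lemma~\ref{apd:thm:convexity}). Two minor differences are worth noting: (i) you take $\mathbf{y}$ directly as the rounded point, whereas the paper additionally invokes the sub-level cube of Lemma~\ref{apd:thm:subLevelCube} to locate $\mathbf{y}$ in $\mathbb{L}^\downarrow_{\hat{\mathbf q}}$---a step that is not actually needed since $\hat{\mathbf q}$ itself already lies in $B_\infty(\mathbf{x},r_\mathbf{x})$; and (ii) by keeping (and then discarding) the two nonpositive quadratic terms you obtain the sharper intermediate constant $\tfrac{3}{4}K$ before relaxing to $\tfrac{7}{8}K$, while the paper bounds $-\tfrac{\alpha}{2}\|\mathbf{\Lambda}\|^2$ from above by $+\tfrac{\alpha}{2}\|\mathbf{\Lambda}\|^2$, yielding $\tfrac{3\alpha+4\beta_1}{8}K\le\tfrac{7}{8}K$ directly.
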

Proving Theorem~\ref{thm:rateFixedP} requires the following lemma, which can be understood as the \emph{discrete} version of the convex-combination inequality of convex functions.

\begin{lemma}[\textbf{convex combination in the discrete domain}]\label{thm:convexComb}
    Suppose $\mathbf{q}=\theta{}\mathbf{x}+(1-\theta)\mathbf{y},\,\forall{}\mathbf{x,y}\in\mathbb{Z}_{\geq{}0}^L,\,\theta\in{}[0,1]$, and there is $\hat{\mathbf{q}}\in{}\mathbb{Z}_{\geq{}0}^L$ following $\mathbf{\Lambda}=\mathbf{q}-\hat{\mathbf{q}}$.
    If $f$ is $\alpha$-strongly convex, then
    \begin{equation}
        \theta{}f(\mathbf{x})+(1-\theta)f(\mathbf{y})\geq{}f(\hat{\mathbf{q}})+\left<\Delta{}f(\hat{\mathbf{q}})-\frac{\alpha}{2}\mathbf{1},\mathbf{\Lambda}\right>+\frac{\alpha}{2}\Vert\mathbf{\Lambda}\Vert^2.\label{eq:convexComb}
    \end{equation}
\end{lemma}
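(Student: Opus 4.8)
The plan is to prove the discrete convex-combination inequality~\eqref{eq:convexComb} by invoking the $\alpha$-strong convexity definition twice and then assembling the pieces around the reference point $\hat{\mathbf{q}}$, so that the quadratic remainder terms combine cleanly. First I would apply Definition~\ref{def:strongConvexity} with base point $\hat{\mathbf{q}}$ and test points $\mathbf{x}$ and $\mathbf{y}$ separately, obtaining
\begin{align}
    f(\mathbf{x})&\geq{}f(\hat{\mathbf{q}})+\left<\Delta{}f(\hat{\mathbf{q}})-\tfrac{\alpha}{2}\mathbf{1},\mathbf{x}-\hat{\mathbf{q}}\right>+\tfrac{\alpha}{2}\Vert\mathbf{x}-\hat{\mathbf{q}}\Vert^2,\nonumber\\
    f(\mathbf{y})&\geq{}f(\hat{\mathbf{q}})+\left<\Delta{}f(\hat{\mathbf{q}})-\tfrac{\alpha}{2}\mathbf{1},\mathbf{y}-\hat{\mathbf{q}}\right>+\tfrac{\alpha}{2}\Vert\mathbf{y}-\hat{\mathbf{q}}\Vert^2.\nonumber
\end{align}
Then I would take the convex combination $\theta\cdot(\text{first})+(1-\theta)\cdot(\text{second})$. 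The constant and linear-functional terms are additive, so the right-hand side collapses to $f(\hat{\mathbf{q}})+\left<\Delta{}f(\hat{\mathbf{q}})-\frac{\alpha}{2}\mathbf{1},\,\theta(\mathbf{x}-\hat{\mathbf{q}})+(1-\theta)(\mathbf{y}-\hat{\mathbf{q}})\right>$ plus the weighted quadratic terms.

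The key algebraic identity I would use is that $\theta(\mathbf{x}-\hat{\mathbf{q}})+(1-\theta)(\mathbf{y}-\hat{\mathbf{q}})=\theta\mathbf{x}+(1-\theta)\mathbf{y}-\hat{\mathbf{q}}=\mathbf{q}-\hat{\mathbf{q}}=\mathbf{\Lambda}$, using the hypotheses $\mathbf{q}=\theta\mathbf{x}+(1-\theta)\mathbf{y}$ and $\mathbf{\Lambda}=\mathbf{q}-\hat{\mathbf{q}}$. This immediately converts the linear term into $\left<\Delta{}f(\hat{\mathbf{q}})-\frac{\alpha}{2}\mathbf{1},\mathbf{\Lambda}\right>$, matching the target. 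For the quadratic part I would invoke the standard convexity of $\Vert\cdot\Vert^2$ in the form $\theta\Vert\mathbf{x}-\hat{\mathbf{q}}\Vert^2+(1-\theta)\Vert\mathbf{y}-\hat{\mathbf{q}}\Vert^2\geq{}\Vert\theta(\mathbf{x}-\hat{\mathbf{q}})+(1-\theta)(\mathbf{y}-\hat{\mathbf{q}})\Vert^2=\Vert\mathbf{\Lambda}\Vert^2$, which holds because the squared Euclidean norm is a (continuous) convex function and this inequality is purely real-valued, independent of any integrality constraints. Multiplying by $\frac{\alpha}{2}\geq{}0$ and chaining the bounds yields exactly~\eqref{eq:convexComb}.

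The step I expect to require the most care is the quadratic lower bound: the subtlety is that $\hat{\mathbf{q}}$ is an \emph{integer} point approximating the generally non-integer midpoint $\mathbf{q}$, so I must make sure the Jensen-type inequality for $\Vert\cdot\Vert^2$ is applied to the real vectors $\mathbf{x}-\hat{\mathbf{q}}$ and $\mathbf{y}-\hat{\mathbf{q}}$ (which is legitimate since norms and their convexity live in $\mathbb{R}^L$) rather than attempting any discrete analogue. The rounding $\mathbf{\Lambda}=\mathbf{q}-\hat{\mathbf{q}}$ is precisely the device that absorbs the discretization gap, and I would emphasize that the inequality~\eqref{eq:convexComb} is stated in terms of $\hat{\mathbf{q}}$ and $\mathbf{\Lambda}$ exactly so that the finite-gradient $\Delta{}f(\hat{\mathbf{q}})$ is evaluated at an admissible lattice point. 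No further integrality argument is needed beyond ensuring $\hat{\mathbf{q}}\in\mathbb{Z}_{\geq{}0}^L$ so that Definition~\ref{def:strongConvexity} applies with $\hat{\mathbf{q}}$ as base point; this is guaranteed by hypothesis.
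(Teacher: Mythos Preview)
Your proposal is correct and follows essentially the same route as the paper: apply Definition~\ref{def:strongConvexity} at base point $\hat{\mathbf{q}}$ for both $\mathbf{x}$ and $\mathbf{y}$, take the $\theta$-weighted combination so the linear term becomes $\langle\Delta f(\hat{\mathbf{q}})-\tfrac{\alpha}{2}\mathbf{1},\mathbf{\Lambda}\rangle$, and then bound the quadratic remainder via convexity of $\Vert\cdot\Vert^2$. The only cosmetic difference is that the paper expands $\Vert\mathbf{x}-\hat{\mathbf{q}}\Vert^2$ and $\Vert\mathbf{y}-\hat{\mathbf{q}}\Vert^2$ before applying Jensen to $\theta\Vert\mathbf{x}\Vert^2+(1-\theta)\Vert\mathbf{y}\Vert^2\geq\Vert\mathbf{q}\Vert^2$, whereas you apply Jensen directly to the shifted vectors; the two are equivalent since translation preserves convexity of the squared norm.
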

Note that the inequality~\eqref{eq:convexComb} would be the same as the crucial inequality $\theta{}f(\mathbf{x})+(1-\theta)f(\mathbf{y})\geq{}f(\mathbf{q})$ in convex analysis if $\mathbf{\Lambda}=0$.
However, due to $\mathbf{q}\notin\mathbb{Z}_{\geq{}0}^L$ in general, the non-zero $\mathbf{\Lambda}$ is inevitable in the proof, yielding the essential difference from the convex analysis in the continuous domain.
As a consequence, the inequality~\eqref{eq:rateFixedP} shows that the convergence rate is formally close to being linear, but the constant $(7/8)K$ appears on the right-hand side \emph{dampening} the search efficiency.

Suppose the search trajectory $\{f_{p^*}(\mathbf{x}_n)\}_{n=0}^\infty$, of which the starting point $\mathbf{x}_0\in\mathbb{Z}_+^K$ is randomly chosen and $\mathbf{x}_n$ for $n>0$ are determined by the vector $\mathbf{y}$ in Theorem~\ref{thm:rateFixedP}.
As an important corollary, it can be easily proved that $\{f_{p^*}(\mathbf{x}_n)\}_{n=0}^\infty$ converges to $f_{p^*}(\mathbf{x}^*)$ up to a constant if $\Omega(1/K)\leq\theta\leq{}1$.
A rigorous proof for the convergence guarantee can be found in Appendix~\ref{apd:thm:guarantee}. 


\begin{remark}[\textbf{Finding $p^*$ makes the convergence slower}.]
As aforementioned, the non-zero $\Vert{}\mathbf{\Lambda}\Vert_\infty$ decreases the search efficiency due to the additional constant shown in~\eqref{eq:rateFixedP}.
It is known from the proof of Theorem~\ref{thm:rateFixedP} that the constant is derived from the tight bound $\Vert{}\mathbf{\Lambda}\Vert_\infty\leq{}1/2$, following the rounding operation.
However, once the $p$ in~\eqref{eq:generalModel} is searchable as well, $\Vert{}\mathbf{\Lambda}\Vert_\infty$ would turn larger, dampening the convergence more seriously.
To verify this, 
suppose $\mathbf{q}=\theta{}p^*(\mathbf{x^*})+(1-\theta)p_\mathbf{x}(\mathbf{x})$ to be the convex combination between $(p^*,\mathbf{x}^*)$ and any point $(p_\mathbf{x},\mathbf{x})$.
It is known from the proof that, for decreasing the objective, $\hat{\mathbf{q}}$ should satisfy 
$\hat{\mathbf{q}}\in{}B_\infty(\mathbf{q},\Vert\mathbf{\Lambda}\Vert_\infty)\cap{}\mathbb{B}(p_\mathbf{x},\mathbf{x})$, where 
$\mathbb{B}(p_\mathbf{x},\mathbf{x}):=\{\mathbf{z}=\bar{p}(\bar{\mathbf{x}}):\bar{p}\in{}B(p_\mathbf{x}),\bar{\mathbf{x}}\in{}B_\infty(\mathbf{x},r_\mathbf{x})\}$ for some $r_\mathbf{x}>0$ and $B(p_\mathbf{x})$ denotes the neighborhood of $p_\mathbf{x}$ chosen in the algorithm.
We thus see that, for the existence of $\hat{\mathbf{q}}$, the intersection $B_\infty(\mathbf{q},\Vert\mathbf{\Lambda}\Vert_\infty)\cap{}\mathbb{B}(p_\mathbf{x},\mathbf{x})$ should be non-empty.
Following this, it satisfies $\Vert{}\mathbf{\Lambda}\Vert_\infty\geq\min_{\bar{\mathbf{q}}\in{}\mathbb{B}(p_\mathbf{x},\mathbf{x})}\Vert{}\mathbf{q}-\bar{\mathbf{x}}\Vert_\infty\geq{}\min_{\hat{\mathbf{q}}\in\mathbb{Z}_{\geq{}0}^L}\Vert{}\mathbf{q}-\hat{\mathbf{q}}\Vert_\infty=1/2$ in the worst case.
In this case, the larger value of $\Vert{}\mathbf{\Lambda}\Vert_\infty$ means a larger damping term appearing on the right-hand side of~\eqref{eq:rateFixedP}.
\end{remark}

\subsection{Evaluation Efficiency}\label{sec:samplingEfficiency}
Note that a premise for achieving the closely linear convergence rate by TNLS and TnALE is that the expected $\mathbf{y}\in{}B_\infty(\mathbf{x},r_\mathbf{x})$ in Theorem~\ref{thm:rateFixedP} is reachable, meaning that the algorithms \emph{should} find the $\mathbf{y}$ out in each neighborhood.
In the rest of the section, we show that TNLS is required to cost $\Omega(2^K)$ samples in each neighborhood for stably reaching the $\mathbf{y}$, while $O(KIR)$ samples are ideally sufficient for TnALE. Here $K$ denotes the dimension of the search space, $I\in\mathbb{Z}_+$ indicates an integer related to the radius of the neighborhood and $R\in\mathbb{Z}_+$ reflects the \emph{low-rankness of the optimization landscape} in neighborhoods.

We first give the proposition for TNLS as follows.

\begin{proposition}[\textbf{curse of dimensionality for TNLS}]\label{thm:TNLSSammpling}
Let the assumptions in Theorem~\ref{thm:rateFixedP} be satisfied.
Furthermore, assume that $\mathbf{x}^*$ is sufficiently smaller (or larger) than $\mathbf{x}$ entry-wisely, except for a constant number of  entries.
Then the probability of achieving a suitable $\mathbf{y}$ as mentioned in Theorem~\ref{thm:rateFixedP} by uniformly randomly sampling in $B_\infty(\mathbf{x},r_\mathbf{x})$ with $r_\mathbf{x}\geq{}\theta{}c+\frac{1}{2}$ equals $O(2^{-K})$.
\end{proposition}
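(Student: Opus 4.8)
The plan is to bound the probability that a \emph{single} uniform sample drawn from the discrete box $B_\infty(\mathbf{x},r_\mathbf{x})\cap\mathbb{Z}_+^K$ is a ``suitable'' $\mathbf{y}$, i.e.\ one achieving the decrease guaranteed in Theorem~\ref{thm:rateFixedP}. The guiding intuition is that such a $\mathbf{y}$ must lie in a very thin ``corner'' of the box --- the region displaced from $\mathbf{x}$ toward $\mathbf{x}^*$ in (almost) every coordinate, consistent with the constructed minimizer $\hat{\mathbf{q}}$ rounded from $\theta\mathbf{x}^*+(1-\theta)\mathbf{x}$ in the proof of Theorem~\ref{thm:rateFixedP} --- and that a uniform sample reaches this corner only with probability exponentially small in $K$.

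First I would translate the suitability condition into a geometric constraint on $\mathbf{y}$. Expanding $\alpha$-strong convexity (Definition~\ref{def:strongConvexity}) at $\mathbf{x}^*$ and invoking Assumption~\ref{def:assumption}, so that $\Delta f_{p^*}(\mathbf{x}^*)$ is, up to the bias $\tfrac{\beta_2}{2}\mathbf{1}$ and slack $\gamma$, a small nonnegative multiple of $\mathbf{1}$, gives a lower bound of the form $f_{p^*}(\mathbf{y})-f_{p^*}(\mathbf{x}^*)\geq \tfrac{\alpha}{2}\Vert\mathbf{y}-\mathbf{x}^*\Vert^2+\langle\mathbf{a},\mathbf{y}-\mathbf{x}^*\rangle$ with $\mathbf{a}$ small and nonnegative. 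Combining this with the upper bound of Theorem~\ref{thm:rateFixedP} confines every suitable $\mathbf{y}$ to a Euclidean ball of the form $\Vert\mathbf{y}-\mathbf{x}^*\Vert^2\leq(1-\theta)\Vert\mathbf{x}-\mathbf{x}^*\Vert^2+O(K)$ centered at $\mathbf{x}^*$. The ``sufficiently smaller/larger'' hypothesis enters exactly here: in the $K-O(1)$ coordinates where the gap $g_i=\vert x_i-x_i^*\vert$ is large relative to $r_\mathbf{x}$, the box center $\mathbf{x}$ sits strictly outside this ball, so the admissible corner is the one displaced toward $\mathbf{x}^*$.

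Next I would convert the ball constraint into a combinatorial one and count. Writing $y_i=x_i+\delta_i$ with integer $\delta_i\in[-r,r]$, $r=\lfloor r_\mathbf{x}\rfloor$, and completing the square per coordinate, the ball constraint forces at least a fraction bounded below by $\tfrac{1}{2-\theta}\geq\tfrac12$ of the large-gap coordinates to satisfy $\operatorname{sign}(\delta_i)=\operatorname{sign}(x_i^*-x_i)$, i.e.\ to move toward $\mathbf{x}^*$; at the extreme $\theta=1$ this demands \emph{all} such coordinates, reducing to a single orthant. Under the uniform product measure on the symmetric box, the events that each $\delta_i$ lies on the side toward $\mathbf{x}^*$ are independent with probability $\tfrac{r}{2r+1}<\tfrac12$ each, while the $O(1)$ exceptional coordinates only contribute a constant factor. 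A Chernoff/product bound on the number of correctly-signed coordinates then yields $\Pr[\mathbf{y}\text{ suitable}]\leq(\text{const})\cdot\prod_i\tfrac{r}{2r+1}=O(2^{-K})$.

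The main obstacle is the necessary-condition step: rigorously turning the scalar inequality of Theorem~\ref{thm:rateFixedP} into the ball containment and then into the claim that a definite fraction of coordinates must be displaced toward $\mathbf{x}^*$. This requires (i) controlling the linear finite-gradient term via Assumption~\ref{def:assumption} so it does not overwhelm the quadratic term; (ii) quantifying ``sufficiently smaller/larger'' precisely --- large enough relative to $r_\mathbf{x}$ and the $\tfrac78K$ slack that the forced fraction stays bounded away from $\tfrac12$ and the Chernoff exponent is at least $\ln 2$, yielding the stated $O(2^{-K})$; and (iii) handling the integer rounding and the radius relation $r_\mathbf{x}\geq\theta c+\tfrac12$ consistently with the construction of $\mathbf{y}$. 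Once the necessary condition is in place, the counting is routine.
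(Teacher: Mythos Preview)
Your approach and the paper's share the same geometric intuition---the admissible $\mathbf{y}$ live in the ``corner'' of the sampling box displaced toward $\mathbf{x}^*$---but the execution is quite different. The paper reads ``suitable $\mathbf{y}$'' not as any point satisfying the inequality of Theorem~\ref{thm:rateFixedP}, but as the specific element constructed in that proof: a point of $A\cap B$, where $B=B_\infty(\mathbf{x},r_\mathbf{x})$ is the sampling box and $A$ is the sub-level cube tangent at $\hat{\mathbf{q}}$, the rounding of the convex combination of $\mathbf{x}$ and $\mathbf{x}^*$. The hypothesis that $\mathbf{x}^*$ is entrywise smaller than $\mathbf{x}$ except in $C=O(1)$ coordinates forces $\mathbf{x}-\hat{\mathbf{q}}$ to be strictly positive in the remaining $K-C$ coordinates, so along each of those axes $A\cap B$ spans at most half the width of $B$. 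A one-line volume ratio
\[
\frac{|A\cap B|}{|B|}\le\Bigl(\frac{r_\mathbf{x}-\delta_{\min}}{2r_\mathbf{x}}\Bigr)^{K-C}\Bigl(\frac{r_\mathbf{x}+\delta_{\max}}{2r_\mathbf{x}}\Bigr)^{C}=O(2^{-K})
\]
then finishes the argument. No strong-convexity expansion, no Euclidean ball, no Chernoff.

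Your route instead attempts to bound the set of \emph{all} $\mathbf{y}$ achieving the stated decrease---a stronger target---by deriving a ball constraint around $\mathbf{x}^*$ from $\alpha$-strong convexity and then a coordinate-sign condition. That is plausible in spirit, but, as you yourself flag, the key step is not nailed down: the asserted fraction $\ge\tfrac{1}{2-\theta}$ of coordinates that must move toward $\mathbf{x}^*$ has no derivation, and a Chernoff bound over a fraction strictly between $\tfrac12$ and $1$ only yields $c^{-K}$ with $c<2$; recovering the exact base $2$ requires making ``sufficiently smaller'' so strong that essentially all coordinates are forced, at which point the pure product bound you write becomes legitimate. If you accept the paper's narrower reading of ``suitable $\mathbf{y}$'', the direct sub-level-cube volume argument sidesteps all of this.
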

Note that the additional assumption in Prop.~\ref{thm:TNLSSammpling} is commonly satisfied in practice when the searched TN-ranks are initialized uniformly with large (or small) values.
It is also easily known from Prop.~\ref{thm:TNLSSammpling} that $\Omega(2^K/\epsilon)$ samples are required for achieving the probability $Pr\geq{}\epsilon$ for reaching the $\mathbf{y}$ in the neighborhood.


\begin{remark}
The intuition of Prop.~\ref{thm:TNLSSammpling} is as follows.
Suppose $\mathbf{x}^*$ is entry-wisely smaller than $\mathbf{x}$ without loss of generality, then the objective would be decreased only if most of the entries of $\mathbf{x}$ are updated to be smaller values, \textit{i.e.}, getting closer to $\mathbf{x}^*$.
However, by random sampling, the probability of decreasing most entries of $\mathbf{x}$ would turn smaller exponentially with increasing the dimension $K$, yielding the curse of dimensionality for TNLS.
\end{remark}

In contrast to TNLS, TnALE essentially resolves the curse of dimensionality by leveraging the landscape's low-rank structure.
To verify this, given $(p,\mathbf{x})$, we formulate first the neighborhood $B(p)\times{}B_\infty(\mathbf{x},r_\mathbf{x})$ as a $(K+1)$-th order tensor $\mathcal{B}\in\mathbb{R}^{I_1\times{}I_2\times{}\cdots\times{}I_{K+1}}$.
Here $I_k=2\times{}\lceil{}r_\mathbf{x}\rceil+1$ for $1\leq{}k\leq{}K$ and $I_{K+1}=\vert{}B(p)\vert$.
The $(i_1,i_2,\ldots,i_{K+1})$-th entry of $\mathcal{B}$, written $\mathcal{B}(\mathbf{i})$ with $\mathbf{i}= [i_1,i_2,\ldots,i_{K+1}]^\top$, satisfies 
\begin{equation}
    \mathcal{B}(\mathbf{i})=1/f_{p_{i_{K+1}}}(\mathbf{x}+\mathbf{i}(:K)-(\lceil{}r_\mathbf{x}\rceil+1)),\label{eq:B2B}
\end{equation}
where $\mathbf{i}(:K)$ denotes the $K$-dimensional vector consisting of the first $K$ entries of $\mathbf{i}$, and $p_{i_{K+1}}$ denotes the $i_{K+1}$-th element of $B(p)$ in any ordering fashion.
We remark that the inverse $1/f(\mathbf{x})$ is always valid due to the assumption $f_p(\mathbf{x})>0$ in~\eqref{eq:generalModel} for all $\mathbf{x}$ and $p$.
We also remark that the equation~\eqref{eq:B2B} maps uniquely each element of $B(p)\times{}B_\infty(\mathbf{x},r_\mathbf{x})$ onto the entries of $\mathcal{B}$.

\begin{figure}
    \centering
    \includegraphics[width=0.9\columnwidth]{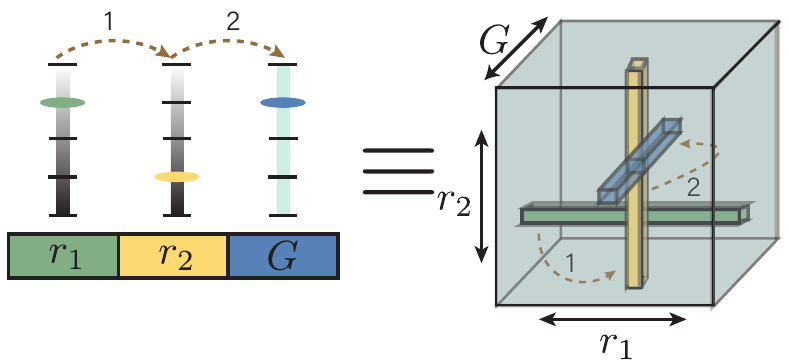}
    \vspace{-0.4cm}
    \caption{The relationship between alternating local enumeration (ALE) and TT-cross approximation~\cite{oseledets2010tt,sozykin2022ttopt}. As shown in the figure, enumerating structure-related variables alternately is equivalent to sampling fibers of a tensor along each mode. The yellow arrows indicate the alternation of variables from $r_1$ to $r_2$ and then to $G$, respectively. }
    \label{fig:B2B}
\end{figure}

By the tensor $\mathcal{B}$, we can find that the proposed \emph{alternating local enumeration (ALE)} is strongly related to TT-cross~\cite{oseledets2010tt} and TTOpt~\cite{sozykin2022ttopt}.
As demonstrated in Figure~\ref{fig:B2B}, the enumeration for each variable is equivalent to drawing a fiber of $\mathcal{B}$ as in TT-cross or TTOpt with the TT-ranks being \emph{ones}.
Such a relationship helps us reveal the evaluation advantage of TnALE.
Specifically, let $\mathbb{B}:=B(p)\times{}B_\infty(\mathbf{x},r_\mathbf{x})$ and $f_\mathbb{B}^*:=\min_{(p_y,\mathbf{y})\in{}\mathbb{B}}f_{p_y}(\mathbf{y})$ for notational simplicity, then we have the following proposition.
\begin{proposition}[\textbf{evaluation efficiency for TnALE}]\label{thm:TnALESampling}
    Let $\mathcal{B}\in\mathbb{R}^{I_1\times{}I_2\times{}\cdots\times{}I_{K+1}}$ be the tensor of order-$(K+1)$ constructed as Eq.~\eqref{eq:B2B} with $I_1=I_2=\cdots=I_{K+1}=I$ for simplicity.
    Then, there exists its TT-cross approximation~\cite{oseledets2010tt} of rank-$R$\footnote{Here all elements of the TT-ranks equal $R$ for simplicity.}, denoted $\hat{\mathcal{B}}$,
    such that  $f_\mathbb{B}^*=f_{p_{j_{K+1}}}\left(\mathbf{x}+\mathbf{j}(:K)-(\lceil{}r_\mathbf{x}\rceil+1)\right)$ with $\mathbf{j}=\arg\max_{\mathbf{i}}\hat{\mathcal{B}}(\mathbf{i})$ holds, provided that
    \begin{equation}
 f_\mathbb{B}^*\leq{}f_{p_z}(\mathbf{z})/\left(1+2\frac{(4R)^{\lceil{}\log_2{}K\rceil}-1}{4R-1}(R+1)^2\xi{}f_{p_z}(\mathbf{z})\right)\label{eq:y*}
    \end{equation}
    for all $(p_z,\mathbf{z})\in{}\mathbb{B}$ and $f_{p_z}(\mathbf{z})\neq{}f_\mathbb{B}^*$.
    Here, $\xi$ denotes the error between $\mathcal{B}$ and its best approximation of TT-ranks $R$ in terms of $\Vert\,\cdot\,\Vert_\infty$.
    Note that the inequality~\eqref{eq:y*} holds trivially if $\mathcal{B}$ is exactly of the TT topology of rank-$R$, and~\citet{oseledets2010tt} shows that the $f_\mathbb{B}^*$ can be recovered from $O(KIR)$ entries from $\mathcal{B}$.
\end{proposition}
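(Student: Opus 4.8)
The plan is to turn the minimization of $f$ over the neighborhood into a \emph{maximization} of the tensor $\mathcal{B}$, and then to argue that replacing $\mathcal{B}$ by a rank-$R$ TT-cross approximation $\hat{\mathcal{B}}$ does not move the location of the maximal entry, provided the top entry is sufficiently separated from the rest. First I would use the reciprocal construction~\eqref{eq:B2B}: because $f_p(\mathbf{x})>0$ for every admissible $(p,\mathbf{x})$, the map $v\mapsto 1/v$ is strictly decreasing on the range of $f$, so $\max_{\mathbf{i}}\mathcal{B}(\mathbf{i})=1/f_\mathbb{B}^*$ and every maximizing index $\mathbf{i}^*$ of $\mathcal{B}$ encodes a minimizer of $f_{p_y}(\mathbf{y})$ over $\mathbb{B}$. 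Consequently, the claim ``$f_\mathbb{B}^*=f_{p_{j_{K+1}}}(\mathbf{x}+\mathbf{j}(:K)-(\lceil r_\mathbf{x}\rceil+1))$ with $\mathbf{j}=\arg\max_\mathbf{i}\hat{\mathcal{B}}(\mathbf{i})$'' is \emph{equivalent} to the statement that the maximizer of the approximant $\hat{\mathcal{B}}$ lands on an entry at which $\mathcal{B}$ attains its maximum $1/f_\mathbb{B}^*$. Thus the proposition reduces to a stability statement: the TT-cross maximizer coincides with the true maximizer.

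Second --- and this is the technical heart --- I would invoke the quasi-optimality of the TT-cross / maxvol construction of~\citet{oseledets2010tt} in the Chebyshev norm. The goal is to exhibit a rank-$R$ TT-cross approximant $\hat{\mathcal{B}}$, computable from only $O(KIR)$ entries of $\mathcal{B}$, satisfying
\[
    \Vert\mathcal{B}-\hat{\mathcal{B}}\Vert_\infty\leq C\,\xi,\qquad C:=\frac{(4R)^{\lceil\log_2 K\rceil}-1}{4R-1}(R+1)^2,
\]
where $\xi$ is the best rank-$R$ TT approximation error of $\mathcal{B}$ in $\Vert\cdot\Vert_\infty$. I would build this bound in two layers: (i) the single maxvol skeleton step, where choosing an $R\times R$ submatrix of (nearly) maximal volume yields an entrywise error controlled by a factor of order $(R+1)^2$ times the residual; and (ii) the recursive assembly of the $K+1$ TT-cores, organized as a balanced binary merging of modes of depth $\lceil\log_2 K\rceil$, in which the per-level error amplification is a factor $4R$. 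Summing the resulting geometric series $\sum_{\ell=0}^{\lceil\log_2 K\rceil-1}(4R)^\ell=\frac{(4R)^{\lceil\log_2 K\rceil}-1}{4R-1}$ then produces the stated constant $C$.

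Third, I would close with a perturbation (gap) argument. Writing $\delta:=\Vert\mathcal{B}-\hat{\mathcal{B}}\Vert_\infty\le C\xi$ and letting $\mathbf{i}^*$ be a maximizer of $\mathcal{B}$, for any competing index with value $1/f_{p_z}(\mathbf{z})$ (equivalently $f_{p_z}(\mathbf{z})\neq f_\mathbb{B}^*$) one has $\hat{\mathcal{B}}(\mathbf{i}^*)-\hat{\mathcal{B}}(\mathbf{i})\geq \mathcal{B}(\mathbf{i}^*)-\mathcal{B}(\mathbf{i})-2\delta$, so it suffices that $1/f_\mathbb{B}^*-1/f_{p_z}(\mathbf{z})>2\delta$. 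Substituting $\delta\leq C\xi$ and clearing denominators (using $f>0$) rearranges this precisely into condition~\eqref{eq:y*}; hence under~\eqref{eq:y*} no competing index can beat $\mathbf{i}^*$ in $\hat{\mathcal{B}}$, so $\mathbf{j}$ attains the true maximum and the claimed identity follows. The degenerate case $\xi=0$ (exact rank-$R$ TT) makes~\eqref{eq:y*} reduce to $f_\mathbb{B}^*\le f_{p_z}(\mathbf{z})$, which holds automatically, matching the remark that the inequality is then trivial.

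I expect the main obstacle to be the second step: deriving the explicit $\Vert\cdot\Vert_\infty$ quasi-optimality constant $C$ for the recursive TT-cross, in particular justifying the $(R+1)^2$ per-skeleton factor via the maxvol principle and tracking how the error propagates through the $\lceil\log_2 K\rceil$ merges to produce the $4R$ geometric growth. Steps one and three are essentially bookkeeping once this bound is in hand.
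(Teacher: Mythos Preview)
Your proposal is correct and follows exactly the paper's strategy: reduce the claim to showing that the maximizer of $\hat{\mathcal{B}}$ coincides with that of $\mathcal{B}$ via the gap inequality $\hat{\mathcal{B}}(\mathbf{i}^*)-\hat{\mathcal{B}}(\mathbf{k})\geq\mathcal{B}(\mathbf{i}^*)-\mathcal{B}(\mathbf{k})-2C\xi$, and then observe that condition~\eqref{eq:y*} is precisely the rearrangement of $1/f_\mathbb{B}^*-1/f_{p_z}(\mathbf{z})\geq 2C\xi$. The only difference is that the paper does not rederive the constant $C=\frac{(4R)^{\lceil\log_2 K\rceil}-1}{4R-1}(R+1)^2$ but invokes it directly as Theorem~2 of Osinsky~(2019), so your second step---the maxvol/recursive-merge derivation---is more work than the paper actually performs.
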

Prop.~\ref{thm:TnALESampling} is a natural corollary of Theorem 2 in~\citet{osinsky2019tensor}.
It implies that the desired $\mathbf{y}$ (corresponding to $f_\mathbb{B}^*$ in Prop.~\ref{thm:TnALESampling}) in Theorem~\ref{thm:rateFixedP} is reachable by only $O(KIR)$ samples once $\mathcal{B}$ is of TT with rank-$R$.
Even though $\mathcal{B}$ is not low-rank, the $\mathbf{y}$ can still be located if the inequality~\eqref{eq:y*} is satisfied.

Prop.~\ref{thm:TnALESampling} shows the $O(KIR)$ evaluation advantage compared with TNLS that requires $\Omega(2^K/\epsilon)$ evaluations in the neighborhoods, but it remains open to prove the low-rankness of the optimization landscape in the TN-SS tasks.
We empirically verify this with five synthetic tensors of order four.
We calculate their complete optimization landscape associated with the $l_2$ loss, observing that the multidimensional landscape is indeed low-rank under all possible unfoldings~(see Figure~\ref{apd:fig:landscape} in Appendix).
We thus conjecture that in practice the low-rank structure of the landscape should be preserved, at least in neighborhoods.
In the next section, the evaluation advantage by TnALE will be empirically verified with both synthetic and real-world data.

\section{Experimental Results}
In this section, we present numerical results to verify the superiority of TnALE in terms of evaluation cost.
Due to the page limit, the experimental settings will be presented at the minimum level of clarity.
Additional details are given in Appendix~\ref{apd:sec:experiment}.

\subsection{Synthetic Data}
First of all, we assess the superiority of TnALE by solving the TN-PS problem, in which both the optimal TN-ranks and permutations of synthetic tensors are searched for the tensor decomposition task.

In the experiment, we \emph{re-use} from~\citet{li2022permutation} the synthetic tensors, which are randomly generated in the topologies including TR (order-$8$), PEPS~(order-6,~\citealp{verstraete2004renormalization}), hierarchical Tucker (HT of order-$6$,~\citealp{hackbusch2009new}), and MERA~(order-8,~\citealp{cincio2008multiscale}).
Additionally, we also consider the \emph{tensor wheel} model~(TW of order-$5$,~\citealp{wutensor}).
Since the mode dimension is typically irrelevant to the difficulty of TN-SS, we set them to equalling $3$ in all tensors for simplicity.
For each topology, four tensors~(\textbf{A, B, C, D}) are generated, where the TN-ranks and permutations are randomly selected and  remained unknown.
The goal of this experiment is to compare different approaches to identifying the TN-ranks and permutations for each tensor, meaning that the conditions RSE$\leq{}10^{-4}$ and the \textit{Eff.}$\geq{}1$ are satisfied\footnote{RSE means the relative squared error, and the \textit{Eff.} index~\cite{li2020evolutionary} denotes the ratio of the parameter number of TNs between the searched structure and the one used in data generation. \textit{Eff.}$\geq{}1$ implies that the algorithm finds a TN structure identical or more compact than the one used in data generation.}.
Otherwise, we say the approach fails in the experiment.

We implement three algorithms, TNGA~\cite{li2020evolutionary}, TNLS and TnALE~(ours).
Note that in TNGA both the TN ranks rank permutations are encoded as chromosomes, as implemented in the work by~\citet{li2022permutation}.
In the subsequent experiments, the  the encoding scheme of TNGA would be also properly adjusted for handling different sub-problems of TN-SS.
For a fair comparison, the three approaches use the same objective and solver for the inner minimization of~\eqref{eq:basicModel}.
Furthermore, TNLS and TnALE are initialized with the same TN structures.
The rest of the experimental settings remain as~\citet{li2022permutation}.

\begin{table}[tb]
	\centering
	\caption{
	Number of evaluations, denoted ``\emph{\#Eva.}'', for the rank and permutation identification, where the symbol ``-'' in the table means the failure of the approach.
	}
 \label{Tab:TNPS}
	\vskip 0.15in
	\begin{threeparttable}\footnotesize
		\setlength{\tabcolsep}{1.5mm}{   	
			\begin{tabular}{llcccc}
				\toprule
	\multirow{2}[0]{*}{\textbf{Topology}}&\multirow{2}[0]{*}{\textbf{Methods}}&\multicolumn{4}{c}{\textbf{Data} -- \emph{\#Eva.~$\downarrow$} }\\
				\cmidrule{3-6}&
				& \textbf{A} & \textbf{B} & \textbf{C}  & \textbf{D} \\
				\midrule
				
				


                \multirow{3}[0]{*}{\textbf{TR}}
				&TNGA&2850&2250&3900&1950
				\\
				&TNLS&1020&960&1320&780
				\\
				&Ours&\textbf{231}&\textbf{308}&\textbf{308}&\textbf{231}
				\\
                \cmidrule{2-6}

                \multirow{3}[0]{*}{\textbf{PEPS}}
				&TNGA&1560&-&840&1080
				\\
				&TNLS&781&781&421&661
				\\
				&Ours&\textbf{407}&\textbf{465}&\textbf{233}&\textbf{175}
				\\
                \cmidrule{2-6}

                \multirow{3}[0]{*}{\textbf{HT}}
				&TNGA&960&1320&840&1080
				\\
				&TNLS&841&841&781&721
				\\
				&Ours&\textbf{211}&\textbf{281}&\textbf{211}&\textbf{211}
				\\
                \cmidrule{2-6}

                \multirow{3}[0]{*}{\textbf{MERA}}
				&TNGA&-&960&2800&3240
				\\
				&TNLS&1561&841&1441&721
				\\
				&Ours&\textbf{1450}&\textbf{484}&\textbf{323}&\textbf{323}
				\\
                \cmidrule{2-6}

                \multirow{3}[0]{*}{\textbf{TW}}
				&TNGA+&1920&1440&600&720
				\\
				&TNLS&661&601&601&481
				\\
				&Ours&\textbf{285}&\textbf{143}&\textbf{285}&\textbf{214}
				\\

				\bottomrule
				
			\end{tabular}
		}
	\end{threeparttable}
	\vskip -0.1in
\end{table}

The experimental results are shown in Table~\ref{Tab:TNPS}.
We see that both TNLS and TnALE (ours) successfully identify the ranks and permutations for all tensors.
Furthermore, TnALE requires \emph{significantly} fewer evaluations than both TNGA and TNLS.
Figure~\ref{fig:TNPS:AverageCurves} further illustrates the change of the objective values (in $\log$, averaged) versus the number of evaluations in TNLS and TnALE.
The result confirms the consistency of the evaluation advantage of TnALE compared with TNLS\footnote{Note that the curves for MERA exhibit the opposite pattern compared to others due to the ``local-convergence'' issue. This phenomenon is further discussed in Appendix~\ref{apd:sec:experiment}.}.

\begin{figure}[tp!]
    \centering
    \includegraphics[width=1\columnwidth]{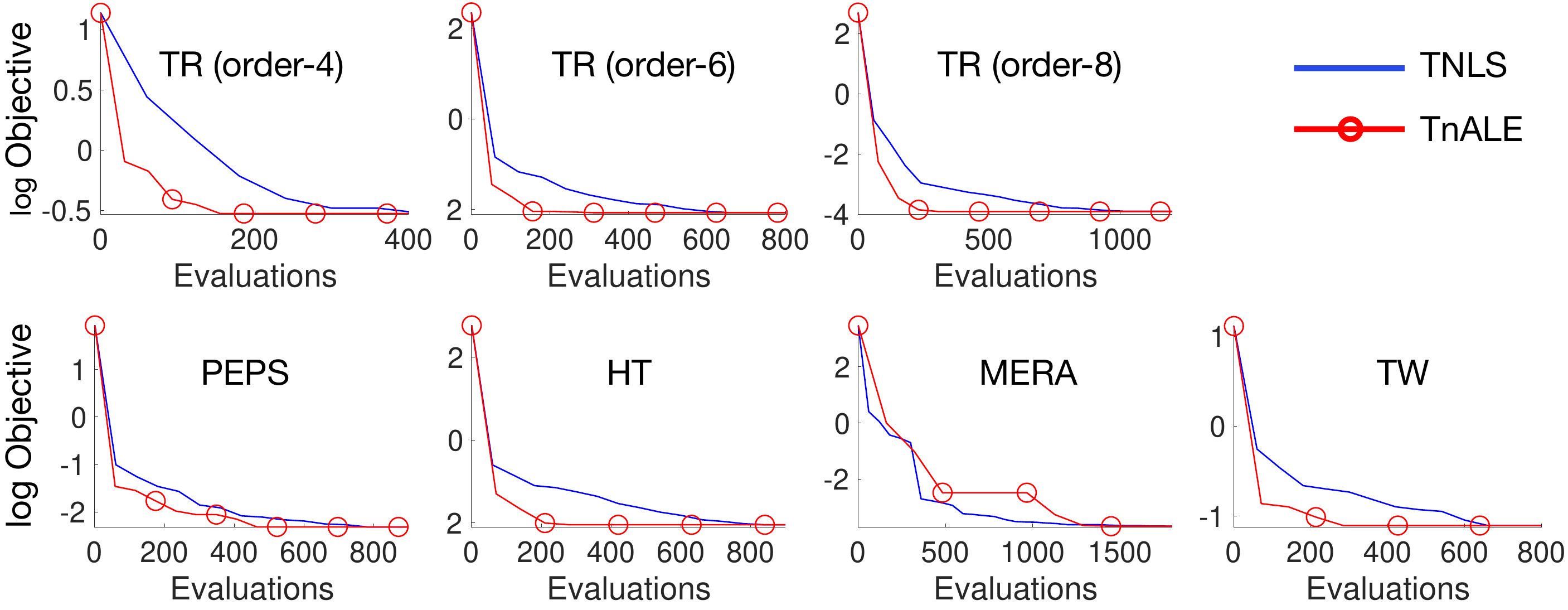}
    \vspace{-0.4cm}
    \caption{Averaged objective (in the $\log$ form) with varying the number of evaluations.}
    \label{fig:TNPS:AverageCurves}
\end{figure}

Next, we evaluate the performance of TnALE for solving the classic rank selection problem, \textit{i.e.}, TN-RS, within the TR decomposition task.
To be specific, we randomly generate synthetic TR-tensors of order $8$, and consider two configurations:
``\emph{lower-ranks}'' and ``\emph{higher-ranks}''.
In the ``lower-ranks'' class, the TN-ranks
are randomly chosen in the interval $[1,4]$, while in the ``higher-ranks'' class the selection interval is lifted to $[5,8]$, so that the ranks would be larger than the tensors' mode dimension (which equals $3$ in this experiment).
This situation commonly happens in practice for high-order TNs.
For comparison, we implement various rank-adaptive TR decomposition methods, including TR-SVD and TR-BALS~\cite{zhao2016tensor}, TR-LM~\cite{mickelin2020algorithms}, and TRAR~\cite{sedighin2021adaptive}.
In addition, the TTOpt algorithm~\cite{sozykin2022ttopt} with ranks~\footnote{Here the ranks are the tuning parameters in the TTOpt algorithm, rather than the targeted TN structure.}, denoted $R$, equaling $\{1,2\}$ is also employed as a baseline.

The experimental results are shown in Table~\ref{Tab:TNRS}.
We see that most of the methods can successfully identify the TN-ranks (implied by \textit{Eff.}$\geq{}1$) in the ``lower-ranks'' class, but in the ``higher-ranks'' class only the methods at the bottom of the table manage to find the correct ranks.
Furthermore, TnALE costs the fewest evaluations on average compared with TNGA, TNLS and TTOpt.

\begin{table}[tb]
\caption{Experimental results of TN-RS (rank-selection) in $8$th-order TR topology. The columns of ``lower-ranks'' and ``higher-ranks'' indicate two experimental settings by which the TN-ranks are randomly selected.
The \emph{Eff.} and [\#\textit{Eva.}] values are averaged in five tensors.
}
\label{Tab:TNRS}
\vskip 0.15in
\begin{center}
\begin{small}
    \begin{tabular}{lcc}
    \toprule
    \textbf{Methods} & \textbf{lower-ranks} & \textbf{higher-ranks} \\
    \midrule
          & \textit{Eff.}$\uparrow$ & \textit{Eff.}$\uparrow$ \\
    \textbf{TR-SVD} & 0.65$\pm$0.46 & 0.13$\pm$0.20 \\
    \textbf{TR-BALS} & 1.15$\pm$0.14 & 0.19$\pm$0.22 \\
    \textbf{TR-LM} & 1.15$\pm$0.14 & 0.15$\pm$0.02 \\
    \textbf{TRAR} & 0.55$\pm$0.10 & 0.63$\pm$0.06 \\
    \midrule
          & \textit{Eff.$\uparrow$~[\#Eva.$\downarrow$]} & \textit{Eff.$\uparrow$~[\#Eva.$\downarrow$]} \\
    \textbf{TNGA} &  1.08$\pm$0.06~[552]   & 1.00$\pm$0.00~[900] \\
    \textbf{TNLS} &   1.08$\pm$0.06~[492]    & 1.00$\pm$0.00~[588] \\
    \textbf{TTOpt}~($R=1$) & 1.08$\pm$0.06~[104] & 1.00$\pm$0.00~[178] \\
    \textbf{TTOpt}~($R=2$) & 1.02$\pm$0.02~[314] & 1.00$\pm$0.00~[752] \\
    \textbf{Ours} & 1.08$\pm$0.06~[\textbf{80}] & 1.00$\pm$0.00~[\textbf{119}]\\    
    \bottomrule
    \end{tabular}%
\end{small}
\end{center}
\vskip -0.1in
\end{table}

\subsection{Real-World Data}
We apply now the proposed method to real-world data to compress the learnable parameters of the tensorial Gaussian process  (TGP,~\citealt{izmailov2018scalable}) and to compress natural images.
In TGP compression,
we consider the regression task by TGPs for three datasets, including CCPP~\cite{tufekci2014prediction}, MG~\cite{flake2002efficient}, and Protein~\cite{Dua2019}, and compress the variational mean of the process with the TT/TR decomposition using the same settings as in~\citet{li2022permutation}.
The goal of the experiment is to search for good TN-ranks and permutations, so that fewer parameters can be used to achieve the same mean square error (MSE) in regression.
The experimental results are shown in Table~\ref{tab:GP}.
We can see that TnALE achieves the same compression ratio as TNGA and TNLS but costs \emph{significantly} fewer evaluations than the counterparts in factor  up to $14~(3901/276)$.

\begin{table}[t]
\caption{Number of parameters ($\times{}1000$, $\downarrow$) and MSE (in round brackets) for TGP model compression, where the values in [square brackets] show the number of evaluations required in each method.}
\label{tab:GP}
\vspace{0.3cm}
\centering\scriptsize
	\begin{threeparttable}\centering
			\begin{tabular}{lccc}
				\toprule

			& \textbf{CCPP} & \textbf{MG} & \textbf{Protein} \\
				\midrule
				
			\textbf{TGP}&2.64~(0.06)~[N/A]&3.36~(0.33)~[N/A]&2.88~(0.74)~[N/A]\\
		    \textbf{TNGA}&\textbf{2.24}~(0.06)~[1500]&\textbf{3.01}~(0.33)~[4900]&2.03~(0.74)~[3900]\\
		    \textbf{TNLS}&\textbf{2.24}~(0.06)~[1051]&\textbf{3.01}~(0.33)~[3901]&\textbf{1.88}~(0.74)~[3601]\\
      \textbf{Ours}&\textbf{2.24}~(0.06)~[\textbf{124}]&\textbf{3.01}~(0.33)~[\textbf{276}]&\textbf{1.88}~(0.74)~[\textbf{1053}]\\
				\bottomrule
						   
			\end{tabular}
	\end{threeparttable}
	\vskip -0.1in
	\end{table}

Last, we consider the TN-PS and TN-TS tasks for compressing natural images.
In TN-TS, we search for good TN-ranks and topologies for image compression.
In the experiment, we randomly select four images (\textbf{A, B, C, D}, see Figure~\ref{fig:livecompression}) from the dataset BSD500~\cite{arbelaez2010contour}.
Each image is resized by $256\times{}256$ and then reshaped into an order-$8$ tensor.
For comparison, we also implement the ``Greedy'' method~\cite{hashemizadeh2020adaptive} in the TN-TS task.
Table~\ref{tab:TNTS} shows the results, including the compression ratio, RSE, and the number of evaluations in both tasks. 
We see that TnALE achieves the closest compression ratio and RSE to TNGA and TNLS, but it requires much fewer evaluations.

\section{Concluding Remarks}
\begin{figure}[t]
    \centering
    \begin{minipage}[t]{0.24\linewidth}
        \includegraphics[width=1\textwidth]{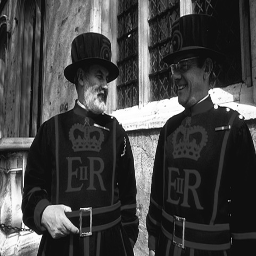}
    \end{minipage}
    \begin{minipage}[t]{0.24\linewidth}
        \includegraphics[width=1\textwidth]{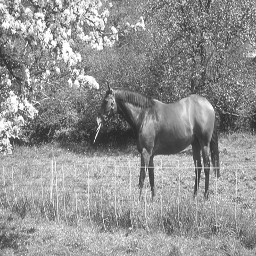}
    \end{minipage}
    \begin{minipage}[t]{0.24\linewidth}
        \includegraphics[width=1\textwidth]{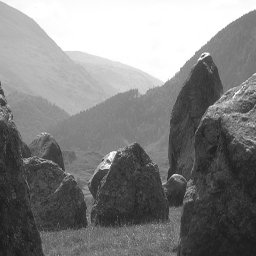}
    \end{minipage}
    \begin{minipage}[t]{0.24\linewidth}
        \includegraphics[width=1\textwidth]{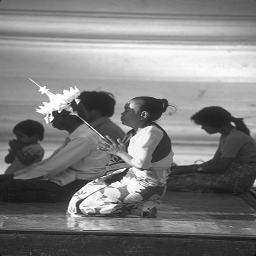}
    \end{minipage}
    \vspace{-0.4cm}
    \caption{Four images in the compression experiment. 
    }
    \label{fig:livecompression}
\end{figure}
\begin{table}[t]
  \centering\scriptsize
  \caption{Results for natural image compression. The underlined values show the best compression ratio achieved in the same RSE.}
  \vspace{0.3cm}
    \begin{tabularx}{\columnwidth}{>{\raggedright\arraybackslash}X >{\raggedright\arraybackslash}X >{\raggedright\arraybackslash}X >{\raggedright\arraybackslash}X >{\raggedright\arraybackslash}X >{\raggedright\arraybackslash}X} 
    \toprule
    \multirow{2}[4]{*}{\textbf{Tasks}} & \multirow{2}[4]{*}{\textbf{Methods}} & \multicolumn{4}{c}{\textbf{Data}~-~compression ratio~($\log,\,\uparrow$) (RSE~$\downarrow$) [\textit{\#Eva.}~$\downarrow$]} \\ 
\cmidrule{3-6}          &       & \multicolumn{1}{c}{\textbf{A}} & \multicolumn{1}{c}{\textbf{B}} & \multicolumn{1}{c}{\textbf{C}} & \multicolumn{1}{c}{\textbf{D}} \\
    \midrule
    \multirow{3}[2]{*}{\textbf{TN-PS}} & \textbf{TNGA} & 1.10~(0.15) [8400] & 1.37~(0.17) [6300] & 1.77~(0.08) [4800] & 1.47~(0.10) [5100] \\[3.5ex]
          & \textbf{TNLS}  & 1.09~(0.16) [1351] & \underline{1.41}~(0.17) [1501] & 1.71~(0.08) [2551] & 1.47~(0.10) [2101] \\[3.5ex]
          & \textbf{Ours} & 1.14~(0.16) [\textbf{647}] & 1.39~(0.17) [\textbf{666}] & \underline{1.80}~(0.08) [\textbf{394}] & \underline{1.49}~(0.10) [\textbf{444}] \\
    \midrule
    \multirow{4}[2]{*}{\textbf{TN-TS}} & \textbf{Greedy} & \multicolumn{1}{c}{0.81~(0.16)} & \multicolumn{1}{c}{0.97~(0.17)} & \multicolumn{1}{c}{1.44~(0.08)} & \multicolumn{1}{c}{0.68~(0.10)} \\[1.5ex]
          & \textbf{TNGA} & 1.16~(0.16) [2100] & \underline{1.48}~(0.17) [1800] & \underline{1.81}~(0.08) [1900] & 1.48~(0.09) [1000] \\[3.5ex]
          & \textbf{TNLS}  & 1.15~(0.16) [1300] & 1.40~(0.17) [1100] & 1.80~(0.08) [1700] & 1.50~(0.10) [1700] \\[3.5ex]
          & \textbf{Ours} & 1.10~(0.15) [\textbf{177}] & 1.46~(0.17) [\textbf{153}] & \underline{1.81}~(0.08) [\textbf{237}] & 1.51~(0.10) [\textbf{246}] \\
    \bottomrule
    \end{tabularx}%
  \label{tab:TNTS}%
\end{table}%

Extensive experimental results demonstrate that the proposed TnALE approach can \emph{greatly} reduce the evaluation cost, up to $10\times$ fewer evaluations, compared with TNLS~\cite{li2022permutation} and other methods for the task of tensor network structure search (TN-SS).
The theoretical results in this paper provide a  rigorous analysis of the convergence rate and the evaluation efficiency for both TNLS and TnALE.

\textbf{Limitation.}
The main limitation of TnALE is the local convergence issue.
In particular, we empirically found in the TN-TS experiment (see Table~\ref{tab:TNTS}) multiple local minima, which are poor in compression ratio, and TnALE can easily drop in them.
Conversely, the methods TNGA~\cite{li2020evolutionary} and TNLS~\cite{li2022permutation} seem to better avoid local minima, owing to their stochastic essence.
Solving this issue will be the direction of our future work.
Furthermore, the identifiability of the proposed method for TN-SS in the presence of \emph{noise} would be also investigated in the future.


\section*{Acknowledgements}
This work was partially supported by JSPS KAKENHI (Grant No. 20H04249, 23H03419).
Chunmei is supported by China Scholarship Council (CSC). 
Part of the computation was carried out at the Riken AIp Deep learning ENvironment (RAIDEN).

\bibliography{references}
\bibliographystyle{icml2023}

\newpage
\appendix
\onecolumn
\section{TnALE: Details for the algorithm}\label{apd:sec:TnALE}
The pseudocode for ALE is given in Alg.~\ref{alg:ALE}.
As discussed in the paper, each structure-related variable is updated alternately by enumeration in the neighborhood.
Based on it, the entire algorithm of TnALE is shown in Alg.~\ref{alg:TnALE}.
Apart from the major iteration (lines 6-8), beforehand, there is an initial phase, where we apply a larger radius $r_1$ and the objective prediction trick to find the candidates in a broader range and a rough fashion.

\subsection{The objective estimation trick.}\label{apd:sec:lossEstimation}
As discussed in Remark~\ref{remark:tricks}, we employ the \emph{objective estimation} by linear interpolation in place of the complete enumeration when updating the TN-ranks.
Particularly in Alg.~\ref{alg:TnALE}, we apply this trick to the initial phase, where we consider a broader range of the neighborhood for roughly finding the structure candidates.

Suppose a rank variable $r$ is required to be enumerated in the range of $r_0-b\leq{}r\leq{}r_0+b$, where $r_0,b\in\mathbb{Z}_+$ present the center and radius of the searched neighborhood, respectively.
The objective estimation trick aims to estimate the minimum of the inner optimization of~\eqref{eq:basicModel} associated with each enumerated TN-structures with limited evaluations.
In the trick, we first evaluate explicitly three TN structures, \textit{i.e.}, $r=\{r_0-b,r_0,r_0+b\}$.
Then, a simple linear regression model is applied to predicting the evaluations of TN-structures in the interval $(r_0,r_0+b)$.
The evaluations for the interval $(r_0-b,r_0)$ will be predicted similarly using the data \textit{w.r.t.} $\{r_0-b,r_0\}$.
We can see from the trick that the inner minimization of~\eqref{eq:basicModel} can be quickly estimated with only three explicit evaluations, no matter how wide the searching range is.
Although the simple linear regression can only give a rough estimation, it is sufficiently helpful for TnALE to find good TN-structures in the initial phase.

\subsection{The neighborhood in the graph space}\label{apd:sec:neighborhood}
In the problem of TN-PS, we follow the idea of~\citet{li2022permutation} to specify the neighborhood for a given graph $G$ in~\eqref{eq:basicModel}.
Similar to Alg.~1 in~\citet{li2022permutation}, we construct the neighborhood by swapping enumerately two vertices of $G$.
Suppose the graph $G$ is of $N$ vertices, we consequently achieve the neighborhood of $G$ of the size $N(N-1)/2$.

\begin{algorithm}[]
  \caption{TnALE: the proposed solver of the optimization~\eqref{eq:basicModel}}
  \label{alg:TnALE}
\begin{algorithmic}[1]
  \STATE {\bfseries Input:}
  A solver for the inner minimization of~\eqref{eq:basicModel}; the rank-related radius: $r_1\geq{}r_2>0$;
  the number of Iterations in the initialization phase: $L_0$;
  the number of Iterations in the searching phase: $L$;
  the number of the round-trips for ALE: $D$;
  \STATE {\bfseries Initialize:}
  Uniformly choose a TN structure $(G,\mathbf{r})$ at random or choose $(G,\mathbf{r})$ with the specified value;
  \FOR{$l=1,\ldots{}L_0$}
    \STATE Update recursively $(G,\mathbf{r})$ using Alg.~\ref{alg:ALE} with the center $(G,\mathbf{r})$, radius $r_1$, $D$ round-trips and the objective estimation trick;
  \ENDFOR
  \FOR{$l=1,\ldots{}L$}
    \STATE Update recursively $(G,\mathbf{r})$ using Alg.~\ref{alg:ALE} with the center $(G,\mathbf{r})$, radius $r_2$ and $D$ round-trips ;
  \ENDFOR
  \STATE {\bfseries Output:} $(G,\mathbf{r})$.
\end{algorithmic}
\end{algorithm}

\begin{algorithm}[]
  \caption{ALE: alternating local enumeration}
  \label{alg:ALE}
\begin{algorithmic}[1]
  \STATE {\bfseries Input:}
  The center of the neighborhood: $(G^{(0)},\mathbf{r}^{(0)})$, where $\mathbf{r}=[r^{(0)}_1,r^{(0)}_2,\ldots,r^{(0)}_K]^\top\in\mathbb{Z}_+^K$;
  the rank-related radius: $r\in\mathbb{Z}_+$; the number of ``round-trips'': $D$;
  \STATE {\bfseries Initialize:} $(G,\mathbf{r})=(G_0,\mathbf{r}_0)$, where $\mathbf{r}=[r_1,r_2,\ldots,r_K]^\top$
  \FOR{$d=1,\ldots,D$} 
      \FOR{$k=1,\ldots,K$}
          \FOR{$i=-r,\ldots,0,\ldots,r$}
            \STATE Copy $(G,\mathbf{r})$ into $(\Bar{G},\bar{\mathbf{r}})$
            \STATE Update $(\Bar{G},\bar{\mathbf{r}})$ by $\bar{r}_k=r_k+i$;
            \STATE Calculate the objective of~\eqref{eq:basicModel} associated to $(\Bar{G},\bar{\mathbf{r}})$;
            \COMMENT{Objective estimation is available.}
            \STATE Store the value of the objective as $h(i)$; 
          \ENDFOR
          \STATE Update $(G,\mathbf{r})$ by $r_k=\arg\min_i{}h(i)$;
      \ENDFOR
      \STATE {Take the neighborhood $B(G)$ according to section~\ref{apd:sec:neighborhood}};
      \FOR{all $G'\in{}B(G)$}
      \STATE Update $(G,\mathbf{r})$ by $G=G'$;
      \STATE Calculate the objective of~\eqref{eq:basicModel} associated to $(G,\mathbf{r})$;
      \STATE Store the value of the objective as $h(G')$;
      \ENDFOR
      \STATE Update $(G,\mathbf{r})$ by $G=\arg\min_{G'}{}h(G')$;
      \FOR{$k=K,K-1,\ldots,2$}
          \FOR{$i=-r,\ldots,0,\ldots,r$}
            \STATE Copy $(G,\mathbf{r})$ into $(\Bar{G},\bar{\mathbf{r}})$
            \STATE Update $(\Bar{G},\bar{\mathbf{r}})$ by $\bar{r}_k=r_k+i$;
            \STATE Calculate the objective of~\eqref{eq:basicModel} associated to $(\Bar{G},\bar{\mathbf{r}})$; \COMMENT{Objective estimation is available.}
            \STATE Store the value of the objective as $h(i)$;
          \ENDFOR
          \STATE Update $(G,\mathbf{r})$ by $r_k=\arg\min_i{}h(i)$;
      \ENDFOR
  \ENDFOR

  \STATE {\bfseries Output:} $(G,\mathbf{r})$.
\end{algorithmic}
\end{algorithm}


\newpage
\section{Theoretical analysis with proofs}\label{apd:sec:proofs}
In this section, we first give a rigorous convergence analysis for the algorithms using the local-sampling scheme.
After that, we compare the evaluation efficiency for TNLS~\cite{li2022permutation} and the new algorithm TnALE.

\subsection{A quick review of tensor network (TN) structure search }
Suppose we have the dataset $D$ and a task-specific loss function $\pi_D:\mathbb{R}^{I_1\times{}I_2\times\cdots\times{}I_N}\rightarrow{}\mathbb{R}_+$ associated to $D$.
The tensor network structure search (TN-SS) problem is to solve the following bi-level optimization problem~\cite{li2022permutation}
\begin{equation}
    \begin{split}
        \min_{(G,\mathbf{r})\in{}\mathbb{G}\times{}\mathbb{F}_G}{}\left(\phi(G,\mathbf{r})+\lambda\cdot\min_{\mathcal{Z}\in{}TNS(G,\mathbf{r})}\pi_D(\mathcal{Z})\right)\label{apd:eq:basicModel}
    \end{split},
\end{equation}
where $G\in\mathbb{G}$ is a graph, which owns $N$ vertices and $K$ edges and corresponds to the TN-topology, $\mathbf{r}\in\mathbb{F}_N\subseteq\mathbb{Z}_+^K$ is a positive integer vector of $K$ dimension corresponding to the TN-ranks, $\phi:\mathbb{G}\times{}\mathbb{Z}_+^K\rightarrow{}\mathbb{R}_+$ represents the function measuring the model complexity of a TN whose structure is modeled by $(G,\mathbf{r})$, and $\lambda>0$ is a tuning parameter.
As expected for TN-SS, solving the problem~\eqref{apd:eq:basicModel} is intuitively to search for a TN structure modeled by $(G,\mathbf{r})$, by which we can give the optimal balance between the complexity and the expressibility of a TN in the task.

We remark that TN-SS can be specified as three sub-problems: \emph{permutation selection (TN-PS,~\citet{li2022permutation}), rank selection (TN-RS) and topology selection (TN-TS,~\citet{li2020evolutionary})}, by specifying the feasible set $\mathbb{G}\times{}\mathbb{F}_G$ of~\eqref{apd:eq:basicModel} into different forms.
Specifically, in TN-PS, we set $\mathbb{F}_G=\mathbb{Z}_+^{K}$ and $\mathbb{G}$ is defined as the isomorphic graphs to a ``template'' $G_0$, so that only the ranks and vertex permutations are searched while the TN-topology is preserved.
In TN-RS, however, we typically restrict the graph $G$ in~\eqref{apd:eq:basicModel}, \textit{i.e.}, $\mathbb{G}=\{G_0\}$ but consider searching for all possible ranks \textit{i.e.}, $\mathbb{F}_G=\mathbb{Z}_+^K$.
In TN-TS, we relax $\mathbb{G}$ to be a set containing all possible simple graphs of order $N$ and the set $\mathbb{F}_N$ can be fixed~\cite{li2020evolutionary} or not~\cite{hashemizadeh2020adaptive}.
It is known from~\cite{Ye2019Tensor,hashemizadeh2020adaptive} that the TN-PS problem with the rank searching can be simplified as a TN-RS problem associated to a ``fully-connected'' TN~\cite{zheng2021fully}.

\subsection{Analysis of descent steps}
\label{apd:sec:analysis}

We start the analysis by rewriting~\eqref{apd:eq:basicModel} into a more general form:
\begin{equation}
    \min_{\mathbf{x}\in\mathbb{Z}_+^K,p\in\mathbb{P}}f_p(\mathbf{x}):=f\circ{}p(\mathbf{x}),
    \label{apd:eq:generalModel}
\end{equation}
where $\circ$ denotes the function composition, $f:\mathbb{Z}_{\geq{}0}^L\rightarrow\mathbb{R}_+$ is a generalization of the objective function of~\eqref{apd:eq:basicModel}, $\mathbf{x}\in\mathbb{Z}_+^{K}$ corresponds to the rank-related variable $\mathbf{r}$ of~\eqref{apd:eq:basicModel}, and the operator $p:\mathbb{Z}_+^K\rightarrow\mathbb{Z}_{\geq{}0}^L$ and its feasible set $\mathbb{P}$ correspond to the topology-related variable $G$ and the set $\mathbb{G}$ of~\eqref{apd:eq:basicModel}, respectively.

The relationship between $p\in\mathbb{P}$ of~\eqref{apd:eq:generalModel} and $G\in\mathbb{G}$ of~\eqref{apd:eq:basicModel} is demonstrated as follows.
Since in~\eqref{apd:eq:basicModel} the entries of $\mathbf{r}$ can be regarded as labels on the edges of $G$, the pair $(G,\mathbf{r})$ can be described as a weighted adjacency matrix of $N\times{}N$.
For example, a $4$-th order tensor ring~(TR,~\citealp{zhao2016tensor}) of the ranks-$\{2,3,4,5\}$ can be described as
\begin{equation}
    (G_1,\mathbf{r})=
    \left(
            \begin{pmatrix}
            0 & 0 & 1 & 1\\
            0 & 0 & 1 & 1\\
            1 & 1 & 0 & 0\\
            1 & 1 & 0 & 0
            \end{pmatrix},
            \begin{pmatrix}
                2\\
                3\\
                4\\
                5
            \end{pmatrix}
            \right)
            \implies
    \mathbf{A}_{1}=\left(
            \begin{matrix}
            0 & 0 & 2 & 3\\
            0 & 0 & 4 & 5\\
            2 & 3 & 0 & 0\\
            4 & 5 & 0 & 0
            \end{matrix}\right),
\end{equation}
or
\begin{equation}
(G_2,\mathbf{r})=
    \left(
            \begin{pmatrix}
            0 & 1 & 0 & 1\\
            1 & 0 & 1 & 0\\
            0 & 1 & 0 & 1\\
            1 & 0 & 1 & 0
            \end{pmatrix},
            \begin{pmatrix}
                2\\
                3\\
                4\\
                5
            \end{pmatrix}
            \right)
            \implies
    \mathbf{A}_{2}=\left(
            \begin{matrix}
            0 & 2 & 0 & 5\\
            2 & 0 & 3 & 0\\
            0 & 3 & 0 & 4\\
            5 & 0 & 4 & 0
            \end{matrix}\right).
\end{equation}
Here $G_1$ and $G_2$ correspond to the TR topology with different permutations of vertices.
In the settings of TN-PS~\cite{li2022permutation}, we can prove that such the relationship is bijective.
The operator $p$ is thus to map the TN-ranks, denoted $\mathbf{x}\in\mathbb{Z}_+^K$ in~\eqref{apd:eq:generalModel}, onto the vectorization of entries in the upper triangle part (except for the diagonal) of the adjacency matrix.
For example,
\begin{equation}
        \begin{split}
        \mathbf{A}_{1}=\left(
            \begin{matrix}
            0 & \color{blue}0 & \color{blue}2 & \color{blue}3\\
            0 & 0 & \color{blue}4 & \color{blue}5\\
            2 & 3 & 0 & \color{blue}0\\
            4 & 5 & 0 & 0
            \end{matrix}\right)
            \iff
            p_1(\mathbf{x})=
            \begin{pmatrix}
                0 & 0 & 0 & 0\\
                1 & 0 & 0 & 0\\
                0 & 1 & 0 & 0\\
                0 & 0 & 1 & 0\\
                0 & 0 & 0 & 1\\
                0 & 0 & 0 & 0
            \end{pmatrix}
            \begin{pmatrix}
                2\\
                3\\
                4\\
                5
            \end{pmatrix}
            =
            \begin{pmatrix}
                0\\
                2\\
                3\\
                4\\
                5\\
                0
            \end{pmatrix}
        \end{split},
    \end{equation}
    and
\begin{equation}
        \begin{split}
        \mathbf{A}_{2}=\left(
            \begin{matrix}
            0 & \color{blue}2 & \color{blue}0 & \color{blue}5\\
            2 & 0 & \color{blue}3 & \color{blue}0\\
            0 & 3 & 0 & \color{blue}4\\
            5 & 0 & 4 & 0
            \end{matrix}\right)
            \iff
            p_2(\mathbf{x})=
            \begin{pmatrix}
                1 & 0 & 0 & 0\\
                0 & 0 & 0 & 0\\
                0 & 0 & 0 & 1\\
                0 & 1 & 0 & 0\\
                0 & 0 & 0 & 0\\
                0 & 0 & 1 & 0
            \end{pmatrix}
            \begin{pmatrix}
                2\\
                3\\
                4\\
                5
            \end{pmatrix}
            =
            \begin{pmatrix}
                2\\
                0\\
                5\\
                3\\
                0\\
                4
            \end{pmatrix}
        \end{split}.
    \end{equation}
It is shown that $p$ is essentially an operator produced by the permutation padding with several rows of zeros according to $G$.

The convergence analysis of this work is mainly inspired by~\citet{golovin2019gradientless}, which establishes a convex framework for the gradient-less optimization algorithms in the real domain.
The challenge is, the TN-SS problem is essentially discrete, so that many well-developed tools, such as convexity and smoothness, for convergence analysis turn invalid in the discrete scenario.

To bridge the graph from~\citet{golovin2019gradientless} to TN-SS, we first re-define several important concepts, by which the necessary tools for the analysis are re-derived.
In doing so, we begin by introducing the finite gradient as the alternative to the classic one defined in the continuous domain.

\begin{definition}[\textbf{finite gradient}]\label{apd:def:finiteGradient}
        For any function $f:\mathbb{Z}_{\geq{}0}^L\rightarrow{}\mathbb{R}$, its finite gradient $\Delta{}f:\mathbb{Z}_{\geq{}0}^L\rightarrow{}\mathbb{R}^{L}$ at the point $\mathbf{x}$ is defined as the vector
    \begin{equation}
    \begin{split}
        \Delta{}f(\mathbf{x})=
        \left[f(\mathbf{x}+\mathbf{e}_1)-f(\mathbf{x}),\ldots,f(\mathbf{x}+\mathbf{e}_L)-f(\mathbf{x})\right]^\top,\label{apd:eq:finiteGradient}
    \end{split}
    \end{equation}
    where $\mathbf{e}_i\,\forall{}i\in{}[L]$ denote the unit vectors with the $i$-th entry being one and other entries being zeros.
\end{definition}
Applying the finite gradient defined in~\eqref{apd:eq:finiteGradient}, we also re-define the strong convexity and smoothness for analysis in the discrete domain.

\begin{definition}[\textbf{$\alpha$-strong convexity with finite gradient}]\label{apd:def:strongConvexity}
        We say $f$ is $\alpha$-strongly convex for $\alpha\geq{}0$ if $f(\mathbf{y})\geq{}f(\mathbf{x})+\left<\Delta{}f(\mathbf{x})-{\color{blue}\frac{\alpha}{2}\mathbf{1}},\mathbf{y-x}\right>+\frac{\alpha}{2}\Vert{}\mathbf{y-x}\Vert^2$ for all $\mathbf{x,y}\in\mathbb{Z}_{\geq{}0}^L$, where $\mathbf{1}\in\mathbb{R}^L$ denotes the vector with all entries being one.
    We simply say $f$ is convex if it is $\alpha$-strongly convex and $\alpha=0$.
    \end{definition}
    Compared to the definitions used in~\citet{golovin2019gradientless} and other literature for convex analysis, the additional term, $\frac{\alpha}{2}\mathbf{1}$, marked by the {\color{blue}\emph{blue}} color, appears due to the discrepancy of the finite gradient and its counterpart in the continuous domain. 
    Below, we prove several basic results using the $\alpha$-strong convexity with finite gradient.
    
    
\begin{lemma}\label{apd:thm:convexity}
    If $f$ is $\alpha$-strongly convex in $\mathbb{Z}_{\geq{}0}^L$, then the following inequalities are held:
    \begin{enumerate}
        \item $g(\mathbf{x})=f(\mathbf{x})-\frac{\alpha}{2}\Vert{}\mathbf{x}\Vert^2$ is convex in the discrete scenario for all $\mathbf{x}\in{}\mathbb{Z}_{\geq{}0}^L$, and vice versa;
        \item $\left<\Delta{}f(\mathbf{x})-\Delta{}f(\mathbf{x}),\mathbf{x-y}\right>\geq{}\alpha\Vert{}\mathbf{x-y}\Vert^2$ for any $\mathbf{x,y}\in\mathbb{Z}_{\geq{}0}^L$;
        \item $\Vert\Delta{}f(\mathbf{x})-\Delta{}f(\mathbf{y})\Vert\geq{}\alpha\Vert{}\mathbf{x-y}\Vert$ for any $\mathbf{x,y}\in\mathbb{Z}_{\geq{}0}^L$;
    \end{enumerate}
    Here $\Vert\,\cdot\,\Vert$ denotes the $l_2$ norm for vectors.
\end{lemma}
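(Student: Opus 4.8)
The plan is to mirror the classical chain of equivalences for strongly convex functions — first-order characterization, monotonicity of the gradient, then the lower bound on the gradient difference — but to carry out each step with the finite gradient $\Delta f$ in place of $\nabla f$, carefully tracking the discrete correction term $\frac{\alpha}{2}\mathbf{1}$ that appears in Definition~\ref{apd:def:strongConvexity}.

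For Item 1, I would first compute the finite gradient of $g(\mathbf{x}) = f(\mathbf{x}) - \frac{\alpha}{2}\Vert\mathbf{x}\Vert^2$ coordinatewise. Using $\Vert\mathbf{x}+\mathbf{e}_i\Vert^2 = \Vert\mathbf{x}\Vert^2 + 2x_i + 1$, the $i$-th entry becomes $\Delta g(\mathbf{x})_i = \Delta f(\mathbf{x})_i - \alpha x_i - \frac{\alpha}{2}$, i.e.\ $\Delta g(\mathbf{x}) = \Delta f(\mathbf{x}) - \alpha\mathbf{x} - \frac{\alpha}{2}\mathbf{1}$. Substituting this into the plain convexity inequality for $g$ (Definition~\ref{apd:def:strongConvexity} with $\alpha=0$) and collecting the quadratic pieces via the identity $\frac{\alpha}{2}\Vert\mathbf{x}\Vert^2 - \alpha\langle\mathbf{x},\mathbf{y}\rangle + \frac{\alpha}{2}\Vert\mathbf{y}\Vert^2 = \frac{\alpha}{2}\Vert\mathbf{y}-\mathbf{x}\Vert^2$, I expect to recover exactly the $\alpha$-strong convexity inequality for $f$. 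Since every manipulation is an algebraic equivalence, this proves both directions at once.

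For Item 2 (which should read $\langle\Delta f(\mathbf{x}) - \Delta f(\mathbf{y}), \mathbf{x}-\mathbf{y}\rangle \geq \alpha\Vert\mathbf{x}-\mathbf{y}\Vert^2$), I would write the strong-convexity inequality once as stated and once with $\mathbf{x},\mathbf{y}$ swapped, then add. The $f(\mathbf{x}),f(\mathbf{y})$ terms cancel, the two $-\frac{\alpha}{2}\mathbf{1}$ contributions cancel because $(\mathbf{y}-\mathbf{x})+(\mathbf{x}-\mathbf{y})=\mathbf{0}$, and the two $\frac{\alpha}{2}\Vert\cdot\Vert^2$ terms combine into $\alpha\Vert\mathbf{x}-\mathbf{y}\Vert^2$, giving the claim after rearrangement. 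Item 3 then follows in one line from Cauchy--Schwarz, $\Vert\Delta f(\mathbf{x}) - \Delta f(\mathbf{y})\Vert\,\Vert\mathbf{x}-\mathbf{y}\Vert \geq \langle\Delta f(\mathbf{x})-\Delta f(\mathbf{y}), \mathbf{x}-\mathbf{y}\rangle \geq \alpha\Vert\mathbf{x}-\mathbf{y}\Vert^2$, dividing by $\Vert\mathbf{x}-\mathbf{y}\Vert$ when it is nonzero and handling $\mathbf{x}=\mathbf{y}$ trivially.

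The main obstacle is genuinely confined to Item 1: the finite-difference computation of the quadratic is the only place where the discrete setting departs from the continuous one, and it is precisely the stray $+1$ in $\Vert\mathbf{x}+\mathbf{e}_i\Vert^2$ that produces the $\frac{\alpha}{2}$ shift and thereby justifies the otherwise-mysterious $\frac{\alpha}{2}\mathbf{1}$ correction in Definition~\ref{apd:def:strongConvexity}. Once that identity is pinned down, Items 2 and 3 are formal and essentially identical to their real-valued counterparts.
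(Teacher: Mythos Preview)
Your proposal is correct and follows essentially the same approach as the paper: Item~1 via the finite-gradient identity $\Delta g(\mathbf{x})=\Delta f(\mathbf{x})-\alpha\mathbf{x}-\tfrac{\alpha}{2}\mathbf{1}$ and the polarization of the quadratic, Item~2 by adding the strong-convexity inequality to its swap (the paper routes this through the monotone-gradient property of the convex $g$, which amounts to the same two-line addition), and Item~3 by Cauchy--Schwarz.
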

\begin{proof}
    (1, $\Rightarrow$)~According to Def.~\ref{apd:def:strongConvexity}, the first statement is equivalent to proving the inequality
    \begin{equation}
    g(\mathbf{y})\geq{}g(\mathbf{x})+\left<\Delta{}g(\mathbf{x}),\mathbf{y-x}\right>\label{apd:eq:convexityG}
    \end{equation}
    holding for any $\mathbf{x,y}\in\mathbb{Z}_{\geq{}0}^L$.
    Applying the $\alpha$-strong convexity assumption, it follows that
    \begin{equation}
        \begin{split}
            g(\mathbf{y})-g(\mathbf{x})-\left<\Delta{}g(\mathbf{x}),\mathbf{y-x}\right>&=f(\mathbf{y})-\frac{\alpha}{2}\Vert{}\mathbf{y}\Vert^2-f(\mathbf{x})+\frac{\alpha}{2}\Vert{}\mathbf{x}\Vert^2-\left<\Delta{}g(\mathbf{x}),\mathbf{y-x}\right>\\
            &=f(\mathbf{y})-\frac{\alpha}{2}\Vert{}\mathbf{y}\Vert^2-f(\mathbf{x})+\frac{\alpha}{2}\Vert{}\mathbf{x}\Vert^2-\left<\Delta{}f(\mathbf{x})-\frac{\alpha}{2}(2\mathbf{x}+\mathbf{1}),\mathbf{y-x}\right>\\
            &={}f(\mathbf{y})-f(\mathbf{x})-\left<\Delta{}f(\mathbf{x})-\frac{\alpha}{2}\mathbf{1},\mathbf{y-x}\right>-\frac{\alpha}{2}\Vert{}\mathbf{y}\Vert^2+\frac{\alpha}{2}\Vert{}\mathbf{x}\Vert^2+\frac{\alpha}{2}\left<2\mathbf{x},\mathbf{y-x}\right>\\
            &\geq{}\frac{\alpha}{2}\left(\Vert{}\mathbf{y-x}\Vert^2-\Vert{}\mathbf{y}\Vert^2-\Vert{}\mathbf{x}\Vert^2+2\left<\mathbf{x,y}\right>\right)=0.
        \end{split}
    \end{equation}
    Here the first equality follows from the definition of $g(\mathbf{x})$, 
    the second equality holds since the finite gradient $\Delta\Vert{}\mathbf{x}\Vert^2=2\mathbf{x}+\mathbf{1}$, and
    the inequality at the bottom line follows from the $\alpha$-strong convexity assumption on $f$.
    
    (1, $\Leftarrow$) By~\eqref{apd:eq:convexityG},
    \begin{equation}
        \begin{split}
            f(\mathbf{y})-\frac{\alpha}{2}\Vert\mathbf{y}\Vert^2\geq{}f(\mathbf{x})-\frac{\alpha}{2}\Vert\mathbf{x}\Vert^2+\left<\Delta{}f(\mathbf{x})-\frac{\alpha}{2}(2\mathbf{x}+1),\mathbf{y-x}\right>.\label{apd:eq:convexity2G}
        \end{split}
    \end{equation}
    The $\alpha$-strong convexity of $f$ is thus proved by algebraically simplifying~\eqref{apd:eq:convexity2G}.

    (2)~To prove the second statement, we first know that the following inequality
    \begin{equation}
        \left<\Delta{}g(\mathbf{x})-\Delta{}g(\mathbf{y}),\mathbf{x-y}\right>\geq{}0,\,\forall{}\mathbf{x,y}
    \end{equation}
    holds since the monotone gradient property of the convexity (it is true in both continuous and discrete scenarios).
    By the form of $\Delta{}g(\mathbf{x})$, it follows that
    \begin{equation}
        \left<\Delta{}f(\mathbf{x})-\frac{\alpha}{2}(2\mathbf{x+1})-\Delta{}f(\mathbf{y})+\frac{\alpha}{2}(2\mathbf{y+1}),\mathbf{x-y}\right>\geq{}0.
    \end{equation}
    With algebraic simplification, we obtain
    \begin{equation}
        \left<\Delta{}f(\mathbf{x})-\Delta{}f(\mathbf{y}),\mathbf{x-y}\right>\geq{}\alpha\Vert{}\mathbf{x-y}\Vert^2\label{apd:eq:lemma2Convexity}
    \end{equation}
    for all $\mathbf{x,y}\in\mathbb{Z}_{\geq{}0}^L$. The second statement is thus proved.

    (3)~The third statement holds since the following relationship
    \begin{equation}
        \Vert\Delta{}f(\mathbf{x})-\Delta{}f(\mathbf{y})\Vert\Vert{}\mathbf{x-y}\Vert\geq{} \left<\Delta{}f(\mathbf{x})-\Delta{}f(\mathbf{y}),\mathbf{x-y}\right>\geq{}\alpha\Vert{}\mathbf{x-y}\Vert^2,
    \end{equation}
    where the first inequality follows from the Cauchy–Schwarz inequality, and the second inequality follows from~\eqref{apd:eq:lemma2Convexity}.
    The third statement is proved by dividing by $\Vert{}\mathbf{x-y}\Vert$ on both sides.
\end{proof}

Apart from the convexity, the smoothness of the objective function is also required to be re-defined in the discrete scenario.

\begin{definition}[\textbf{$(\beta_1,\beta_2)$-smoothness with finite gradient}]\label{apd:def:smoothness}
    We say $f$ is $(\beta_1,\beta_2)$-smooth for $\beta_1,\beta_2>0$ if
    \begin{enumerate}
        \item $\vert{}f(\mathbf{x})-f(\mathbf{y})\vert\leq{}\beta_1\Vert\mathbf{x-y}\Vert$ for all $\mathbf{x,y}\in\mathbb{Z}_{\geq{}0}^L$;
        \item The function $l(\mathbf{x}):=\frac{\beta_2}{2}\Vert{}\mathbf{x}\Vert^2-f(\mathbf{x})$ is convex.  
    \end{enumerate}
\end{definition}
The first item of Def.~\ref{apd:def:smoothness} restricts that $f$ is $\beta_1$-Lipschitz, implying the ``continuity'' of the function, while the second item upper bounds the change of the finite gradient of $f$, implying a ``continuity'' over the finite gradient.
In particular,
\begin{lemma}\label{apd:thm:smooth2Item}
    If $l(\mathbf{x})=\frac{\beta}{2}\Vert{}\mathbf{x}\Vert^2-f(\mathbf{x})$ is convex, then for all $\mathbf{x,y}\in\mathbb{Z}_{\geq{}0}^L$
    \begin{enumerate}
        \item $f(\mathbf{y})\leq{}f(\mathbf{x})+\left<\Delta{}f(\mathbf{x})-\frac{\beta}{2}\mathbf{1},\mathbf{y-x}\right>+\frac{\beta}{2}\Vert{}\mathbf{y-x}\Vert^2$ and vise versa;
        \item $\left<\Delta{}f(\mathbf{x})-\Delta{}f(\mathbf{y}),\mathbf{x-y}\right>\leq{}\beta\Vert{}\mathbf{x-y}\Vert^2$.
    \end{enumerate}
\end{lemma}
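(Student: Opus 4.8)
The plan is to prove both items by mirroring the proof of Lemma~\ref{apd:thm:convexity}, exploiting that the hypothesis is precisely the convexity of $l(\mathbf{x})=\frac{\beta}{2}\Vert\mathbf{x}\Vert^2-f(\mathbf{x})$. The single computational fact driving everything is the finite-gradient identity $\Delta\Vert\mathbf{x}\Vert^2=2\mathbf{x}+\mathbf{1}$ (already used in the proof of Lemma~\ref{apd:thm:convexity}), which yields $\Delta{}l(\mathbf{x})=\frac{\beta}{2}(2\mathbf{x}+\mathbf{1})-\Delta{}f(\mathbf{x})$. Everything else is linear-algebra bookkeeping of the bias term $\frac{\beta}{2}\mathbf{1}$.

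For Item~1 (forward direction), I would start from the discrete convexity inequality for $l$, namely $l(\mathbf{y})\geq{}l(\mathbf{x})+\langle\Delta{}l(\mathbf{x}),\mathbf{y-x}\rangle$ (this is Definition~\ref{apd:def:strongConvexity} with $\alpha=0$, as in~\eqref{apd:eq:convexityG}). Substituting the expressions for $l$ and $\Delta{}l$ and rearranging to isolate $f(\mathbf{y})$, the terms involving $\Vert\mathbf{y}\Vert^2$, $\Vert\mathbf{x}\Vert^2$ and $2\langle\mathbf{x},\mathbf{y}\rangle$ collapse via $\Vert\mathbf{y-x}\Vert^2=\Vert\mathbf{y}\Vert^2+\Vert\mathbf{x}\Vert^2-2\langle\mathbf{x},\mathbf{y}\rangle$ into $\frac{\beta}{2}\Vert\mathbf{y-x}\Vert^2$, while the $\langle\mathbf{1},\mathbf{y-x}\rangle$ piece is absorbed into the $-\frac{\beta}{2}\mathbf{1}$ shift of the finite gradient, producing exactly $f(\mathbf{y})\leq{}f(\mathbf{x})+\langle\Delta{}f(\mathbf{x})-\frac{\beta}{2}\mathbf{1},\mathbf{y-x}\rangle+\frac{\beta}{2}\Vert\mathbf{y-x}\Vert^2$. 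The ``vice versa'' direction reverses these algebraic steps: assuming the displayed upper bound on $f$, re-collecting the squared-norm terms recovers $l(\mathbf{y})\geq{}l(\mathbf{x})+\langle\Delta{}l(\mathbf{x}),\mathbf{y-x}\rangle$, i.e.\ the convexity of $l$.

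For Item~2, I would invoke the monotone-gradient characterization of convexity for $l$, $\langle\Delta{}l(\mathbf{x})-\Delta{}l(\mathbf{y}),\mathbf{x-y}\rangle\geq{}0$ (the same property used in part~2 of Lemma~\ref{apd:thm:convexity}). Substituting $\Delta{}l(\mathbf{x})-\Delta{}l(\mathbf{y})=\beta(\mathbf{x-y})-(\Delta{}f(\mathbf{x})-\Delta{}f(\mathbf{y}))$ (the constant $\frac{\beta}{2}\mathbf{1}$ cancels in the difference) and expanding the inner product gives $\beta\Vert\mathbf{x-y}\Vert^2-\langle\Delta{}f(\mathbf{x})-\Delta{}f(\mathbf{y}),\mathbf{x-y}\rangle\geq{}0$, which is the claimed inequality after transposition.

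The proof involves no genuine obstacle; it is essentially the smoothness mirror of the strong-convexity Lemma~\ref{apd:thm:convexity}. The only point demanding care is the consistent handling of the constant bias $\frac{\beta}{2}\mathbf{1}$ arising from $\Delta\Vert\mathbf{x}\Vert^2=2\mathbf{x}+\mathbf{1}$: it must be retained in the first-order terms of Item~1 but cancels automatically in the gradient difference of Item~2. Getting the sign and placement of this term right is the one spot where a careless computation would fail.
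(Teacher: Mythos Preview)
Your proposal is correct and follows essentially the same approach as the paper's proof: both items are obtained by writing out the convexity inequality for $l$ (for Item~1) and the monotone-gradient property of $l$ (for Item~2), substituting $\Delta l(\mathbf{x})=\frac{\beta}{2}(2\mathbf{x}+\mathbf{1})-\Delta f(\mathbf{x})$, and simplifying. Your explicit tracking of the bias term $\frac{\beta}{2}\mathbf{1}$ and its cancellation in the gradient difference matches the paper's computation exactly.
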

\begin{proof}
    (1, $\Rightarrow$)~By the form $l(\mathbf{x})$ and its convex property, we have the inequality
\begin{equation}
    \frac{\beta}{2}\Vert\mathbf{y}\Vert^2-f(\mathbf{y})\geq{}\frac{\beta}{2}\Vert\mathbf{x}\Vert^2-f(\mathbf{x})+\left<\frac{\beta}{2}(2\mathbf{x}+\mathbf{1})-\Delta{}f(\mathbf{x}),\mathbf{y-x}\right>.\label{apd:eq:smooth1Lemma}
\end{equation}
The first statement is proved by algebraically simplifying~\eqref{apd:eq:smooth1Lemma}.
The ($\Leftarrow$) direction can be proved similarly.

To prove the second item, by the convexity of $l(\mathbf{x})$,
    \begin{equation}
    l(\mathbf{y})\geq{}l(\mathbf{x})+\left<\Delta{}l(\mathbf{x}),\mathbf{y-x}\right>.\label{apd:eq:smooth21Lemma}
    \end{equation}
    Similarly,
    \begin{equation}
        l(\mathbf{x})\geq{}l(\mathbf{y})+\left<\Delta{}l(\mathbf{y}),\mathbf{x-y}\right>.\label{apd:eq:smooth22Lemma}
    \end{equation}
    Summing the two sides of~\eqref{apd:eq:smooth21Lemma} and~\eqref{apd:eq:smooth22Lemma} up, we have
    \begin{equation}
        \left<\Delta{}l(\mathbf{x})-\Delta{}l(\mathbf{y}),\mathbf{x}-\mathbf{y}\right>\geq{}0.
    \end{equation}
    Applying $l(\mathbf{x})=\frac{\beta}{2}\Vert{}\mathbf{x}\Vert^2-f(\mathbf{x})$,
    \begin{equation}
        \left<\beta(2\mathbf{x}+\mathbf{1})-\Delta{}f(\mathbf{x})-\beta(2\mathbf{y}+\mathbf{1})+\Delta{}f(\mathbf{y}),\mathbf{x-y}\right>\geq{}0.
    \end{equation}
    By simplifying the inequality, we finally have
    \begin{equation}
        \left<\Delta{}f(\mathbf{x})-\Delta{}f(\mathbf{y}),\mathbf{x-y}\right>\leq{}\beta\Vert{}\mathbf{x-y}\Vert^2.
    \end{equation}
\end{proof}

The first item of Def.~\ref{apd:def:smoothness} gives the following crucial result, which is used in the main theorem of this paper.
\begin{lemma}\label{apd:thm:smooth1Item}
    If $\vert{}f(\mathbf{x})-f(\mathbf{y})\vert\leq{}\beta\Vert\mathbf{x-y}\Vert$ for all $\mathbf{x,y}\in\mathbb{Z}_{\geq{}0}^L$, then the norm of the finite gradient with respective to $\mathbf{x}$ is bounded, \textit{i.e.}, $\Vert\Delta{}f(\mathbf{x})\Vert_\infty\leq{}\beta$.
\end{lemma}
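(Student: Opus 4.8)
The plan is to unpack the definition of the finite gradient and apply the Lipschitz hypothesis coordinate-by-coordinate. Recall from Definition~\ref{apd:def:finiteGradient} that the $i$-th entry of $\Delta{}f(\mathbf{x})$ is exactly $f(\mathbf{x}+\mathbf{e}_i)-f(\mathbf{x})$, so the $l_\infty$-norm of the finite gradient is the largest of the $L$ quantities $\vert{}f(\mathbf{x}+\mathbf{e}_i)-f(\mathbf{x})\vert$ over $1\leq{}i\leq{}L$. The entire claim therefore reduces to bounding each of these single-coordinate differences uniformly by $\beta$.

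First I would fix an arbitrary index $i\in[L]$ and instantiate the Lipschitz hypothesis at the pair $\mathbf{x}$ and $\mathbf{y}=\mathbf{x}+\mathbf{e}_i$; this is a legitimate choice since $\mathbf{x}+\mathbf{e}_i\in\mathbb{Z}_{\geq{}0}^L$ whenever $\mathbf{x}\in\mathbb{Z}_{\geq{}0}^L$. The hypothesis then gives $\vert{}f(\mathbf{x}+\mathbf{e}_i)-f(\mathbf{x})\vert\leq{}\beta\Vert\mathbf{e}_i\Vert$. The one point that must be handled carefully is that the norm appearing in the Lipschitz condition is the $l_2$-norm, for which $\Vert\mathbf{e}_i\Vert=1$; substituting this yields $\vert{}f(\mathbf{x}+\mathbf{e}_i)-f(\mathbf{x})\vert\leq{}\beta$ for every coordinate $i$.

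Finally I would take the maximum over $i$: since every entry of $\Delta{}f(\mathbf{x})$ is bounded in absolute value by $\beta$, the $l_\infty$-norm, being the maximum of these absolute values, is likewise at most $\beta$, which is the desired conclusion $\Vert\Delta{}f(\mathbf{x})\Vert_\infty\leq{}\beta$.

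There is essentially no hard part here—the result is an immediate consequence of matching the canonical basis vectors $\mathbf{e}_i$ (which are unit vectors in the $l_2$-norm) against the Lipschitz inequality. The only place one could slip is conflating the $l_2$-norm used in the definition of $\beta_1$-Lipschitzness with the $l_\infty$-norm measuring the finite gradient; keeping these distinct, and recording that $\Vert\mathbf{e}_i\Vert=1$ in the $l_2$-norm, is the sole bookkeeping needed.
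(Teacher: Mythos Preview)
Your proposal is correct and follows essentially the same approach as the paper's own proof: instantiate the Lipschitz inequality at the pair $\mathbf{x}$ and $\mathbf{x}+\mathbf{e}_i$, use $\Vert\mathbf{e}_i\Vert=1$, and conclude that each entry of the finite gradient is bounded by $\beta$. Your write-up is, if anything, slightly more careful in spelling out that $\mathbf{x}+\mathbf{e}_i$ remains in $\mathbb{Z}_{\geq{}0}^L$ and in distinguishing the $l_2$-norm in the hypothesis from the $l_\infty$-norm in the conclusion.
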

\begin{proof}
    Denote $\Delta{}f(\mathbf{x})_i$ the $i$-th entry of $\Delta{}f(\mathbf{x})$, then for all $1\leq{}i\leq{}L$ the second item of the definition follows by
    \begin{equation}
        \vert\Delta{}f(\mathbf{x})_i\vert=\vert{}f(\mathbf{x}+\mathbf{e}_i)-f(\mathbf{x})\vert\leq{}\beta\Vert\mathbf{x}+\mathbf{e}_i-\mathbf{x}\Vert=\beta,
    \end{equation}
    where $\mathbf{e}_i$ denotes the unit vector with $i$-th entry being one and others being zeros, and the first equality follows from the definition of the finite gradient.
\end{proof}

After the new definitions of convexity and smoothness with finite gradient, in the proof, we also use the concept of the sub-level set, which is widely used in optimization theory.
For the self-consistency purpose, the specific definition is reviewed as follows:
\begin{definition}[\textbf{sub-level set}]
The level set of $f$ at point $\mathbf{x}\in\mathbb{Z}_{\geq{}0}^L$ is $\mathbb{L}_\mathbf{x}(f)=\left\{\mathbf{y}\in\mathbb{Z}_{\geq{}0}^L:\,f(\mathbf{y})=f(\mathbf{x})\right\}$.
The sub-level set of $f$ at point $\mathbf{x}\in\mathbb{Z}_{\geq{}0}^L$ is $\mathbb{L}^{\downarrow}_\mathbf{x}(f)=\left\{\mathbf{y}\in\mathbb{Z}_{\geq{}0}^L:\,f(\mathbf{y})\leq{}f(\mathbf{x})\right\}$.
\end{definition}

The following lemma shows that, for any $\mathbf{x}$, there exists a cube, \textit{i.e.}, a ball with infinity-norm, which is tangent at $\mathbf{x}$ and inside the sub-level set $\mathbb{L}^\downarrow_\mathbf{x}(f)$.

\begin{lemma}[\textbf{the sub-level cube}]\label{apd:thm:subLevelCube}
    Assume that  $f:\mathbb{Z}_{\geq{}0}^L\rightarrow{}\mathbb{R}$ is $\alpha$-strongly convex, $(\beta_1,\beta_2)$-smooth, and its minimum, denoted $f(\mathbf{\mathbf{x}^*})$, satisfies $\Vert\frac{\beta_2}{2}\mathbf{1}-\Delta{}f(\mathbf{x}^*)\Vert\leq{}\gamma$ where $\gamma$ is a constant and $0\leq{}\gamma<{}\alpha$.
    Then, for all $\mathbf{x}\in\mathbb{Z}_{\geq{}0}^L$, there is a $L$-dimensional cube, which is of the edge length $\frac{2(\alpha-\gamma)}{\beta_2\sqrt{L}}\Vert{}\mathbf{x-x}^*\Vert$, tangent at $\mathbf{x}$, and inside the sub-level set $\mathbb{L}^\downarrow_\mathbf{x}(f)$.
\end{lemma}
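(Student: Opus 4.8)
The plan is to transplant the continuous ``sub-level ball'' argument of \citet{golovin2019gradientless} to the integer lattice, replacing the classical gradient by the finite gradient and paying attention to the integrality of $\mathbf{x}$ and $\mathbf{x}^*$. First I would use smoothness. Since $l(\mathbf{x})=\frac{\beta_2}{2}\Vert\mathbf{x}\Vert^2-f(\mathbf{x})$ is convex (Item~2 of Definition~\ref{apd:def:smoothness}), Item~1 of Lemma~\ref{apd:thm:smooth2Item} with $\beta=\beta_2$ gives, for every $\mathbf{y}\in\mathbb{Z}_{\geq0}^L$,
\[
f(\mathbf{y})-f(\mathbf{x})\leq\langle\Delta f(\mathbf{x})-\tfrac{\beta_2}{2}\mathbf{1},\,\mathbf{y}-\mathbf{x}\rangle+\tfrac{\beta_2}{2}\Vert\mathbf{y}-\mathbf{x}\Vert^2.
\]
Writing $\mathbf{g}:=\Delta f(\mathbf{x})-\frac{\beta_2}{2}\mathbf{1}$ and completing the square in $\mathbf{y}-\mathbf{x}$, the right-hand side is non-positive exactly on the Euclidean ball $B_2(\mathbf{c},\rho)$ with centre $\mathbf{c}=\mathbf{x}-\frac{1}{\beta_2}\mathbf{g}$ and radius $\rho=\frac{1}{\beta_2}\Vert\mathbf{g}\Vert$. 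Hence every lattice point of $B_2(\mathbf{c},\rho)$ lies in $\mathbb{L}^{\downarrow}_{\mathbf{x}}(f)$, and since $\Vert\mathbf{x}-\mathbf{c}\Vert=\rho$ the point $\mathbf{x}$ sits on the boundary of this ball, i.e.\ the ball is tangent at $\mathbf{x}$.

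Next I would lower-bound the radius $\rho$ in terms of $\Vert\mathbf{x}-\mathbf{x}^*\Vert$. By the (reverse) triangle inequality applied to the splitting $\mathbf{g}=(\Delta f(\mathbf{x})-\Delta f(\mathbf{x}^*))+(\Delta f(\mathbf{x}^*)-\frac{\beta_2}{2}\mathbf{1})$,
\[
\Vert\mathbf{g}\Vert\geq\Vert\Delta f(\mathbf{x})-\Delta f(\mathbf{x}^*)\Vert-\Vert\Delta f(\mathbf{x}^*)-\tfrac{\beta_2}{2}\mathbf{1}\Vert.
\]
The first term is at least $\alpha\Vert\mathbf{x}-\mathbf{x}^*\Vert$ by Item~3 of Lemma~\ref{apd:thm:convexity}, and the second is at most $\gamma$ by the hypothesis on $\mathbf{x}^*$. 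The key discrete observation is that $\mathbf{x},\mathbf{x}^*\in\mathbb{Z}_{\geq0}^L$, so whenever $\mathbf{x}\neq\mathbf{x}^*$ we have $\Vert\mathbf{x}-\mathbf{x}^*\Vert\geq1$ and therefore $\gamma\leq\gamma\Vert\mathbf{x}-\mathbf{x}^*\Vert$; this converts the additive slack into a multiplicative one and yields $\Vert\mathbf{g}\Vert\geq(\alpha-\gamma)\Vert\mathbf{x}-\mathbf{x}^*\Vert$, hence $\rho\geq\frac{\alpha-\gamma}{\beta_2}\Vert\mathbf{x}-\mathbf{x}^*\Vert$ (the case $\mathbf{x}=\mathbf{x}^*$ being trivial with a degenerate cube). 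Note this is exactly where the normalisation $\alpha\leq\beta_2$ and $\gamma<\alpha$ of Assumption~\ref{def:assumption} keep $\rho$ positive.

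Finally I would pass from the ball to a cube. An axis-aligned cube of edge length $2\rho/\sqrt L$ has half-diagonal $\rho$, so it inscribes in $B_2(\mathbf{c},\rho)$ and therefore lies in $\mathbb{L}^{\downarrow}_{\mathbf{x}}(f)$; combined with the radius bound this already delivers the claimed edge length $\frac{2(\alpha-\gamma)}{\beta_2\sqrt L}\Vert\mathbf{x}-\mathbf{x}^*\Vert$. The hard part will be the word \emph{``tangent at $\mathbf{x}$''}: the inscribed cube is centred at $\mathbf{c}$, not at $\mathbf{x}$, and in general $\Vert\mathbf{x}-\mathbf{c}\Vert_\infty\geq\rho/\sqrt L$, so $\mathbf{x}$ need not lie on that particular cube. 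The plan is to exploit the slack from the radius bound (the prescribed edge may be strictly below $2\rho/\sqrt L$) and translate a cube of the prescribed edge length along the segment $[\mathbf{c},\mathbf{x}]$ until $\mathbf{x}$ reaches its boundary while the cube still fits inside $B_2(\mathbf{c},\rho)$; proving that such a placement always exists—i.e.\ that the farthest corner stays within distance $\rho$ of $\mathbf{c}$—is the step requiring genuine care. A secondary subtlety is that containment in the sub-level set was argued pointwise via the smoothness inequality, which holds only at lattice points, so I would throughout work with $B_2(\mathbf{c},\rho)\cap\mathbb{Z}_{\geq0}^L$ and check that the cube's integer points (including the nonnegativity constraint near the boundary of the orthant) are consistent with the construction.
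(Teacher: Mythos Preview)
Your first two steps match the paper's proof almost verbatim: the smoothness inequality from Lemma~\ref{apd:thm:smooth2Item} produces the Euclidean ball $B_2(\mathbf{c},\rho)$ tangent at $\mathbf{x}$, and the radius is bounded below via Lemma~\ref{apd:thm:convexity}(3) plus the hypothesis on $\Delta f(\mathbf{x}^*)$. Your explicit remark that $\Vert\mathbf{x}-\mathbf{x}^*\Vert\geq 1$ turns the additive $\gamma$ into a multiplicative $\gamma\Vert\mathbf{x}-\mathbf{x}^*\Vert$ is a point the paper uses tacitly without spelling out.

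Where you diverge is the cube construction, and the paper's route is cleaner than your translate-and-fit argument. Instead of inscribing an axis-aligned cube at the centre $\mathbf{c}$ and then worrying about moving it until $\mathbf{x}$ lies on its boundary, the paper places a \emph{vertex} of the cube at $\mathbf{x}$ and orients the cube so that its main diagonal points from $\mathbf{x}$ toward $\mathbf{c}$. Then every edge emanating from $\mathbf{x}$ makes angle $\arccos(1/\sqrt{L})$ with the radius $\mathbf{c}-\mathbf{x}$, and the chord-length formula gives edge length $2\rho\cos(\arccos(1/\sqrt{L}))=2\rho/\sqrt{L}$; equivalently, the cube has half-diagonal $\rho$, so all its vertices lie on the sphere and the whole cube sits inside the ball. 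Tangency at $\mathbf{x}$ is automatic because $\mathbf{x}$ is a vertex. This completely sidesteps the delicate translation step you flagged as ``requiring genuine care''. Note the paper's cube is \emph{not} axis-aligned in general; the lemma only asserts the existence of some $L$-cube.

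Your secondary subtlety about lattice points and the nonnegative-orthant boundary is legitimate, but the paper does not address it: the cube is treated as a continuous region, and ``inside $\mathbb{L}^{\downarrow}_{\mathbf{x}}(f)$'' is inherited from the ball via the smoothness inequality applied to whatever lattice points happen to lie in it.
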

\begin{proof}
    Applying the smoothness assumption and Lemma~\ref{apd:thm:smooth2Item},
    \begin{equation}
    \begin{split}
        f\left(\mathbf{x}-\frac{1}{\beta_2}\Delta{}f(\mathbf{x})+\frac{1}{2}\mathbf{1}+\mathbf{s}\right)&\leq{}f(\mathbf{x})+\left<\Delta{}f(\mathbf{x})-\frac{\beta_2}{2}\mathbf{1},\mathbf{s}+\frac{1}{2}\mathbf{1}-\frac{1}{\beta_2}\Delta{}f(\mathbf{x})\right>+\frac{\beta_2}{2}\left\Vert{}\mathbf{s}+\frac{1}{2}\mathbf{1}-\frac{1}{\beta_2}\Delta{}f(\mathbf{x})\right\Vert^2\\
        &=f(\mathbf{x})+\frac{\beta_2}{2}\left(\Vert{}\mathbf{s}\Vert^2-\Vert{}\frac{1}{2}\mathbf{1}-\frac{1}{\beta_2}\Delta{}f(\mathbf{x})\Vert^2\right)
    \end{split}\label{apd:eq:sub1levelBall}
\end{equation}
for any $\mathbf{s}$.
The inequality~\eqref{apd:eq:sub1levelBall} implies that for any $\mathbf{y}\in\mathbb{Z}_{\geq{}0}^L$ in the Euclidean ball $B\left(\mathbf{x}-\frac{1}{\beta_2}\Delta{}f(\mathbf{x})+\frac{1}{2}\mathbf{1},\Vert{}\frac{1}{2}\mathbf{1}-\frac{1}{\beta_2}\Delta{}f(\mathbf{x})\Vert\right)$ it yields $f(\mathbf{y})\leq{}f(\mathbf{x})$, \textit{i.e.}, $\mathbf{y}\in\mathbb{L}^\downarrow_\mathbf{x}(f)$.
We also see that $\mathbf{x}$ is at the surface of this Euclidean ball, \textit{i.e.}, the ball is tangent at $\mathbf{x}$.
Furthermore, we also prove that the radius of the ball is lower bounded as follows:
\begin{equation}
    \begin{split}
        \frac{1}{\beta_2}\Vert{}\frac{\beta_2}{2}\mathbf{1}-\Delta{}f(\mathbf{x})\Vert
        &=\frac{1}{\beta_2}\Vert{}\frac{\beta_2}{2}\mathbf{1}-\Delta{}f(\mathbf{x}^*)+\Delta{}f(\mathbf{x}^*)-\Delta{}f(\mathbf{x})\Vert\\
        &\geq\frac{1}{\beta_2}\left(\Vert\Delta{}f(\mathbf{x})-\Delta{}f(\mathbf{x}^*)\Vert-\Vert\frac{\beta}{2}\mathbf{1}-\Delta{}f(\mathbf{x}^*)\Vert\right)\\
        &\geq{}\frac{(\alpha-\gamma)}{\beta_2}\Vert{}\mathbf{x-x}^*\Vert,
    \end{split}\label{apd:eq:sub2levelBall}
\end{equation}
where the inequality at the bottom line follows from the third statement of Lemma~\ref{apd:thm:convexity} and the assumption $\Vert\frac{\beta_2}{2}\mathbf{1}-\Delta{}f(\mathbf{x}^*)\Vert\leq{}\gamma$.

Next, we show that the ball $B\left(\mathbf{x}-\frac{1}{\beta_2}\Delta{}f(\mathbf{x})+\frac{1}{2}\mathbf{1},\Vert{}\frac{1}{2}\mathbf{1}-\frac{1}{\beta_2}\Delta{}f(\mathbf{x})\Vert\right)$ contains a cube of edge length $\frac{2(\alpha-\gamma)}{\beta_2\sqrt{L}}\Vert{}\mathbf{x-x}^*\Vert$.
First, we easily know in the ball there exists a cube, of which the volume is sufficiently small, and one vertex is at $\mathbf{x}$.
Then, the cube gradually extends all edges until the adjacent vertices of $\mathbf{x}$ touch the surface of the ball.
At this moment, it can be seen that the edges that touch the surface of the ball turn the chords of the ball.
Furthermore, the line connecting the ball center to $\mathbf{x}$ has an equal angle to all edges connecting $\mathbf{x}$, due to the symmetry of the geometrical shapes.
With basic geometry knowledge, we can thus calculate the chord length, \textit{i.e.}, the edge length of the cube, with $2\times{}R\cos(\theta)$, where $R$ denotes the radius of the ball and $\theta=\arccos(1/\sqrt{L})$ is the angle between the chord and the "center-$\mathbf{x}$" line.
Finally, using~\eqref{apd:eq:sub2levelBall}, we know the cube of the edge length $\frac{2(\alpha-\gamma)}{\beta_2\sqrt{L}}\Vert{}\mathbf{x-x}^*\Vert$ is tangent at $\mathbf{x}$, and inside sub-level set $\mathbb{L}^\downarrow_\mathbf{x}(f)$.
\end{proof}

\begin{lemma}[\textbf{convex combination in the discrete domain}]\label{apd:thm:convexComb}
    Suppose $\mathbf{q}=\theta{}\mathbf{x}+(1-\theta)\mathbf{y},\,\forall{}\theta\in{}[0,1]$, and there is $\hat{\mathbf{q}}\in{}\mathbb{Z}_{\geq{}0}^L$ where $\mathbf{\Lambda}=\mathbf{q}-\hat{\mathbf{q}}$.
    If $f$ is $\alpha$-strongly convex, then
    \begin{equation}
        \theta{}f(\mathbf{x})+(1-\theta)f(\mathbf{y})\geq{}f(\hat{\mathbf{q}})+\left<\Delta{}f(\hat{\mathbf{q}})-\frac{\alpha}{2}\mathbf{1},\mathbf{\Lambda}\right>+\frac{\alpha}{2}\Vert\mathbf{\Lambda}\Vert^2.
    \end{equation}
\end{lemma}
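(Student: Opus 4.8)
The plan is to apply the $\alpha$-strong convexity inequality from Definition~\ref{apd:def:strongConvexity} twice, in both cases anchoring the finite gradient at the rounded point $\hat{\mathbf{q}}$, and then to combine the two inequalities convexly. Concretely, since $\mathbf{x},\mathbf{y},\hat{\mathbf{q}}\in\mathbb{Z}_{\geq{}0}^L$, the definition applies verbatim with base point $\hat{\mathbf{q}}$ and targets $\mathbf{x}$ and $\mathbf{y}$, giving $f(\mathbf{x})\geq{}f(\hat{\mathbf{q}})+\langle\Delta{}f(\hat{\mathbf{q}})-\frac{\alpha}{2}\mathbf{1},\mathbf{x}-\hat{\mathbf{q}}\rangle+\frac{\alpha}{2}\Vert\mathbf{x}-\hat{\mathbf{q}}\Vert^2$ and the analogous inequality with $\mathbf{y}$ in place of $\mathbf{x}$. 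Note that $\hat{\mathbf{q}}$ need not be any particular rounding of $\mathbf{q}$; the argument works for any integer point, because the strong-convexity bound holds at every pair of feasible points.

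Next I would multiply the first inequality by $\theta$ and the second by $1-\theta$ and add them. The constant terms sum to $f(\hat{\mathbf{q}})$. For the linear term, the key simplification is that $\theta(\mathbf{x}-\hat{\mathbf{q}})+(1-\theta)(\mathbf{y}-\hat{\mathbf{q}})=\theta\mathbf{x}+(1-\theta)\mathbf{y}-\hat{\mathbf{q}}=\mathbf{q}-\hat{\mathbf{q}}=\mathbf{\Lambda}$, so the combined inner-product term collapses exactly to $\langle\Delta{}f(\hat{\mathbf{q}})-\frac{\alpha}{2}\mathbf{1},\mathbf{\Lambda}\rangle$ with no remainder. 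This telescoping is what lets $\mathbf{\Lambda}$ appear cleanly on the right-hand side in precisely the same form as in the definition.

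The only step requiring more than rewriting is the quadratic term. After combining, I obtain $\frac{\alpha}{2}\left(\theta\Vert\mathbf{x}-\hat{\mathbf{q}}\Vert^2+(1-\theta)\Vert\mathbf{y}-\hat{\mathbf{q}}\Vert^2\right)$, which I need to lower-bound by $\frac{\alpha}{2}\Vert\mathbf{\Lambda}\Vert^2$. I would use the exact variance-type identity $\theta\Vert\mathbf{a}\Vert^2+(1-\theta)\Vert\mathbf{b}\Vert^2=\Vert\theta\mathbf{a}+(1-\theta)\mathbf{b}\Vert^2+\theta(1-\theta)\Vert\mathbf{a}-\mathbf{b}\Vert^2$ with $\mathbf{a}=\mathbf{x}-\hat{\mathbf{q}}$ and $\mathbf{b}=\mathbf{y}-\hat{\mathbf{q}}$, so that $\theta\mathbf{a}+(1-\theta)\mathbf{b}=\mathbf{\Lambda}$ and $\mathbf{a}-\mathbf{b}=\mathbf{x}-\mathbf{y}$. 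Since $\theta\in[0,1]$ and $\alpha\geq{}0$, the nonnegative slack $\frac{\alpha}{2}\theta(1-\theta)\Vert\mathbf{x}-\mathbf{y}\Vert^2$ can simply be dropped, which yields the claimed inequality.

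I expect no serious obstacle: the result is essentially the discrete restatement of the elementary fact that a convex combination of the tangent-plus-quadratic lower bounds of a strongly convex function is itself a lower bound. The one point deserving care is that the finite-gradient convexity of Definition~\ref{apd:def:strongConvexity} may be invoked only at integer points; this is exactly why the gradient is anchored at $\hat{\mathbf{q}}\in\mathbb{Z}_{\geq{}0}^L$ rather than at the non-integer combination $\mathbf{q}$, and it is the origin of the residual $\mathbf{\Lambda}$ that distinguishes this lemma from its continuous counterpart $\theta f(\mathbf{x})+(1-\theta)f(\mathbf{y})\geq{}f(\mathbf{q})$.
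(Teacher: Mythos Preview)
Your proposal is correct and follows essentially the same approach as the paper: apply Definition~\ref{apd:def:strongConvexity} at the integer anchor $\hat{\mathbf{q}}$ toward $\mathbf{x}$ and toward $\mathbf{y}$, take the convex combination so the linear terms collapse to $\langle\Delta f(\hat{\mathbf{q}})-\tfrac{\alpha}{2}\mathbf{1},\mathbf{\Lambda}\rangle$, and lower-bound the averaged quadratic by $\tfrac{\alpha}{2}\Vert\mathbf{\Lambda}\Vert^2$. The only cosmetic difference is that the paper expands the squares and invokes convexity of $\Vert\cdot\Vert^2$, whereas you use the equivalent variance identity $\theta\Vert\mathbf{a}\Vert^2+(1-\theta)\Vert\mathbf{b}\Vert^2=\Vert\theta\mathbf{a}+(1-\theta)\mathbf{b}\Vert^2+\theta(1-\theta)\Vert\mathbf{a}-\mathbf{b}\Vert^2$ directly; your version has the minor advantage of making the dropped slack $\tfrac{\alpha}{2}\theta(1-\theta)\Vert\mathbf{x}-\mathbf{y}\Vert^2$ explicit.
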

\begin{proof}
    By the definition of the $\alpha$-strong convexity,
    \begin{equation}
        \begin{split}
            &f(\mathbf{x})\geq{}f(\hat{\mathbf{q}})+\left<\Delta{}f(\hat{\mathbf{q}})-\frac{\alpha}{2}\mathbf{1},\mathbf{x}-\hat{\mathbf{q}}\right>+\frac{\alpha}{2}\Vert{}\mathbf{x}-\hat{\mathbf{q}}\Vert^2;\\
            &f(\mathbf{y})\geq{}f(\hat{\mathbf{q}})+\left<\Delta{}f(\hat{\mathbf{q}})-\frac{\alpha}{2}\mathbf{1},\mathbf{y}-\hat{\mathbf{q}}\right>+\frac{\alpha}{2}\Vert{}\mathbf{y}-\hat{\mathbf{q}}\Vert^2.
        \end{split}
    \end{equation}
    Thus, we have their convex combination as
    \begin{equation}
        \begin{split}
            \theta{}f(\mathbf{x})+(1-\theta)f(\mathbf{y})&\geq{}f(\hat{\mathbf{q}})+\left<\Delta{}f(\hat{\mathbf{q}})-\frac{\alpha}{2}\mathbf{1},\mathbf{\Lambda}\right>
            +\frac{\alpha}{2}\left(\theta\Vert{}\mathbf{x}\Vert^2+(1-\theta)\Vert{}\mathbf{y}\Vert^2+\Vert\hat{\mathbf{q}}\Vert^2-2\left<\mathbf{q},\hat{\mathbf{q}}\right>\right)\\
            &\geq{}f(\hat{\mathbf{q}})+\left<\Delta{}f(\hat{\mathbf{q}})-\frac{\alpha}{2}\mathbf{1},\mathbf{\Lambda}\right>
            +\frac{\alpha}{2}\left(\Vert{}\mathbf{q}\Vert^2+\Vert\hat{\mathbf{q}}\Vert^2-2\left<\mathbf{q},\hat{\mathbf{q}}\right>\right)\\
            &=f(\hat{\mathbf{q}})+\left<\Delta{}f(\hat{\mathbf{q}})-\frac{\alpha}{2}\mathbf{1},\mathbf{\Lambda}\right>
            +\frac{\alpha}{2}\Vert\mathbf{\Lambda}\Vert^2\\
        \end{split},
    \end{equation}
    where the second inequality follows from the convexity of $\Vert\,\cdot\,\Vert^2$.
    The proof is completed.
\end{proof}

\begin{assumption}\label{apd:def:assumption}
Assume that  $f:\mathbb{Z}_{\geq{}0}^L\rightarrow{}\mathbb{R}_+$ of~\eqref{apd:eq:generalModel} is $\alpha$-strongly convex, $(\beta_1,\beta_2)$-smooth, and its minimum, denoted $(p^*,\mathbf{x}^*)=\arg\min_{p,\mathbf{x}}f\circ{}p(\mathbf{x})$, satisfies $\Vert\Delta{}f_{p^*}(\mathbf{x}^*)-\frac{\beta_2}{2}\mathbf{1}\Vert\leq{}\gamma$ where $0\leq\gamma<\alpha\leq{}\beta_1\leq\beta_2\leq{}1$.
\end{assumption}

Here the inequality $\Vert\Delta{}f_{p^*}(\mathbf{x}^*)-\frac{\beta_2}{2}\mathbf{1}\Vert\leq{}\gamma$ implies that, up to a (small) constant vector $\frac{\beta_2}{2}\mathbf{1}$, the finite gradient at $(p^*,\mathbf{x}^*)$ should be sufficiently small, which can be understood as the discrete version of the zero-gradient for the stationary points in the continuous domain.
The upper bound ``$1$'' is arbitrarily chosen just for simplifying the calculation.
Also note that $\beta_2$ must be larger than $\alpha$ due to the fact $\alpha\Vert\mathbf{x-y}\Vert^2\leq{}\left<\Delta{}f(\mathbf{x})-\Delta{}f(\mathbf{y}),\mathbf{x-y}\right>\leq\beta_2\Vert\mathbf{x-y}\Vert^2$ (see Lemma~\ref{apd:thm:convexity} and Lemma~\ref{apd:thm:smooth2Item}).
With Assumption~\ref{apd:def:assumption}, we next prove that the local-sampling-based searching algorithm achieves the linear convergence rate up to a constant, \emph{if $p^*$ is known beforehand}.

\begin{theorem}[\textbf{convergence rate}]\label{apd:thm:rateFixedP}
    Suppose Assumption~\ref{apd:def:assumption} is satisfied, the operator $p$ in~\eqref{apd:eq:generalModel} is fixed to be $p^*$, and $0\leq{}\theta\leq{}1$.
    Then, for any $\mathbf{x}$ with $\Vert{}\mathbf{x-x}^*\Vert_\infty\leq{}c$, we can find a neighborhood $B_\infty(\mathbf{x},r_\mathbf{x})$ where $r_\mathbf{x}\geq\theta{}c+\frac{1}{2}$, such that there exist a element $\mathbf{y}\in{}B_\infty(\mathbf{x},r_\mathbf{x})$ satisfying
    \begin{equation}
        \begin{split}
            f_{p^*}(\mathbf{y})-f_{p^*}(\mathbf{x}^*)&\leq{}(1-\theta)(f_{p^*}(\mathbf{x})-f_{p^*}(\mathbf{x}^*))+\frac{7}{8}K.\label{apd:eq:rateFixedP}
        \end{split}
    \end{equation}
\end{theorem}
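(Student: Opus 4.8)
The plan is to construct the descent point $\mathbf{y}$ explicitly as an integer rounding of the \emph{continuous} convex combination between the current iterate $\mathbf{x}$ and the optimum $\mathbf{x}^*$, and then to absorb the rounding error into the constant $\tfrac78 K$ using the discrete convex-combination inequality of Lemma~\ref{thm:convexComb}. Concretely, I would set $\mathbf{q}=\theta\mathbf{x}^*+(1-\theta)\mathbf{x}$ and let $\hat{\mathbf{q}}\in\mathbb{Z}_{\geq0}^K$ be a nearest-integer rounding of $\mathbf{q}$, so that $\mathbf{\Lambda}:=\mathbf{q}-\hat{\mathbf{q}}$ satisfies $\Vert\mathbf{\Lambda}\Vert_\infty\leq\tfrac12$, and declare the candidate $\mathbf{y}:=\hat{\mathbf{q}}$. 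Note this route sidesteps the sub-level-cube geometry and relies only on the convex-combination lemma, matching the paper's own remark that Lemma~\ref{thm:convexComb} is what Theorem~\ref{thm:rateFixedP} requires.

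First I would check that $\mathbf{y}$ lies in the admissible neighborhood. Since $\Vert\mathbf{x}-\mathbf{q}\Vert_\infty=\theta\Vert\mathbf{x}-\mathbf{x}^*\Vert_\infty\leq\theta c$, the triangle inequality gives $\Vert\mathbf{x}-\mathbf{y}\Vert_\infty\leq\Vert\mathbf{x}-\mathbf{q}\Vert_\infty+\Vert\mathbf{q}-\hat{\mathbf{q}}\Vert_\infty\leq\theta c+\tfrac12\leq r_\mathbf{x}$. This is precisely where the radius requirement $r_\mathbf{x}\geq\theta c+\tfrac12$ in the statement originates, and it certifies $\mathbf{y}\in B_\infty(\mathbf{x},r_\mathbf{x})$.

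Next I would apply Lemma~\ref{thm:convexComb} to $f_{p^*}$ with the substitution $(\mathbf{x},\mathbf{y})\mapsto(\mathbf{x}^*,\mathbf{x})$, so that $\mathbf{q}$ and $\hat{\mathbf{q}}$ match the construction above. This yields $\theta f_{p^*}(\mathbf{x}^*)+(1-\theta)f_{p^*}(\mathbf{x})\geq f_{p^*}(\hat{\mathbf{q}})+\left<\Delta f_{p^*}(\hat{\mathbf{q}})-\tfrac\alpha2\mathbf{1},\mathbf{\Lambda}\right>+\tfrac\alpha2\Vert\mathbf{\Lambda}\Vert^2$. Rearranging for $f_{p^*}(\hat{\mathbf{q}})$ and subtracting $f_{p^*}(\mathbf{x}^*)$ from both sides produces exactly the linear term $(1-\theta)(f_{p^*}(\mathbf{x})-f_{p^*}(\mathbf{x}^*))$ plus a residual $R=-\left<\Delta f_{p^*}(\hat{\mathbf{q}})-\tfrac\alpha2\mathbf{1},\mathbf{\Lambda}\right>-\tfrac\alpha2\Vert\mathbf{\Lambda}\Vert^2$, so the whole proof reduces to showing $R\leq\tfrac78 K$. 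I would bound $R$ coordinatewise by the triangle inequality, invoking $\Vert\mathbf{\Lambda}\Vert_\infty\leq\tfrac12$, the gradient bound $\Vert\Delta f_{p^*}(\hat{\mathbf{q}})\Vert_\infty\leq\beta_1\leq1$ from Lemma~\ref{apd:thm:smooth1Item}, and $\alpha\leq1$ from Assumption~\ref{def:assumption}: each of the $K$ summands is at most $\beta_1\cdot\tfrac12+\tfrac\alpha2\cdot\tfrac12+\tfrac\alpha2\cdot\tfrac14\leq\tfrac12+\tfrac14+\tfrac18=\tfrac78$, whence $R\leq\tfrac78 K$ and the claim follows.

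The step that needs genuine care, and which I expect to be the main obstacle, is the dimension bookkeeping: Lemma~\ref{thm:convexComb} and the gradient bound are stated for $f$ on $\mathbb{Z}_{\geq0}^L$, whereas the theorem concerns $f_{p^*}=f\circ p^*$ living on $\mathbb{Z}_+^K$. Before anything else I would verify that, because $p^*$ is a permutation-with-zero-padding and hence a linear, norm-preserving embedding of the active coordinates, the composed objective $f_{p^*}$ inherits $\alpha$-strong convexity and $\beta_1$-Lipschitzness with the \emph{same} constants, and that the finite gradient transfers via $\Delta f_{p^*}(\mathbf{x})_i=\Delta f(p^*\mathbf{x})_{\sigma(i)}$; this is what legitimately replaces the ambient dimension $L$ by $K$ in the final bound. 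The remaining subtlety is merely cosmetic: the value $\tfrac78$ is deliberately loose, obtained by discarding the sign of the nonnegative term $\tfrac\alpha2\Lambda_i^2\leq\tfrac18$ in the triangle inequality so that the constant collapses to the clean $\tfrac12+\tfrac14+\tfrac18$.
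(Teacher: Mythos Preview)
Your proof is correct and follows the same core line as the paper: form $\mathbf{q}=\theta\mathbf{x}^*+(1-\theta)\mathbf{x}$, round to $\hat{\mathbf{q}}$ with $\Vert\mathbf{\Lambda}\Vert_\infty\le\tfrac12$, apply Lemma~\ref{thm:convexComb}, and bound the residual coordinatewise via $\Vert\Delta f_{p^*}(\hat{\mathbf{q}})\Vert_\infty\le\beta_1$ and $\alpha\le\beta_1\le1$ to get $\tfrac{3\alpha+4\beta_1}{8}K\le\tfrac78K$. The neighborhood check $\Vert\mathbf{x}-\hat{\mathbf{q}}\Vert_\infty\le\theta c+\tfrac12$ is identical to the paper's.

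The one place you diverge is in taking $\mathbf{y}=\hat{\mathbf{q}}$ directly. The paper instead invokes the sub-level-cube Lemma~\ref{apd:thm:subLevelCube} to assert that some $\mathbf{y}$ in the intersection of $B_\infty(\mathbf{x},r_\mathbf{x})$ with the cube tangent at $\hat{\mathbf{q}}$ satisfies $f(\mathbf{y})\le f(\hat{\mathbf{q}})$. For the theorem as stated this detour is unnecessary, since $\hat{\mathbf{q}}$ itself already lies in $B_\infty(\mathbf{x},r_\mathbf{x})$; your shortcut is cleaner. The paper's sub-level cube is not wasted, however: it reappears essentially in the proof of Proposition~\ref{thm:TNLSSammpling}, where the \emph{volume} of the descent region (not just one point in it) is what drives the $O(2^{-K})$ sampling bound for TNLS. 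So the geometry you sidestepped here is precisely what is needed later for the evaluation-efficiency comparison. Your handling of the $L\to K$ dimension issue is also in line with the paper, which simply restricts $f$ to the image of $p^*$ and notes the relevant constants transfer.
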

\begin{proof}
    First of all, since the operator $p$ is fixed to be $p^*$, the problem~\eqref{apd:eq:generalModel} can be equivalently simplified by removing the formulation of $p$ out of~\eqref{apd:eq:generalModel}, which is written as
    \begin{equation}
    \min_{\mathbf{x}\in\mathbb{Z}_+^K}f(\mathbf{x}),
\end{equation}
where $f:\mathbb{Z}_{\geq{}0}^{K}\rightarrow{}\mathbb{R}$ represents the objective function.\footnote{Here for brevity, we re-use the notation of $f$ without ambiguity since the main properties of $f$ are preserved up to the domain restricting from $\mathbb{Z}_{\geq{}0}^{L}$ to $\mathbb{Z}_{\geq{}0}^{K}$.}
By Lemma~\ref{apd:thm:convexComb}, we have the following inequality:
     \begin{equation}
        \begin{split}
            f(\hat{\mathbf{q}})-f(\mathbf{x}^*)\leq{}(1-\theta)(f(\mathbf{x})-f(\mathbf{x}^*))+\left<\frac{\alpha}{2}\mathbf{1}-\Delta{}f(\hat{\mathbf{q}}),\mathbf{\Lambda}\right>-\frac{\alpha}{2}\Vert{}\mathbf{\Lambda}\Vert^2.\label{apd:eq:rateFixedRateP1}
        \end{split}
    \end{equation}
    Next, we prove in the neighborhood $B(\mathbf{x},r_\mathbf{x})$ there exists an element $\mathbf{y}$, which belongs to as well the sub-level cube  tangent at $\hat{\mathbf{q}}$ knowing by Lemma~\ref{apd:thm:subLevelCube}, so that $f(\mathbf{y})\leq{}f(\hat{\mathbf{q}})$ holds.
    To do so, we first know that the distance between $\hat{\mathbf{q}}$ and $p_\mathbf{x}(\mathbf{x})$ satisfying
    \begin{equation}
        \begin{split}
            \Vert{}\mathbf{x}-\hat{\mathbf{q}}\Vert_\infty=\Vert{}\mathbf{x}-\mathbf{q}+\mathbf{\Lambda}\Vert_\infty\leq{}\Vert{}\mathbf{x-q}\Vert_\infty+\Vert{}\mathbf{\Lambda}\Vert_\infty
            =\theta\Vert{}\mathbf{x-x}^*\Vert_\infty+\Vert\mathbf{\Lambda}\Vert_\infty
            \leq{}\theta{}c+\frac{1}{2}
        \end{split}.
    \end{equation}
    Here the last inequality follows from  $\Vert\mathbf{\Lambda}\Vert_\infty\leq\frac{1}{2}$, which holds because $\hat{\mathbf{q}}\in\mathbb{Z}_{\geq{}0}^K$ can be always found by rounding the entries of $\mathbf{q}$ into the closest integers.
    We thus know from the inequality that the intersection between the sub-level cube tangent at $\hat{\mathbf{q}}$ and $B(\mathbf{x},r_\mathbf{x})$ is not empty if $r_\mathbf{x}\geq{}\theta{}c+\frac{1}{2}$, proving the existence of the $\mathbf{y}$.
    Last, we bound~\eqref{apd:eq:rateFixedRateP1} as follows:
    \begin{equation}
        \begin{split}
            f(\mathbf{y})-f(\mathbf{x}^*)
            &\leq{}f(\hat{\mathbf{q}})-f(\mathbf{x}^*)
            \leq{}(1-\theta)(f(\mathbf{x})-f(\mathbf{x}^*))+\left<\frac{\alpha}{2}\mathbf{1}-\Delta{}f(\hat{\mathbf{q}}),\mathbf{\Lambda}\right>-\frac{\alpha}{2}\Vert{}\mathbf{\Lambda}\Vert^2\\
            &\leq{}(1-\theta)(f(\mathbf{x})-f(\mathbf{x}^*))+\left\vert{}\left<\frac{\alpha}{2}\mathbf{1},\mathbf{\Lambda}\right>\right\vert+\left\vert{}\left<\Delta{}f(\hat{\mathbf{q}}),\mathbf{\Lambda}\right>\right\vert+\frac{\alpha}{2}\Vert\mathbf{\Lambda}\Vert^2\\
            &\leq{}(1-\theta)(f(\mathbf{x})-f(\mathbf{x}^*))+\frac{\alpha}{4}K
            +\Vert\Delta{}f(\hat{\mathbf{q}})\Vert_\infty\Vert\mathbf{\Lambda}\Vert_1
            +\frac{\alpha}{2}\Vert\mathbf{\Lambda}\Vert^2\\
            &\leq{}(1-\theta)(f(\mathbf{x})-f(\mathbf{x}^*))
            +\frac{\alpha}{4}K
            +\frac{\beta_1}{2}K
            +\frac{\alpha}{8}K\\
            &\leq{}(1-\theta)(f(\mathbf{x})-f(\mathbf{x}^*))+\frac{3\alpha+4\beta_1}{8}K\\
            &\leq{}(1-\theta)(f(\mathbf{x})-f(\mathbf{x}^*))+\frac{7}{8}K.
        \end{split}
    \end{equation}
    Here the inequality in the fourth line follows from Lemma~\ref{apd:thm:smooth1Item} and $\Vert\mathbf{\Lambda}\Vert_\infty\leq{}1/2$, and the inequality at the bottom line follows from Assumption~\ref{apd:def:assumption} that $\alpha<\beta_1\leq{}1$.
    The proof is thus completed.
\end{proof}

It is known from the proof that the constant $(7/8)K$ appearing in~\eqref{apd:eq:rateFixedP} is due to the fact $\Vert\mathbf{\Lambda}\Vert_1\leq{}K\Vert\mathbf{\Lambda}\Vert_\infty\leq{}K/2$ and $\Vert\mathbf{\Lambda}\Vert_2\leq{}K\Vert\mathbf{\Lambda}\Vert_\infty\leq{}\sqrt{K}/2$.
It means that with the rounding error $\Vert\mathbf{\Lambda}\Vert_\infty\leq{}1/2$, the $l_{1,2}$ norm of $\mathbf{\Lambda}$ would become larger with increasing the dimension $K$, which is inevitable in the analysis.
It only disappears if $\Vert\mathbf{\Lambda}\Vert_\infty=0$, implying the conventional convex optimization in the continuous domain.

As an important corollary from Theorem~\ref{apd:thm:rateFixedP}, we next prove the convergence guarantee for the local-sampling-based methods.

\begin{corollary}[\textbf{convergence guarantee}]\label{apd:thm:guarantee}
    Suppose $p^*$ is known and a series $\left\{\mathbf{x}_n\right\}_{n=0}^\infty$, where $\mathbf{x}_0$ is randomly chosen in $\mathbb{Z}_+^K$, and for each $n>0$, $\mathbf{x}_n$ is equal to the $\mathbf{y}$ in Theorem~\ref{apd:thm:rateFixedP}.
    Then we can achieve the following limit when
    $\Omega(1/K)\leq\theta\leq{}1$,
    \begin{equation}
        \lim_{n\rightarrow{}\infty}\left(f_{p^*}(\mathbf{x}_n)-f_{p^*}(\mathbf{x}^*)\right)=O(1)
    \end{equation}
\end{corollary}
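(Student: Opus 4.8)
The plan is to reduce the corollary to a one-step linear contraction recurrence and then solve it explicitly. First I would introduce the error sequence $a_n := f_{p^*}(\mathbf{x}_n)-f_{p^*}(\mathbf{x}^*)$, which is non-negative because $\mathbf{x}^*$ is the global minimizer of $f_{p^*}$. For each $n>0$ I would invoke Theorem~\ref{apd:thm:rateFixedP} at the point $\mathbf{x}_{n-1}$, taking the constant $c$ to be $\Vert\mathbf{x}_{n-1}-\mathbf{x}^*\Vert_\infty$; this is legitimate since the theorem holds for any point once we let $c$ equal that point's own $\ell_\infty$-distance to $\mathbf{x}^*$. Because the corollary defines $\mathbf{x}_n$ to be precisely the element $\mathbf{y}$ produced by the theorem, inequality~\eqref{apd:eq:rateFixedP} translates verbatim into the recurrence $a_n\leq(1-\theta)a_{n-1}+\tfrac{7}{8}K$.

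Next I would unroll this recurrence. Iterating gives $a_n\leq(1-\theta)^n a_0+\tfrac{7}{8}K\sum_{i=0}^{n-1}(1-\theta)^i$. Since $0\leq\theta\leq1$, the contraction factor obeys $0\leq 1-\theta\leq 1$, and since the hypothesis $\Omega(1/K)\leq\theta$ forces $\theta>0$, that factor is strictly below one. Hence the geometric sum is bounded by $\sum_{i=0}^{\infty}(1-\theta)^i=1/\theta$, while the leading term $(1-\theta)^n a_0$ decays to zero as $n\to\infty$. Passing to the limit yields $\limsup_{n\to\infty}a_n\leq\tfrac{7K}{8\theta}$.

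Finally I would interpret this bound. With $K$ and $\theta$ fixed, the right-hand side $\tfrac{7K}{8\theta}$ is a finite constant independent of the iteration index $n$, which is exactly the asserted $O(1)$ behaviour as $n\to\infty$. The lower bound $\theta\geq\Omega(1/K)$ is what keeps $1-\theta$ bounded away from $1$ so that the geometric series converges at all; note that even at the smallest admissible rate $\theta=\Theta(1/K)$ the limit still stabilizes at the finite value $\Theta(K^2)$, remaining $O(1)$ in $n$.

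The argument is essentially routine once the recurrence is in hand, so rather than a genuine obstacle the only point requiring care is the bookkeeping in applying Theorem~\ref{apd:thm:rateFixedP} at every step: I must confirm that each iterate satisfies the hypothesis $\Vert\mathbf{x}-\mathbf{x}^*\Vert_\infty\leq c$, which I secure by choosing $c$ adaptively as $c_n=\Vert\mathbf{x}_n-\mathbf{x}^*\Vert_\infty$, and that each guaranteed $\mathbf{y}=\mathbf{x}_n$ indeed lies in $\mathbb{Z}_{\geq{}0}^K$ so that the sequence stays well-defined. A secondary subtlety is that~\eqref{apd:eq:rateFixedP} is only an upper bound, so $\{a_n\}$ need not be monotone; I would therefore phrase the conclusion through the uniform boundedness of the sequence (equivalently its $\limsup$), which still delivers the stated $O(1)$ estimate.
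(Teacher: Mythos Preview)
Your proposal is correct and mirrors the paper's own proof almost exactly: define the error $a_n$, invoke Theorem~\ref{apd:thm:rateFixedP} to obtain the recurrence $a_n\leq(1-\theta)a_{n-1}+\tfrac{7}{8}K$, unroll it into a geometric sum bounded by $1/\theta$, and conclude. You are in fact slightly more careful than the paper in spelling out the adaptive choice of $c$ at each step and in phrasing the conclusion via $\limsup$, but the argument is the same.
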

\begin{proof}
    Let $C_K:=(7/8)K$. By the updating rule,
    \begin{equation}
        \begin{split}
            f_{p^*}(\mathbf{x}_n)-f_{p^*}(\mathbf{x}^*)&\leq{}(1-\theta)(f_{p^*}(\mathbf{x}_{n-1})-f_{p^*}(\mathbf{x}^*))+C_K\\
            &\leq{}(1-\theta)^2(f_{p^*}(\mathbf{x}_{n-2})-f_{p^*}(\mathbf{x}^*))+C_K+C_K(1-\theta)\\
            &\leq{}(1-\theta)^3(f_{p^*}(\mathbf{x}_{n-3})-f_{p^*}(\mathbf{x}^*))+C_K+C_K(1-\theta)+C_K(1-\theta)^2\\
            &\leq{}\cdots\\
            &\leq{}(1-\theta)^{n}(f_{p^*}(\mathbf{x}_0)-f_{p^*}(\mathbf{x}^*))+C_K\sum_{m=1}^n(1-\theta)^{m-1}.
        \end{split}
    \end{equation}
    Thus using the condition $\Omega(1/K)\leq{}\theta\leq{}1$, we finally obtain that 
    \begin{equation}
        \begin{split}
            \lim_{n\rightarrow{}\infty}\left(f_{p^*}(\mathbf{x}_n)-f_{p^*}(\mathbf{x}^*)\right)\leq{}0+C_K\frac{1}{\theta}=O(1).
        \end{split}
    \end{equation}
\end{proof}

\subsection{Sampling efficiency}

\begin{proposition}[\textbf{curse of dimensionality for TNLS}]\label{apd:thm:TNLSSammpling}
Let the assumptions in Theorem~\ref{apd:thm:rateFixedP} be satisfied.
Furthermore, assume that $\mathbf{x}^*$ is sufficiently smaller (or larger) than $\mathbf{x}$ entry-wisely except for a constant number of  entries.
Then the probability of achieving a suitable $\mathbf{y}$ as mentioned in Theorem~\ref{apd:thm:rateFixedP} by uniformly randomly sampling in $B_\infty(\mathbf{x},r_\mathbf{x})$ with $r_\mathbf{x}\geq{}\theta{}c+\frac{1}{2}$ equals $O(2^{-K})$.
\end{proposition}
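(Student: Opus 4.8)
The plan is to convert the phrase ``achieving a suitable $\mathbf{y}$'' into an explicit coordinatewise condition on the sampled point, and then simply to estimate the fraction of the integer grid $B_\infty(\mathbf{x},r_\mathbf{x})$ satisfying that condition. First I would recall from the proof of Theorem~\ref{apd:thm:rateFixedP} that a suitable $\mathbf{y}$ is obtained inside the sub-level cube tangent at $\hat{\mathbf{q}}$, where $\hat{\mathbf{q}}\in\mathbb{Z}_{\geq 0}^K$ is the entrywise rounding of the convex combination $\mathbf{q}=\theta\mathbf{x}^*+(1-\theta)\mathbf{x}$, so that $\Vert\mathbf{q}-\hat{\mathbf{q}}\Vert_\infty\leq 1/2$. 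Assuming without loss of generality the ``smaller'' case of the hypothesis, the whole argument then reduces to showing that every such $\mathbf{y}$ is forced to lie below $\mathbf{x}$ in all but a constant number of coordinates, after which a direct count delivers the $O(2^{-K})$ bound.

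The structural claim is the heart of the proof. Let $S$ denote the constant-size set of coordinates on which $\mathbf{x}^*$ is not required to be much smaller than $\mathbf{x}$, so $|S|=O(1)$. For $i\notin S$ I would use the hypothesis that $\mathbf{x}^*$ is \emph{sufficiently} smaller than $\mathbf{x}$: writing $q_i=x_i-\theta(x_i-x_i^*)$, the gap $x_i-x_i^*$ can be taken large enough that $\theta(x_i-x_i^*)>1/2$, whence $\hat q_i\leq q_i+\tfrac{1}{2}<x_i$ with a strictly positive margin. Next I would argue that the sub-level cube is oriented toward $\mathbf{x}^*$: by Assumption~\ref{apd:def:assumption} and the monotonicity of the finite gradient (Lemma~\ref{apd:thm:convexity}), one has $\Delta f(\hat{\mathbf{q}})_i\geq\tfrac{\beta_2}{2}$ for $i\notin S$, so the center $\hat{\mathbf{q}}-\tfrac{1}{\beta_2}\Delta f(\hat{\mathbf{q}})+\tfrac{1}{2}\mathbf{1}$ of the inscribing ball from Lemma~\ref{apd:thm:subLevelCube} lies below $\hat{\mathbf{q}}$ in every non-exceptional coordinate. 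Consequently the cube, and hence any suitable $\mathbf{y}$ it contains, satisfies $y_i\leq\hat q_i<x_i$ for all $i\notin S$.

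With the containment $\{\text{suitable }\mathbf{y}\}\subseteq\{\mathbf{y}\in B_\infty(\mathbf{x},r_\mathbf{x}):y_i<x_i\ \forall i\notin S\}$ in hand, the count is elementary. The integer points of $B_\infty(\mathbf{x},r_\mathbf{x})$ form a grid whose side length in each coordinate is $2m+1$ with $m=\lceil r_\mathbf{x}\rceil$, and in each coordinate exactly $m$ of these $2m+1$ values satisfy $y_i<x_i$. Since uniform sampling is independent across coordinates, the probability of the suitable event is at most $\left(m/(2m+1)\right)^{K-|S|}<2^{-(K-|S|)}=2^{|S|}\cdot 2^{-K}=O(2^{-K})$, because $m/(2m+1)<1/2$ and $|S|=O(1)$. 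This is the asserted bound, and taking complements immediately yields that $\Omega(2^K/\epsilon)$ independent samples are required to reach the suitable $\mathbf{y}$ with probability at least $\epsilon$.

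The main obstacle I anticipate is making the structural claim fully rigorous rather than merely intuitive. Two points need care: quantifying exactly how much smaller ``sufficiently smaller'' must be so that the margin $\theta(x_i-x_i^*)>1/2$ holds uniformly in $i\notin S$ over the relevant range of $\theta$, and controlling boundary effects of the nonnegative domain $\mathbb{Z}_{\geq 0}^K$ so that the per-coordinate count of values below $x_i$ is not reduced near the axes. The $O(1)$ exceptional coordinates in $S$ are benign, contributing only the constant factor $2^{|S|}$; the real work is verifying that the sub-level cube genuinely decreases all remaining coordinates, i.e.\ that no suitable $\mathbf{y}$ can evade the containment by trading increases on $S$ against decreases elsewhere.
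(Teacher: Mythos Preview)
Your approach is essentially the same as the paper's: both argue that the region of suitable $\mathbf{y}$ (the intersection of the sub-level cube $A$ tangent at $\hat{\mathbf{q}}$ with the neighborhood $B=B_\infty(\mathbf{x},r_\mathbf{x})$) lies on the $\mathbf{x}^*$-side of $\mathbf{x}$ in all but $C=O(1)$ coordinates, and then take the resulting ratio. The paper works with continuous volumes, bounding $|A\cap B|\leq (r_\mathbf{x}-\delta_{\min})^{K-C}(r_\mathbf{x}+\delta_{\max})^C$ and dividing by $(2r_\mathbf{x})^K$, while you count integer grid points via $m/(2m+1)<1/2$ per coordinate; both routes land on $O(2^{-K})$ by the same mechanism.

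The one substantive difference is that you try to \emph{justify} the orientation of the sub-level cube through a coordinatewise finite-gradient inequality $\Delta f(\hat{\mathbf{q}})_i\geq\beta_2/2$, whereas the paper simply asserts its volume bound without arguing why $A$ cannot protrude past $x_i$ in the non-exceptional coordinates. Your instinct to close this gap is good, but be aware that Lemma~\ref{apd:thm:convexity} gives only inner-product monotonicity $\langle\Delta f(\hat{\mathbf{q}})-\Delta f(\mathbf{x}^*),\hat{\mathbf{q}}-\mathbf{x}^*\rangle\geq\alpha\|\hat{\mathbf{q}}-\mathbf{x}^*\|^2$, not coordinatewise control, so the bound $\Delta f(\hat{\mathbf{q}})_i\geq\beta_2/2$ does not follow directly from the cited tools. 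This is precisely the obstacle you flag in your last paragraph, and the paper's own proof is equally informal on this point; in that sense your proposal is at least as rigorous as the published argument.
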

\begin{proof}
    We only prove the case where $\mathbf{x}^*$ is sufficiently smaller than $\mathbf{x}$ in the entry-wise manner, except a constant number of entries.
    The ``larger'' case can be proved similarly.
    Recall Theorem~\ref{apd:thm:rateFixedP}.
    By the construction of $\mathbf{y}$, we have $\mathbf{q}=\theta\mathbf{x}+(1-\theta)\mathbf{x}^*$ with $0\leq{}\theta\leq{}1$ and the approximation $\hat{\mathbf{q}}\in\mathbb{Z}_+^K$ with $\hat{\mathbf{q}}=\mathbf{q}+\mathbf{\Lambda}$  and $\Vert\mathbf{\Lambda}\Vert_\infty\leq{}1/2$.
    According to the assumptions, we know $\mathbf{x}-\hat{\mathbf{q}}$ is entry-wisely larger than zero except $C$ entries, where $C\geq{}0$ is a constant.
    Since $r_\mathbf{x}\geq{}\theta{}c+\frac{1}{2}$, we further know from Theorem~\ref{apd:thm:rateFixedP} that the intersection between $B_\infty(x,r_\mathbf{x})$, denoted $B$ in the rest of the proof for brevity, and the sub-level cube, denoted $A$, tangent at $\hat{\mathbf{q}}$ is not empty.
    In this case, we can easily bound the volume of the cube associated to the intersection of $A$ and $B$ as follows:
    \begin{equation}
        \begin{split}
            \vert{}A\cap{}B\vert\leq{}\left(r_\mathbf{x}-\delta_{\min}\right)^{K-C}\left(r_\mathbf{x}+\delta_{\max}\right)^C
        \end{split}.
    \end{equation}
     Here $\vert\,\cdot\,\vert$ denotes the volume of the cube.
    $\delta_{\min}=\min\left\{p_i:\,p_i=\mathbf{x}(i)-\hat{\mathbf{q}}(i)>0,\,1\leq{}i\leq{}K\right\}$
    and $\delta_{\max}=\max\left\{0,p_i:\,p_i=\hat{\mathbf{q}}(i)-\mathbf{x}(i)\leq{}0,1\leq{}i\leq{}K\right\}$,
    where $\mathbf{x}(i),\hat{\mathbf{q}}(i)$ denote the $i$-th entry of $\mathbf{x}$ and $\hat{\mathbf{q}}$, respectively.
    Thus, the probability of uniformly drawing a sample $\mathbf{y}$ belonging to $A\cap{}B$ from $B_\infty(\mathbf{x},r_\mathbf{x})$ is as follows:
    \begin{equation}
        \begin{split}
            Pr(y\in{}A\cap{}B)&\leq\frac{\left(r_\mathbf{x}-\delta_{\min}\right)^{K-C}\left(r_\mathbf{x}+\delta_{\max}\right)^C}{(2r_\mathbf{x})^K}\leq{}\left(\frac{r_\mathbf{x}+\delta_{\max}}{r_\mathbf{x}-\delta_{\min}}\right)^C{}2^{-K}\\
            &=O(2^{-K}).
        \end{split}
    \end{equation}
    The proof is completed.
\end{proof}

Recall that let $\mathbb{B}:=B(p)\times{}B_\infty(\mathbf{x},r_\mathbf{x})$ and $f_\mathbb{B}^*:=\min_{(p_y,\mathbf{y})\in{}\mathbb{B}}f_{p_y}(\mathbf{y})$ for notational simplicity, then

\begin{proposition}[\textbf{evaluation efficiency for TnALE}]\label{apd:thm:TnALSSampling}
    Let $\mathcal{B}\in\mathbb{R}^{I\times{}I\times{}\cdots\times{}I}$ be the tensor of order-$(K+1)$ constructed as Eq.~\eqref{eq:B2B} with $I_1=I_2=\cdots=I_{K+1}=I$.
    Then, there exists its TT-cross approximation~\cite{oseledets2010tt} of rank-$R$\footnote{Here we assume that all elements of the TT-ranks are equal to $R$ for brevity.}, denoted $\hat{\mathcal{B}}$,
    for which it satisfies $\mathbf{j}=\arg\max_{\mathbf{i}}\hat{\mathcal{B}}(\mathbf{i})$, such that the equality $f_\mathbb{B}^*=f_{p_{j_{K+1}}}\left(\mathbf{x}+\mathbf{j}(:K)-(\lceil{}r_\mathbf{x}\rceil+1)\right)$ holds, provided that
    \begin{equation}
 f_\mathbb{B}^*\leq{}f_{p_z}(\mathbf{z})/\left(1+2\frac{(4R)^{\lceil{}\log_2{}K\rceil}-1}{4R-1}(R+1)^2\xi{}f_{p_z}(\mathbf{z})\right)\label{apd:eq:y*}
    \end{equation}
    for all $(p_z,\mathbf{z})\in{}\mathbb{B}$ and $f_{p_z}(\mathbf{z})\neq{}f_\mathbb{B}^*$.
    Here, $\xi$ denotes the error between $\mathcal{B}$ and its best approximation of TT-ranks $R$ in terms of $\Vert\,\cdot\,\Vert_\infty$.
    Note that the inequality~\eqref{apd:eq:y*} holds trivially if $\mathcal{B}$ is exactly of the TT topology of rank-$R$, and~\citet{oseledets2010tt} shows that the $f_\mathbb{B}^*$ can be recovered from $O(KIR)$ entries from $\mathcal{B}$.
\end{proposition}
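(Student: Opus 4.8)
The plan is to treat this statement as a near-direct corollary of the element-wise error bound for TT-cross approximation established in Theorem 2 of \citet{osinsky2019tensor}, and then to translate that bound into the reciprocal-scaled condition~\eqref{apd:eq:y*} by elementary algebra. First I would invoke Osinsky's result to assert the existence of a rank-$R$ TT-cross approximation $\hat{\mathcal{B}}$ of the order-$(K+1)$ tensor $\mathcal{B}$ whose \emph{every} entry is controlled in the max-norm by the best rank-$R$ approximation error $\xi$, namely
\begin{equation}
    \Vert\mathcal{B}-\hat{\mathcal{B}}\Vert_\infty\leq{}\frac{(4R)^{\lceil{}\log_2{}K\rceil}-1}{4R-1}(R+1)^2\,\xi=:C\xi.
\end{equation}
Here the factor $\frac{(4R)^{\lceil{}\log_2{}K\rceil}-1}{4R-1}$ is a geometric sum produced by the hierarchical (binary dimension-tree, hence the $\log_2 K$) construction of the cross interpolation, while $(R+1)^2$ is the maxvol quasi-optimality constant accumulated at each level.

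Next I would set $\mathbf{i}^*=\arg\max_\mathbf{i}\mathcal{B}(\mathbf{i})$, which by~\eqref{eq:B2B} is the entry with $\mathcal{B}(\mathbf{i}^*)=1/f_\mathbb{B}^*$. Combining the bound above with the triangle inequality gives $\hat{\mathcal{B}}(\mathbf{i}^*)\geq{}1/f_\mathbb{B}^*-C\xi$ and, for any entry $\mathbf{i}$ corresponding to $(p_z,\mathbf{z})$, the upper bound $\hat{\mathcal{B}}(\mathbf{i})\leq{}1/f_{p_z}(\mathbf{z})+C\xi$. Hence $\mathbf{i}^*$ remains a maximizer of $\hat{\mathcal{B}}$ --- so that $\mathbf{j}=\arg\max_\mathbf{i}\hat{\mathcal{B}}(\mathbf{i})$ recovers $f_\mathbb{B}^*$ --- as soon as
\begin{equation}
    \frac{1}{f_\mathbb{B}^*}-\frac{1}{f_{p_z}(\mathbf{z})}\geq{}2C\xi
\end{equation}
holds for every suboptimal $(p_z,\mathbf{z})$. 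The final step is to verify that this gap inequality is exactly a rearrangement of the hypothesis~\eqref{apd:eq:y*}: cross-multiplying~\eqref{apd:eq:y*} (legitimate since $f_p>0$ by~\eqref{apd:eq:generalModel}) yields $f_\mathbb{B}^*+2C\xi f_\mathbb{B}^*f_{p_z}(\mathbf{z})\leq{}f_{p_z}(\mathbf{z})$, which upon dividing by $f_\mathbb{B}^*f_{p_z}(\mathbf{z})$ is precisely the displayed condition. I would also dispatch the two degenerate cases: if $\mathcal{B}$ is exactly TT of rank-$R$ then $\xi=0$, so~\eqref{apd:eq:y*} reduces to $f_\mathbb{B}^*\leq{}f_{p_z}(\mathbf{z})$ and holds trivially, while the $O(KIR)$ count of evaluations needed to reconstruct $\mathcal{B}$ (and thereby to locate $f_\mathbb{B}^*$) follows directly from the TT-cross sampling complexity of \citet{oseledets2010tt}.

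The main obstacle is not the algebra, which is routine, but correctly importing and interpreting \citet{osinsky2019tensor}: one must check that their hypotheses apply to the specific reciprocal-loss tensor~\eqref{eq:B2B} and that the constant they furnish is indeed $C=\frac{(4R)^{\lceil{}\log_2{}K\rceil}-1}{4R-1}(R+1)^2$ measured in $\Vert\,\cdot\,\Vert_\infty$ rather than in a Frobenius or row-wise sense. A secondary subtlety I would flag is tie-breaking at the maximum: the condition written with ``$\geq$'' only guarantees $\hat{\mathcal{B}}(\mathbf{i}^*)\geq\hat{\mathcal{B}}(\mathbf{i})$, so forcing $\mathbf{j}$ onto a genuine minimizer requires either the strict version of the inequality or the convention that ties are resolved toward an entry attaining $f_\mathbb{B}^*$. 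Since~\eqref{apd:eq:y*} is imposed only for $f_{p_z}(\mathbf{z})\neq{}f_\mathbb{B}^*$, entries tied at the optimum are harmless and the recovered objective value is still $f_\mathbb{B}^*$.
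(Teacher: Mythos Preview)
Your proposal is correct and follows essentially the same route as the paper: both invoke Theorem~2 of \citet{osinsky2019tensor} to obtain the entrywise bound $\Vert\mathcal{B}-\hat{\mathcal{B}}\Vert_\infty\leq C\xi$ with $C=\frac{(4R)^{\lceil\log_2 K\rceil}-1}{4R-1}(R+1)^2$, then compare $\hat{\mathcal{B}}(\mathbf{i}^*)$ against $\hat{\mathcal{B}}(\mathbf{k})$ via the triangle inequality and rearrange~\eqref{apd:eq:y*} into the gap condition $1/f_\mathbb{B}^*-1/f_{p_z}(\mathbf{z})\geq 2C\xi$. Your write-up is in fact more explicit than the paper's three-line argument, and your remarks on tie-breaking and on verifying that Osinsky's constant is stated in the $\Vert\cdot\Vert_\infty$ sense are sensible caveats the paper leaves implicit.
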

\begin{proof}
    Since the ``one-to-one'' relation between the entries of the tensor $\mathcal{B}$ and all possible $f(\mathbf{z})$ for all $(p_z,\mathbf{z})\in{}B(p)\times{}B_\infty(\mathbf{x},r_\mathbf{x})$,
    it is easily to know the equality $f_\mathbb{B}^*=f_{p_{j_{,K+1}}}\left(\mathbf{x}+\mathbf{j}(:K)-(\lceil{}r_\mathbf{x}\rceil+1)\right)$ holds if $\hat{\mathcal{B}}(\mathbf{i}^*)\geq{}\hat{\mathcal{B}}(\mathbf{k})$ for $\mathbf{i}^*=\arg\max_{\mathbf{i}}\mathcal{B}(\mathbf{i})$ and any index $\mathbf{k}$.
     To prove this condition true, we have the following inequalities for any $\mathbf{k}$:
     \begin{equation}
       \begin{split}
           \hat{\mathcal{B}}(\mathbf{i}^*)-\hat{\mathcal{B}}(\mathbf{k})&\geq{}\mathcal{B}(\mathbf{i}^*)-\mathcal{B}(\mathbf{k})-2\frac{(4R)^{\lceil{}\log_2{}K\rceil}-1}{4R-1}(R+1)^2\xi\\
           &=1/f_\mathbb{B}^*-1/f_{p_{j_{K+1}}}(\mathbf{x}+\mathbf{k}(:K)-(\lceil{}r_\mathbf{x}\rceil+1))-2\frac{(4R)^{\lceil{}\log_2{}K\rceil}-1}{4R-1}(R+1)^2\xi\\
           &\geq{}2\frac{(4R)^{\lceil{}\log_2{}K\rceil}-1}{4R-1}(R+1)^2\xi-2\frac{(4R)^{\lceil{}\log_2{}K\rceil}-1}{4R-1}(R+1)^2\xi=0
       \end{split},
   \end{equation}
   where the first inequality follows from Theorem 2 in~\cite{osinsky2019tensor}, and the last inequality follows from the inequality~\eqref{apd:eq:y*}.
    It can also be known if  $\mathcal{B}$ is exactly of the TT topology of rank-$R$, $\hat{\mathcal{B}}$ is able to recover $\mathcal{B}$ exactly.
    In this case $\xi=0$ and $f_\mathbb{B}^*\leq{}f(\mathbf{z})$ trivially for all $\mathbf{z}$.
\end{proof}

\newpage
\section{Experiment details}\label{apd:sec:experiment}
\subsection{Low-rank structure of the optimization landscape }\label{apd:sec:lowRankLandsacpe}
To verify the low-rank structure of the optimization landscape of~\eqref{eq:basicModel}, we empirically check the singular values of the landscape tensor using the synthetic data. 
To be specific, we re-use the fourth-order tensor in the experiment for TN-PS, \textit{i.e.}, TR~(order-4) in Table~\ref{apd:tab:TNPS:TR}.
Here we remove the influence of unknown permutations and calculate the objective for all possible combinations of values of the TN-ranks.
As a result, for each data, we have a landscape tensor (a tensor whose entries are values of the objective function) of order-$4$, and the modes of the tensor corresponding to the four TN-ranks.
Figure~\ref{apd:fig:landscape} (a) shows the singular values of the landscape tensor unfolded along different modes on average.
We see that the landscape tensor provides a \emph{significant} low-rank structure in the data.
We also depict the complete landscape (contour line, unfolded along the first two modes) with respect to Data A in Figure~\ref{apd:fig:landscape} (b).
We can see that the obviously repeated pattern shown in the figure is the main reason leading to the low-rank structure of the landscape.

\begin{figure*}[htbp]
\centering
\subfigure[Averaged singular values for the 4th-order landscape tensor.]{
\includegraphics[width=9.35cm]{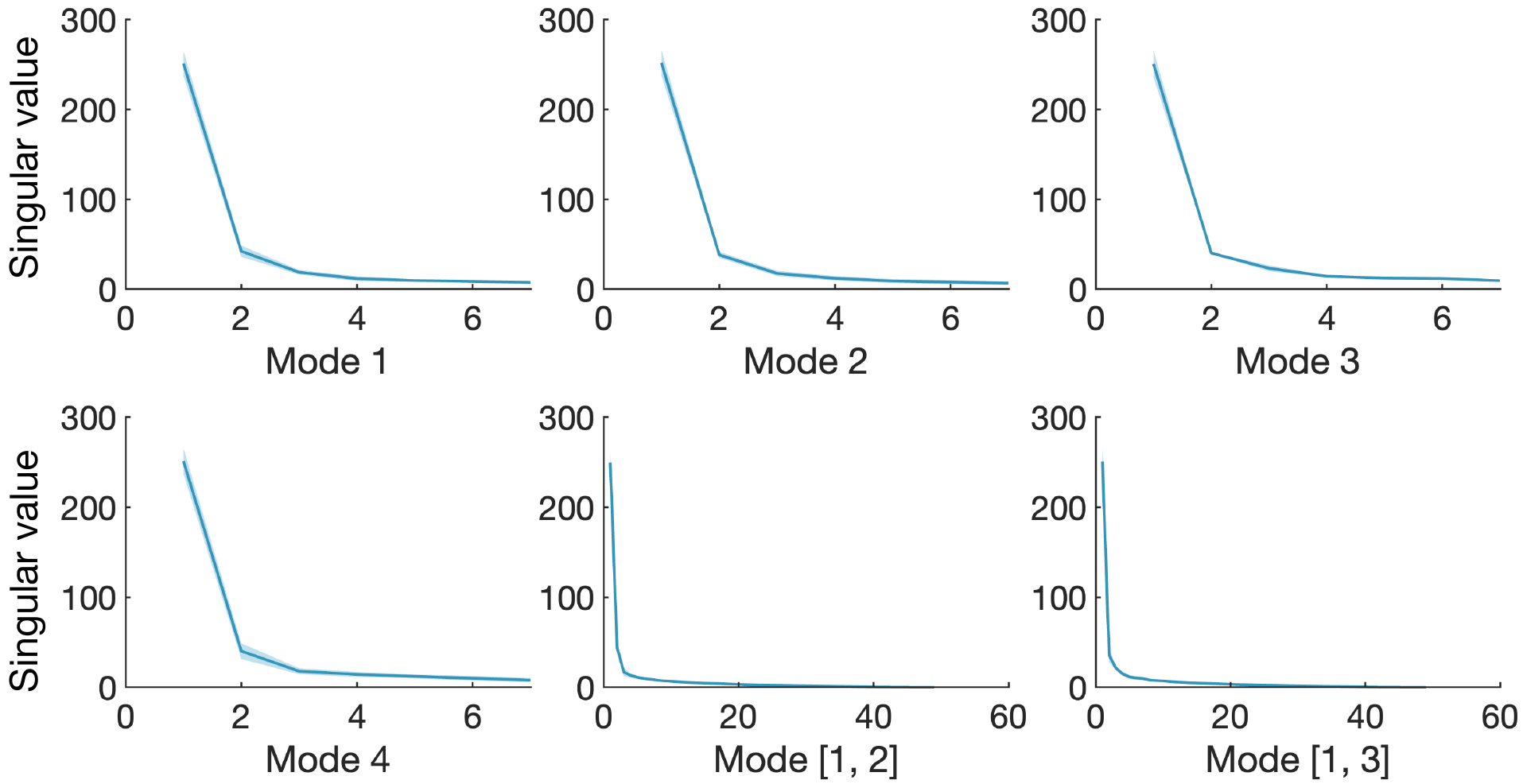}

}
\subfigure[Optimization landscapes (the inverse 1/f (x)) wrt. the tensor of order-4 and correct permutation.]{
\includegraphics[width=7.35cm]{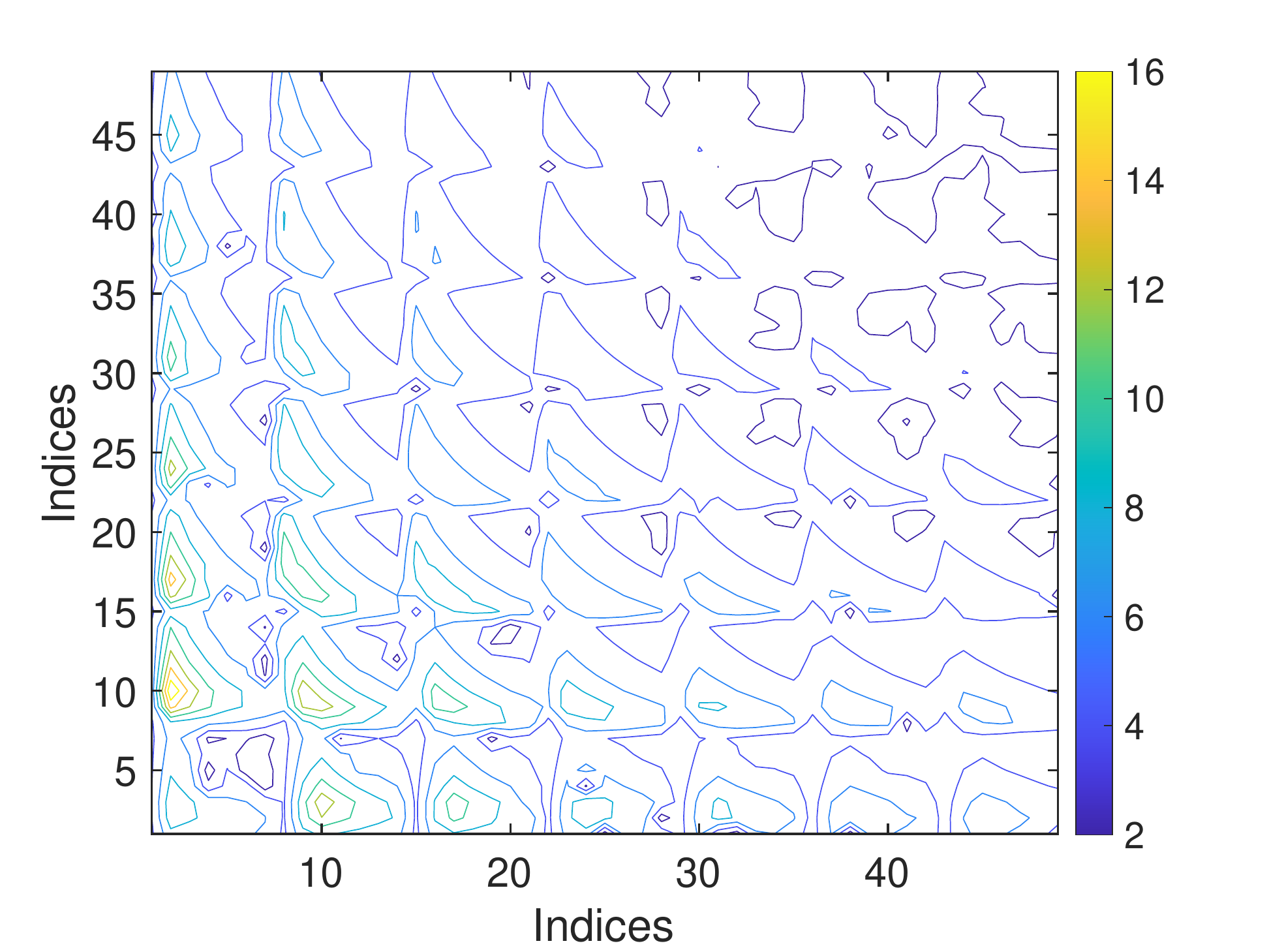}
}
\vspace{-0.4cm}
\caption{Averaged singular values and Optimization landscapes for the tensor of order-4.}
\label{apd:fig:landscape}
\end{figure*}

\subsection{Details for the experiment of TN-PS (\textit{w.r.t.}, Table~\ref{Tab:TNPS}).}
\textbf{Goal.}
In this experiment, our goal is to verify the superiority of TnALE in addressing the TN-PS problem.

\textbf{Data generation.}
For the synthetic data with TR topology (order-4, order-6, and order-8), as well as PEPS( order-6), HT (order-6), and MERA (order-8), we \emph{re-use} the data from ~\citet{li2022permutation}. To generate data with TW (order-5) topology, we set the dimension of each tensor mode to 3. Additionally, we randomly select TN-ranks from the set $\{1, 2, 3\}$. Then we \textit{i.i.d.} draw samples from Gaussian distribution $N(0, 1)$ as the values of core tensors. After contracting these core tensors based on the TW topology, we randomly and uniformly permute the tensor modes. 

\textbf{Settings.}
In the experiment, we implement TNGA and TNLS as comparison methods. We use the same objective function as described in~\citet{li2020evolutionary} for all the methods.
Specifically, the objective function of~\eqref{eq:basicModel} used in the experiment is as follows:
\begin{equation}
    F(G,\mathbf{r})=
    \underbrace{\frac{1}{\epsilon(G,\mathbf{r})}}_{\mbox{compression ratio (CR)}}+ \lambda\cdot\underbrace{\min_{\mathcal{Z}\in{}TNS(G,\mathbf{r})}\left\Vert\mathcal{X}-\mathcal{Z}\right\Vert^2/\left\Vert{}\mathcal{X}\right\Vert^2}_{\mbox{relative squared error (RSE)}},
    \label{eq:compressionlModel}
\end{equation}
where $\mathcal{X}$ denotes the synthetic tensor, and $\epsilon(G,\mathbf{r})$ represents the compression ratio equalling to
\begin{equation*}
    \epsilon(G,\mathbf{r})=\frac{\mbox{Dimension of }\mathcal{X}}{\mbox{Dimension sum of core tensors of the TN under }(G,\mathbf{r})}.
\end{equation*}

The trade-off parameter $\lambda$ in ~\eqref{eq:compressionlModel} is set to 200. For the solver of the inner minimization, we utilize the Adam optimizer ~\citet{kingma2014adam} with a learning rate of 0.001. Additionally, the core tensors are initialized using Gaussian distribution $N(0, 0.1)$. Furthermore, the search range for TN-ranks is set from 1 to 7, except for TW data, for which the search range is limited to 1 to 4. For TNGA, the maximum number of generations is set to 30. The population size in each generation is 120 for all the TN topologies except for TR, which is set as 150. During each generation, the elimination rate is 36$\%$ and the reproduction trick ~\cite{snyder2006random} is adopted and we set the reproduction number to be 2. Meanwhile, for the selection probability of the recombination operation, we set the hyper-parameters $\alpha=20$ and $\beta=1$. Moreover, there is a 24$\%$ chance for each gene to mutate after the recombination. For TNLS, we set the sample numbers in each local sampling stage to 60. The tuning parameter $c_{1}$ is fixed at 0.9 throughout the experiment. As for the tuning parameter $c_{2}$, it is adjusted based on the tensor order. Specifically, we set $c_{2}=0.9$ for order-4 TR, $c_{2}=0.94$ for order-6 TR, PEPS, TW and HT, and for MERA and order-8 TR, we set $c_{2}=0.98$. In our proposed method TnALE, we maintain consistent settings throughout the experiment. The rank-related radius is set as $r_{1}=2$ and $r_{2}=1$. During the initialization phase, we perform 2 iterations, and during the searching phase, we conduct 30 iterations. Additionally, we set the number of round-trips of ALE to 1. 
For performance evaluation, we use the \textit{Eff.} index, and \textit{Eff.}$\geq 1$ indicates an identical or more compact structure has been found. If the results do not satisfy the conditions of RSE $\le 10^{-4}$ and \textit{Eff.}$\geq 1$, we say the approach fails in the experiment. 
\begin{table*}[t]
	\centering
	\caption{Experimental results of the TN-PS task on TR topology. In the table, \emph{Eff.} and the required evaluation numbers \emph{\#Eva.} are demonstrated. Specifically, \emph{\#Eva.} is shown in the square brackets.
	}
	\begin{threeparttable}\tiny\label{apd:tab:TNPS:TR}
		\setlength{\tabcolsep}{0.5mm}{   	
			\begin{tabular}{cccccc|ccccc|ccccc}
				\toprule
	\multirow{4}[0]{*}{\textbf{Methods}}&\multicolumn{5}{c}{\textbf{ Order 4}}&\multicolumn{5}{c}{\textbf{order 6}}&\multicolumn{5}{c}{\textbf{order 8}}\\
				& \textbf{A} &  \textbf{B} & \textbf{C} & \textbf{D} & \textbf{E}&\textbf{A} &  \textbf{B} & \textbf{C} & \textbf{D} & \textbf{E}&\textbf{A} &  \textbf{B} & \textbf{C} & \textbf{D} & \textbf{E}\\
				\cmidrule{2-16}
				{}&\multicolumn{15}{c}{\emph{Eff.$\uparrow$~[$\#$Eva.$\downarrow$]} }\\
				\midrule
				\textbf{TNGA} &1.00~[450]& 1.00~[450] & 1.17~[450] & 1.00~[300] &1.00~[450] & 1.00~[1500] &1.00~[1350]& 1.00~[1650] & 1.16~[1650] & 1.00~[1050]&  1.00~[2850]&1.02~[2250] &1.11~[3950] & 1.06~[1950]&0.88~[1500] \\
				\textbf{TNLS} &1.00~[240]& 1.00~[300] & 1.17~[60] & 1.00~[300] &1.00~[360] & 1.00~[660] &1.00~[600]& 1.00~[660] & 1.16~[600] & 1.00~[540]&  1.00~[1020]&1.02~[960] &1.11~[1320] & 1.06~[780]&1.17~[900] \\
				\textbf{TnALE(ours)} &1.00~[\textbf{93}]& 1.00~[\textbf{155}] & 1.17~[\textbf{31}] & 1.00~[\textbf{124}] &1.00~[\textbf{62}] & 1.00~[\textbf{156}] &1.00~[\textbf{321}]& 1.00~[\textbf{156}] & 1.16~[\textbf{156}] & 1.00~[\textbf{89}]&  1.00~[\textbf{231}]&1.02~[\textbf{308}] &1.11~[\textbf{308}] & 1.06~[\textbf{231}]&1.17~[\textbf{178}]  \\
				

				\bottomrule
			\end{tabular}
		}
	\end{threeparttable}
\end{table*}

\begin{table*}[t]
	\centering
	\caption{Experimental results of the TN-PS task on PEPS, HT, MERA and TW topology. In the table, \emph{Eff.} and the required evaluation numbers \emph{\#Eva.} are demonstrated. Specifically, \emph{\#Eva.} is shown in the square brackets. The symbol “-” in the table means the failure of the approach.
	}
	\begin{threeparttable}\tiny\label{apd:tab:TNPS:others}
		\setlength{\tabcolsep}{0.25mm}{   	
			\begin{tabular}{ccccc|cccc|cccc|cccc}
				\toprule
	\multirow{4}[0]{*}{\textbf{Methods}}&\multicolumn{4}{c}{\textbf{ PEPS}}&\multicolumn{4}{c}{\textbf{HT}}&\multicolumn{4}{c}{\textbf{MERA}}&\multicolumn{4}{c}{\textbf{TW}}\\
				& \textbf{A} &  \textbf{B} & \textbf{C} & \textbf{D} &\textbf{A} &  \textbf{B} & \textbf{C} & \textbf{D} &\textbf{A} &  \textbf{B} & \textbf{C} & \textbf{D} &\textbf{A} &  \textbf{B} & \textbf{C} & \textbf{D} \\
				\cmidrule{2-17}
				{}&\multicolumn{16}{c}{\emph{Eff.$\uparrow$~[$\#$Eva.$\downarrow$]} }\\
				\midrule
				\textbf{TNGA} &1.14~[1560]& - & 1.00~[840] & 1.21~[1080] &1.45~[960] & 1.21~[1320] &1.18~[840]& 1.29~[1080] & - & 1.32~[960]&  2.30~[2800]&1.00~[3240] &1.24~[1920] & 2.61~[1440]&1.23~[600]&1.30~[720] \\
				\textbf{TNLS} &1.14~[781]& 1.00~[781] & 1.00~[421] & 1.21~[661] &1.45~[841] & 1.21~[841] &1.18~[781]& 1.29~[721] & 1.09~[1561] & 1.88~[841]&  2.88~[1441]&1.03~[721] &1.24~[661] & 2.61~[601]&1.23~[601]&1.30~[481] \\
				\textbf{TnALE(ours)} &1.14~[\textbf{407}]& 1.00~[\textbf{465}] & 1.00~[\textbf{233}] & 1.21~[\textbf{175}] &1.45~[\textbf{211}] & 1.21~[\textbf{281}] &1.18~[\textbf{211}]& 1.29~[\textbf{211}] & 1.09~[\textbf{1450}] & 1.88~[\textbf{484}]&  2.88~[\textbf{323}]&1.03~[\textbf{323}] &1.24~[\textbf{285}] & 2.61~[\textbf{143}]&1.23~[\textbf{285}]&1.30~[\textbf{214}]  \\
				

				\bottomrule
			\end{tabular}
		}
	\end{threeparttable}
\end{table*}

\begin{figure}[ht]
\centering
\includegraphics[width=1\columnwidth]{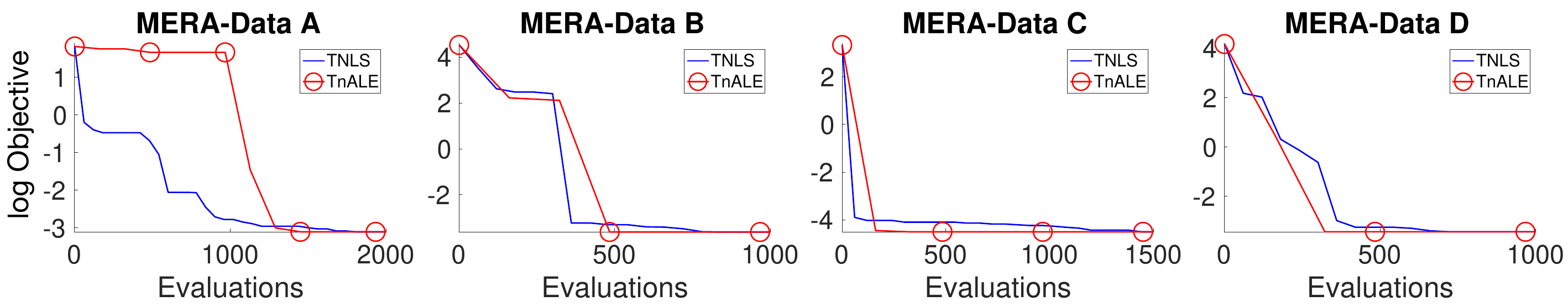}
\vspace{-0.4cm}
\caption{Objective (in the log form) with varying the number of evaluations: an observation of the local convergence of TnALE in MERA.}
\label{apd:fig:TNPS:Mera}
\end{figure}
\textbf{Results.}
The results for TR topology are presented in Table ~\ref{apd:tab:TNPS:TR}, and the results for PEPS, HT, MERA, and TW topology are shown in Table ~\ref{apd:tab:TNPS:others}. Based on the results, we observe that both TNLS and TnALE can successfully identify the ranks and permutations of the data, as indicated by \textit{Eff.}$\geq 1$. When comparing TNLS and TnALE, we find that TnALE achieves the same results with significantly fewer evaluation requirements. This highlights the superiority of TnALE in solving the TN-PS problem, demonstrating its efficiency and effectiveness. In Figure ~\ref{fig:TNPS:AverageCurves}, the averaged log objective curves with varying evaluation numbers of TNLS and TnALE are displayed. It is apparent from the figures that TnALE demonstrates a faster descending trend and achieves lower objective values given the same number of evaluations compared to TNLS for most cases (except for MERA). These results indicate the practical advantage of the proposed method, particularly in scenarios where computational resources are limited, and only a certain number of evaluations can be performed. For the results of MERA, we further draw the objective curves of each data in Figure 
~\ref{apd:fig:TNPS:Mera}. From the MERA-Data A curve, it is observed that TnALE descends at a slow pace until approximately 1000 evaluations, whereas TNLS continues to descend. The main reason for this behavior is that TnALE gets trapped in a local optimum and struggles to jump out by restarting the ALE algorithm with a new random center, while TNLS is more likely to overcome such local optima due to its stochastic essence. Moreover, in order to demonstrate the scalability of different TN-PS methods with respect to the tensor order, we draw the average number of evaluations with TR order in Figure ~\ref{apd:fig:TNPS:TR}. From the results, it is evident that the proposed method exhibits a slower increase in the number of evaluations with increasing tensor order compared to other methods. These results highlight the scalability of the proposed method, indicating its ability to handle higher-order tensors effectively.

\begin{figure}[ht]
    \centering
    \includegraphics[width=0.8\columnwidth]{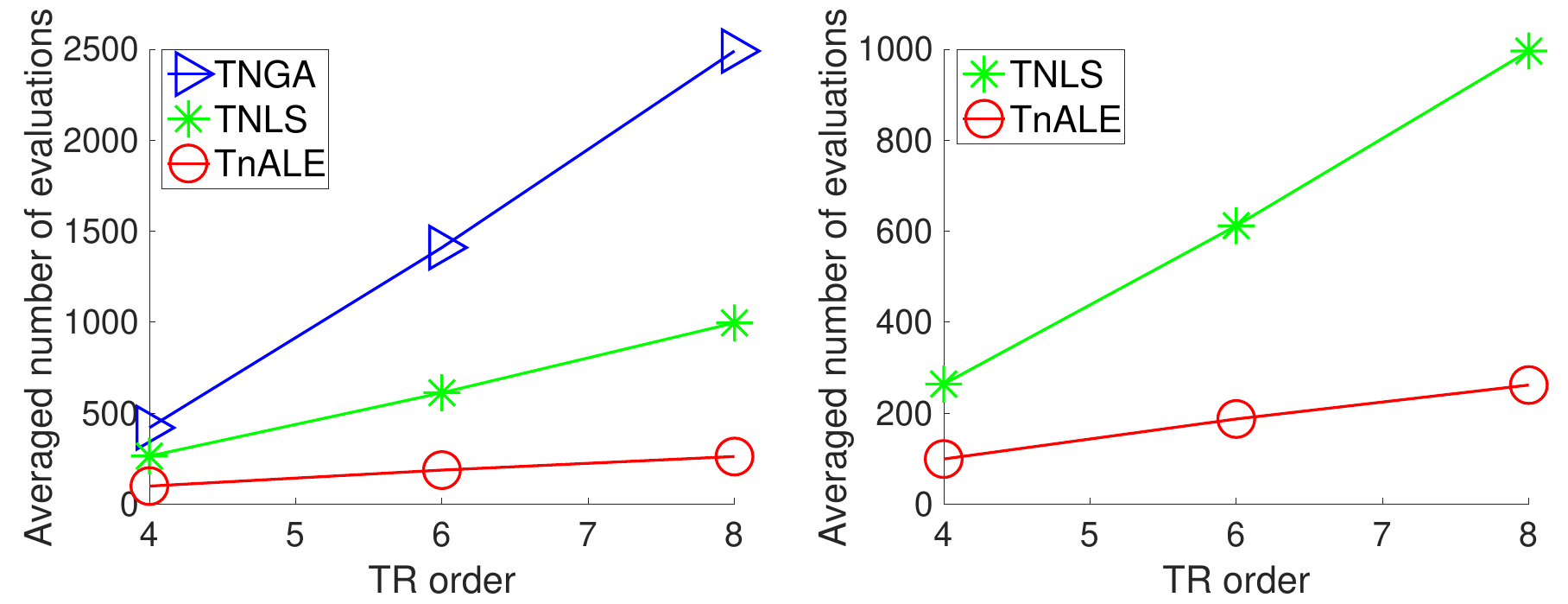}
    \vspace{-0.1cm}
    \caption{Number of evaluations with varying TR orders.}
    \label{apd:fig:TNPS:TR}
\end{figure}

\subsection{Details for the experiment of TN-RS (\textit{w.r.t.} Table~\ref{Tab:TNRS}).}
\textbf{Goal.}
In this experiment, we consider the classic rank-selection problem, \textit{i.e.}, TN-RS, for TR decomposition.

\textbf{Data Generation.}
We generate synthetic tensors in TR topology with two configurations:
``\emph{lower-ranks}'' and ``\emph{higher-ranks}''.
In both configurations, we generate five tensors by randomly selecting ranks and values of the vertices (core tensors). Each tensor has an order of 8, and the dimensions for each tensor mode are set to 3. We \textit{i.i.d.} draw samples from Gaussian distribution \textit{N} (0, 1) as the values of the vertices.
In the ``lower-ranks'' group, we uniformly select the TN-ranks from the interval $[1, 4]$ randomly, while in the ``higher-ranks'' group, we increase the rank interval to $[5,8]$. This ensures that the ranks would be larger than the dimensions of the tensor modes. This configuration aims to simulate the scenario of the over-determined ranks, which commonly occurs in practice for high-order TNs but has received limited attention in existing works.

\textbf{Settings.} 
In the experiment, we compare various rank-adaptive TR decomposition methods. These methods include TR-SVD, TR-rSVD, TR-ALSAR, TR-BALS and TR-BALS2~\cite{zhao2016tensor}, TR-LM (Alg. 2 and Alg. 3)~\cite{mickelin2020algorithms}, TRAR~\cite{sedighin2021adaptive}.
Additionally, the TTOpt algorithm~\cite{sozykin2022ttopt} with ranks~\footnote{Here the ranks are tuning parameters in the TTOpt algorithm.} equaling {1, 2} is also employed as a baseline. The purpose of including these methods is to assess the effectiveness of the ``local-searching'' scheme utilized in TnALE (our proposed method) and determine its superiority in comparison to existing approaches. 
In more detail, for TR-SVD, TR-rSVD, TR-ALSAR, TR-BALS, and TR-BALS2~\cite{zhao2016tensor}, the available codes have been used.\footnote{https://qibinzhao.github.io/} In order to achieve a larger \textit{Eff.} value, we adjust the parameters \textit{tol} and \textit{MaxIter} to ensure the value of RSE is less than but close to $10^{-4}$. For TR-LM (Alg. 2 and Alg. 3)~\cite{mickelin2020algorithms}, we use the available codes \footnote{https://github.com/oscarmickelin/tensor-ring-decomposition} with default parameter settings. However, we adjust the value of \textit{prec} to obtain a larger \textit{Eff.} value. For TRAR~\cite{sedighin2021adaptive}, we replace the TR-ALS~\cite{wang2017efficient} in \textit{Algorithm 1} of ~\citet{mickelin2020algorithms} with the same decomposition method used in TTOpt. This modification is necessary because the initialization method of TR-ALS is not suitable for the case of higher ranks. Regarding TTOpt~\cite{sozykin2022ttopt}, we employ the same objective function as used in the TN-PS experiment, with the  trade-off parameter $\lambda=200$. For the \textit{lower ranks} group, the rank searching range is set to $[1,7]$, while for the \textit{higher ranks} group, the range is extended to $[1,10]$. During the initialization phase, we \textit{i.i.d.} draw samples from Gaussian distribution \textit{N} (0, 1) to generate the values of core tensors. For the proposed method TnALE, we set the rank-related radius $r_{1}=3, r_{2}=2$ for the higher ranks group and  $r_{1}=2, r_{2}=1$ for lower ranks group. The number of iterations in the initialization phase is set to 1, the number of iterations in the searching phase is set to 30, and the number of round-trips of ALE is set to 1 throughout the experiments. Other parameters of TnALE are set the same as TTOpt. For TNGA, we set the population in each generation to be 60. The searching ranges and the initialization scheme of core tensors are similar to TTOpt. The other parameters of TNGA are set the same as the TN-PS experiment. For TNLS, we set the sample numbers in each local sampling stage to be 60 and $c_{1}=0.9$, and the other parameters are set the same as in TTOpt.
The success condition for all approaches in the experiment is set as RSE $\le 10^{-4}$ and \textit{Eff.} $\ge 1$. If an approach fails to meet these criteria, it is considered a failure in rank selection.

\begin{figure}[ht]
    \centering
    \includegraphics[width=0.6\columnwidth]{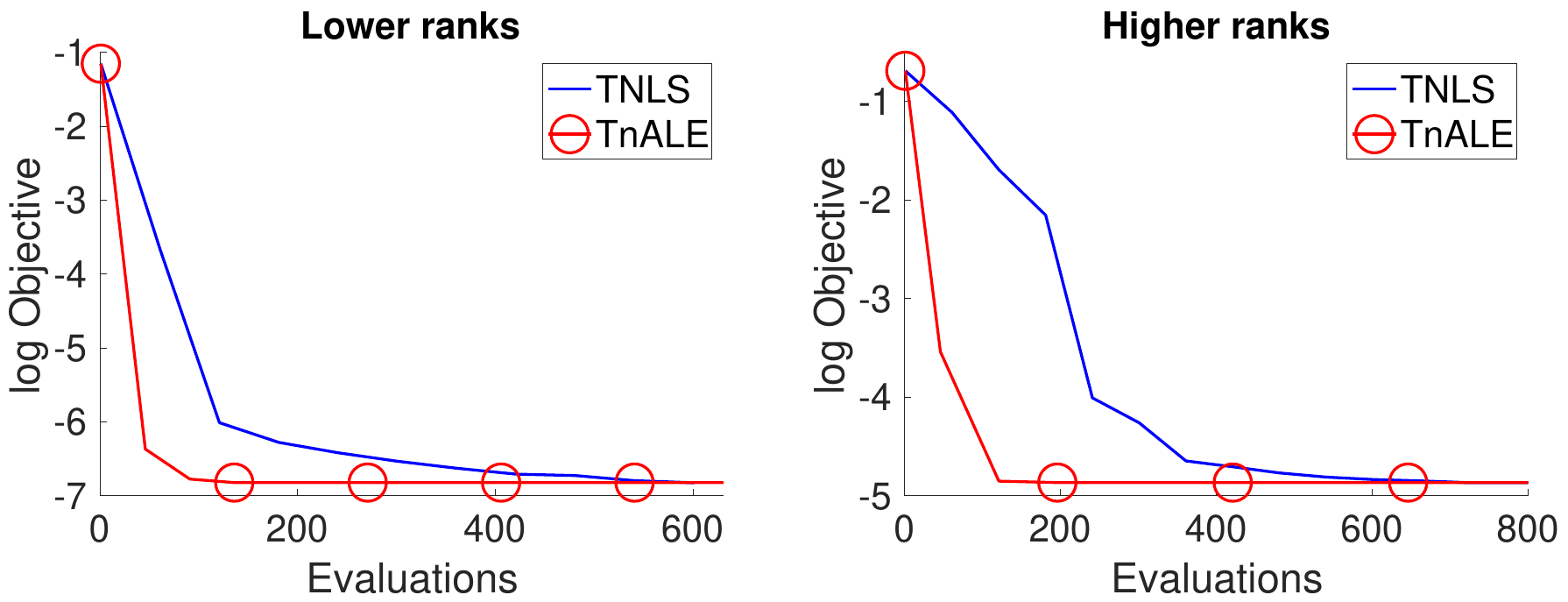}
    \vspace{-0.1cm}
    \caption{Objective (in the log form) with varying the number of evaluations.}
    \label{apd:fig:TNRS}
\end{figure}
\begin{figure}[ht]
    \centering
    \includegraphics[width=0.4\columnwidth]{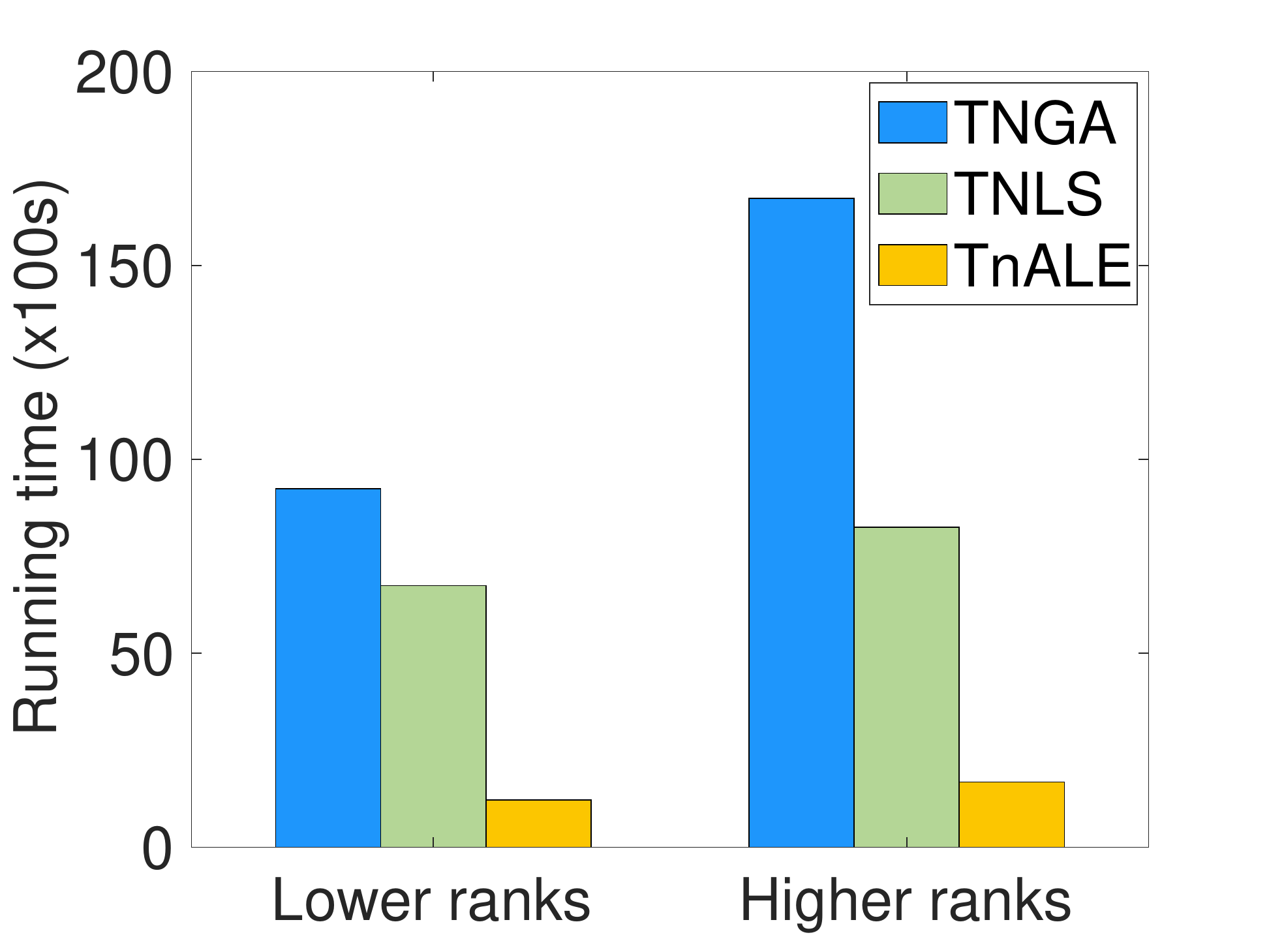}
    \vspace{-0.2cm}
    \caption{Running time in TN-RS experiment}
    \label{apd:bar:RunningtimeTNRS}
\end{figure}

\begin{table}[t]
\caption{Experimental results of TN-RS~(rank selection) in 8-th order TR topology under the "lower ranks" group. In the first column of the table, A, B, C, D, E~(\textit{Data}) and their corresponding vectors~(\textit{Rank\_grt}) represent the five generated synthetic tensors and the TN-ranks of these five tensors. The item \textit{{Rank\_est}} indicates the specific value of the TN-ranks learned by the corresponding method under the constraint RSE $\leq 10^{-4}$, and \textit{Time (s)} or \textit{[\#Eva.]} indicates the running time or the number of evaluations that the method required.}
\vskip 0.15in
\label{apd:tab:lowerrank8}
\centering
\begin{tiny}
    \begin{tabular}{cccccccccc}
    \toprule
    \textbf{Methods} & \multicolumn{3}{c}{\textbf{TR-SVD}} & \multicolumn{3}{c}{\textbf{TR-rSVD}} & \multicolumn{3}{c}{\textbf{TR-ALSAR}} \\
    \midrule
    \textit{Data[Rank\_grt]} & \textit{Eff.}~/~RSE & \textit{Rank\_est} & \textit{Time~(s)} & \textit{Eff.}~/~RSE & \textit{Rank\_est} & \textit{Time~(s)} & \textit{Eff.}~/~RSE & \textit{Rank\_est} & \textit{Time~(s)} \\
    A [3 4 2 3 1 3 4 2] & 0.45~/~8.55E-13 & [3 8 4 6 2 6 3 1] & 0.0029 & 0.51~/~0.0013 & [3 6 4 6 2 6 3 1] & 0.0038 & 0.08~/~2.43E-05 & [11 7 7 7 10 14 8 10] & 0.5959 \\
    B [3 4 4 2 2 1 1 4] & 0.23~/~4.45E-05 & [3 9 11 6 6 3 3 1] & 0.0081 & 0.37~/~0.0146 & [3 6 6 6 6 3 3 1] & 0.0043 & 0.79~/~4.02E-12 & [4 4 4 2 2 2 3 3] & 0.0324 \\
    C [2 4 2 3 3 1 4 4] & 0.29~/~3.55E-05 & [3 9 8 4 8 4 3 1] & 0.002812 & 0.46~/~0.0210 & [3 6 6 4 6 3 3 1] & 0.0039 & 0.94~/~3.55E-05 & [4 4 2 1 2 2 3 4] & 0.0333 \\
    D [1 4 1 3 4 2 1 1] & \textbf{1.13~/~4.78E-05} & [1 2 1 3 4 2 1] & 0.0084 & \textbf{1.13~/~4.78E-05} & [1 2 1 3 4 2 1 1] & 0.0055 & 0.64~/~2.29E-11 & [1 3 3 4 4 2 2 1] & 0.0233 \\
    E [4 1 4 2 3 2 1 1] & \textbf{1.17~/~2.04E-13} & [3 1 3 2 3 2 1 1] & 0.0043 & \textbf{1.17~/~3.13E-13} & [3 1 3 2 3 2 1 1] & 0.0059 & 0.78~/~1.45E-11 & [3 3 3 2 3 2 2 1] & 0.0205 \\
    \midrule
    \textbf{Methods} & \multicolumn{3}{c}{\textbf{TR-BALS}} & \multicolumn{3}{c}{\textbf{TR-BALS2}} & \multicolumn{3}{c}{\textbf{TRAR}} \\
    \midrule
    \textit{Data} & \textit{Eff.}~/~RSE & \textit{Rank\_est} & \textit{Time~(s)} & \textit{Eff.}~/~RSE & \textit{Rank\_est} & \textit{Time~(s)} & \textit{Eff.}~/~RSE & \textit{Rank\_est} & \textit{[\#Eva.]} \\
    A     & \textbf{1.00~/~5.00E-13} & [3 4 2 3 1 3 4 2] & 0.0146 & 0.45~/~6.90E-13 & [3 8 4 6 2 6 3 1] & 0.0188 & 0.48~/~3.79E-11 & [4 4 3 4 5 4 4 3] & 69 \\
    B     & \textbf{1.07~/~6.07E-13} & [3 4 4 2 2 1 1 3] & 0.0127 & 0.20~/~1.03E-05 & [3 9 12 6 6 3 3 5] & 0.0174 & 0.61~/~3.17E-08 & [3 4 4 4 4 2 4 3] & 37 \\
    C     & \textbf{1.38~/~3.55E-05} & [2 4 2 1 2 1 3 4] & 0.0189 & 0.26~/~3.57E-05 & [3 9 8 4 8 4 3 6] & 0.0151 & 0.65~/~3.55E-05 & [3 4 3 5 3 3 3 4] & 68 \\
    D     & \textbf{1.13~/~4.78E-05} & [1 2 1 3 4 2 1 1] & 0.021 & \textbf{1.13~/~4.78E-05} & [1 2 1 3 4 2 1 1] & 0.0487 & 0.41~/~5.55E-05 & [4 5 3 3 3 3 3 2] & 22 \\
    E     & \textbf{1.17~/~6.34E-13} & [3 1 3 2 3 2 1 1] & 0.0303 & \textbf{1.17~/~9.10E-13} & [3 1 3 2 3 2 1 1] & 0.0212 & 0.59~/~2.96E-11 & [5 2 3 2 3 2 2 3] & 36 \\
    \midrule
    \textbf{Methods} & \multicolumn{3}{c}{\textbf{TR-LM (Alg. 3)}} & \multicolumn{3}{c}{\textbf{TR-LM (Alg. 2)}} & \multicolumn{3}{c}{\textbf{TTOpt}~(\textit{R} = 1)} \\
    \midrule
    \textit{Data} & \textit{Eff.}~/~RSE & \textit{Rank\_est} & \textit{Time~(s)} & \textit{Eff.}~/~RSE & \textit{Rank\_est} & \textit{Time~(s)} & \textit{Eff.}~/~RSE & \textit{Rank\_est} & \textit{[\#Eva.]} \\
    A     & 0.40~/~4.22E-13 & [6 3 1 3 2 6 8 4] & 0.0222 & \textbf{1.00~/~2.86E-14} & [3 4 2 3 1 3 4 2] & 0.2736 & \textbf{1.00~/~9.41E-07} & [3 4 2 3 1 3 4 2] & 98 \\
    B     & 0.46~/~2.05E-06 & [6 7 3 1 3 2 2 6] & 0.0204 & \textbf{1.07~/~1.08E-14} & [3 4 4 2 2 1 1 3] & 0.2402 & \textbf{1.07~/~2.30E-06} & [3 4 4 2 2 1 1 3] & 140 \\
    C     & \textbf{1.38~/~3.55E-05} & [2 4 2 1 2 1 3 4] & 0.0227 & \textbf{1.38~/~3.55E-05} & [2 4 2 1 2 1 3 4] & 0.2503 & \textbf{1.11~/~3.55E-05} & [2 4 2 1 2 2 4 4] & 56 \\
    D     & \textbf{1.13~/~4.78E-05} & [1 2 1 3 4 3 1 1] & 0.026 & \textbf{1.13~/~4.78E-05} & [1 2 1 3 4 2 1 1] & 0.2407 & \textbf{1.06~/~4.82E-05} & [1 2 1 3 4 2 1 2] & 91 \\
    E     & \textbf{1.17~/~5.62E-14} & [3 1 3 2 3 2 1 1] & 0.022 & \textbf{1.17~/~5.62E-14} & [3 1 3 2 3 2 1 1] & 0.2454 & \textbf{1.17~/~6.61E-11} & [3 1 3 2 3 2 1 1] & 133 \\
    \midrule
    \textbf{Methods} & \multicolumn{3}{c}{\textbf{TTOpt}~(\textit{R} = 2)} & \multicolumn{3}{c}{\textbf{TTOpt}~(\textit{R} = 3)} & \multicolumn{3}{c}{\textbf{TNGA}} \\
    \midrule
    \textit{Data} & \textit{Eff.}~/~RSE & \textit{Rank\_est} & \textit{[\#Eva.]} & \textit{Eff.}~/~RSE & \textit{Rank\_est} & \textit{[\#Eva.]} & \textit{Eff.}~/~RSE & \textit{Rank\_est} & \textit{[\#Eva.]} \\
    A   & \textbf{1.00~/~5.00E-06} &    [3 4 2 3 1 3 4 2]   & 518   
    & \textbf{1.00~/~8.93E-05} &    [3 4 2 3 1 3 4 2]   & 1533   
     & \textbf{1.00~/~9.98E-05} &  [3 4 2 3 1 3 4 2]     & 480\\
    B  & \textbf{1.02~/~4.86E-07} &   [3 4 4 2 2 2 1 3]    & 336       
    & \textbf{1.02~/~3.72E-06} &   [3 4 4 2 2 2 1 3]    & 735   
    & \textbf{1.07~/~9.95E-05} &    [3 4 4 2 2 1 1 3]   & 660\\
    C   & \textbf{1.02~/~3.56E-05} &   [2 5 2 2 3 1 3 5]    & 154    
    & \textbf{1.00~/~9.45E-05} &   [2 5 2 2 3 2 3 4]    & 273   
     & \textbf{1.11~/~9.95E-05} &   [2 4 2 1 2 2 4 4]    & 600\\
    D  & \textbf{1.06~/~1.10E-08} &    [1 3 1 3 4 2 1 1]   & 196     
    & \textbf{1.00~/~4.87E-05} &  [1 2 1 3 4 2 1 3]     & 483   
     & \textbf{1.06~/~9.98E-05} &   [1 2 1 3 4 2 1 2]    & 600\\
    E    & \textbf{1.03~/~7.94E-06} &    [3 1 3 2 3 3 1 1]   & 364     
    & \textbf{1.17~/~3.14E-11} &  [3 1 3 2 3 2 1 1]     & 1071   
    & \textbf{1.17~/~9.92E-05} &   [3 1 3 2 3 2 1 1]    & 420\\
    \midrule
    \textbf{Methods} &     \multicolumn{3}{c}{\textbf{TNLS}} &     \multicolumn{3}{c}{\textbf{TnALE (ours)}}\\
    \cmidrule{1-7}  \textit{Data} & \textit{Eff.}~/~RSE & \textit{Rank\_est} & \textit{[\#Eva.]} & \textit{Eff.}~/~RSE & \textit{Rank\_est} & \textit{[\#Eva.]}\\
    A     & \textbf{1.00~/~9.99E-05} &    [3 4 2 3 1 3 4 2]   & 600 
    & \textbf{1.00~/~9.98E-05} &    [3 4 2 3 1 3 4 2]   & 66 \\
    B     &\textbf{1.07~/~9.97E-05} &   [3 4 4 2 2 1 1 3]    & 420
    & \textbf{1.07~/~9.98E-05} &   [3 4 4 2 2 1 1 3]    & 99 \\
    C     & \textbf{1.11~/~9.99E-05} &   [2 4 2 1 2 2 4 4]    & 420 
    & \textbf{1.11~/~9.95E-05} &   [2 4 2 1 2 2 4 4]    & 99 \\
    D     & \textbf{1.06~/~9.97E-05} &  [1 2 1 3 4 2 1 2]     & 480
    & \textbf{1.06~/~9.98E-05} &    [1 2 1 3 4 2 1 2]   & 69 \\
    E    & \textbf{1.17~/~9.92E-05} &  [3 1 3 2 3 2 1 1]     & 540  
    & \textbf{1.17~/~9.93E-05} &    [3 1 3 2 3 2 1 1]   & 63 \\
    \bottomrule
    \end{tabular}%
  \end{tiny}
\vskip -0.1in
\end{table}

\begin{table}[t]
\caption{Experimental results of TN-RS~(rank selection) in 8-th order TR topology under the "higher ranks" group. In the first column of the table, A, B, C, D, E~(\textit{Data}) and their corresponding vectors~(\textit{Rank\_grt}) represent the five generated synthetic tensors and the TN-ranks of these five tensors. The item \textit{{Rank\_est}} indicates the specific value of the TN-ranks learned by the corresponding method under the constraint RSE $\leq 10^{-4}$, and \textit{Time (s)} or \textit{[\#Eva.]} indicates the running time or the number of evaluations that the method required.}
\vskip 0.15in
\label{apd:tab:higherrank8}
\centering
  \begin{tiny}
    \begin{tabular}{cccccccccc}
    \toprule
    \textbf{Methods} & \multicolumn{3}{c}{\textbf{TR-SVD}} & \multicolumn{3}{c}{\textbf{TR-rSVD}} & \multicolumn{3}{c}{\textbf{TR-ALSAR}} \\
    \midrule
    \textit{Data[Rank\_grt]} & \textit{Eff.}~/~RSE & \textit{Rank\_est} & \textit{Time~(s)} & \textit{Eff.}~/~RSE & \textit{Rank\_est} & \textit{Time~(s)} & \textit{Eff.}~/~RSE & \textit{Rank\_est} & \textit{Time~(s)} \\
    A [8 8 8 5 7 6 8 8] & 0.16~/~9.40E-05 & [3 9 27 38 27 9 3 1] & 0.0174 & 0.16~/~9.40E-05 & [3 9 27 38 27 9 3 1] & 0.0238 & 1.11~/~0.0172 & [7 7 7 6 6 6 8 8] & 11.2686 \\
    B [6 5 7 7 6 5 6 5] & 0.12~/~4.30E-05 & [3 9 27 35 26 9 3 1] & 0.0136 & 0.11~/~1.13E-28 & [3 9 27 35 27 9 3 1] & 0.0361 & 0.82~/~0.0133 & [7 5 6 8 6 6 8 6] & 10.536 \\
    C [8 7 7 8 7 5 8 7] & 0.12~/~8.32E-05 & [3 9 27 51 27 9 3 1] & 0.0378 & 0.12~/~8.32E-05 & [3 9 27 51 27  9 3 1] & 0.0308 & 0.71~/~0.0094 & [9 8 6 17 6 7 9 8] & 16.2148 \\
    D [6 6 6 8 6 7 6 5] & 0.13~/~9.64E-05 & [3 9 26 38 26 9 3 1] & 0.0092 & 0.12~/~5.32E-05 & [3 9 27 38 27 9 3 1] & 0.0308 & 0.63~/~9.62E-05 & [9 8 8 8 7 9 7 7] & 0.6589 \\
    E [6 6 6 6 5 6 6 6] & 0.11~/~4.51E-05 & [3 9 27 36 26 9 3 1] & 0.0061 & 0.11~/~5.63E-05 & [3 9 27 35 27 9 3 1] & 0.0242 & 0.75~/~0.0298 & [7 7 5 6 4 9 8 8] & 11.5233 \\
    \midrule
    \textbf{Methods} & \multicolumn{3}{c}{\textbf{TR-BALS}} & \multicolumn{3}{c}{\textbf{TR-BALS2}} & \multicolumn{3}{c}{\textbf{TRAR}} \\
    \midrule
    \textit{Data} & \textit{Eff.}~/~RSE & \textit{Rank\_est} & \textit{Time~(s)} & \textit{Eff.}~/~RSE & \textit{Rank\_est} & \textit{Time~(s)} & \textit{Eff.}~/~RSE & \textit{Rank\_est} & \textit{[\#Eva.]} \\
    A     & 0.03~/~6.04E-29 & [28 20 19 26 44 89 51 39] & 0.8749 & 0.16~/~8.39E-05 & [3 9 27 39 27 9 3 1] & 0.142 & 0.67~/~5.59E-14 & [10   8   8   8   8  10   8  11] & 76 \\
    B     & 0.57~/~9.25E-05 & [6 6 10 9 9 7 9 6] & 0.0941 & 0.12~/~9.89E-05 & [3 9 27 35 26 9 3 1] & 0.1608 & 0.71~/~1.82E-13 & [8  5  7  7  7  7  6  9] & 74 \\
    C     & 0.12~/~9.71E-05 & [15 20 18 15 17 19 23 25] & 0.2763 & 0.12~/~6.35E-05 & [3 9 27 54 27 9 3 1] & 0.1643 & 0.58~/~8.89E-14 & [12   7  10   8   9  10   9  10] & 143 \\
    D     & 0.19~/~8.18E-05 & [9 15 17 22 10 16 15 10] & 0.1773 & 0.12~/~9.75E-06 & [3 9 27 40 26 9 3 1] & 0.1704 & 0.57~/~3.75E-14 & [11   6   7   8   8   9   7  10] & 76 \\
    E     & 0.03~/~3.97E-05 & [38 55 59 20 18 15 19 27] & 0.446 & 0.11~/~6.40E-30 & [3 9 27 36 27 9 3 1] & 0.1749 & 0.62~/~2.05E-14 & [10   6   7   7   7   8   6   9] & 75 \\
    \midrule
    \textbf{Methods} & \multicolumn{3}{c}{\textbf{TR-LM (Alg. 3)}} & \multicolumn{3}{c}{\textbf{TR-LM (Alg. 2)}} & \multicolumn{3}{c}{\textbf{TTOpt}~(\textit{R} = 1)} \\
    \midrule
    \textit{Data} & \textit{Eff.}~/~RSE & \textit{Rank\_est} & \textit{Time~(s)} & \textit{Eff.}~/~RSE & \textit{Rank\_est} & \textit{Time~(s)} & \textit{Eff.}~/~RSE & \textit{Rank\_est} & \textit{[\#Eva.]} \\
    A     & 0.16~/~9.40E-05 & [3 9 27 38 27 9 3 1] & 0.0257 & 0.16~/~2.87E-05 & [3 9 27 39 27 9 3 1] & 0.3318 & \textbf{1.00~/~3.65E-07} & [8  8  8  5  7  6  8  8] & 220 \\
    B     & 0.12~/~4.30E-05 & [3 9 27 35 26 9 3 1] & 0.001 & 0.15~/~8.36E-05 & [3 1 3 9 26 25 25 9] & 0.3336 & \textbf{1.00~/~1.52E-07} & [6  5  7  7  6  5  6  5] & 220 \\
    C     & 0.12~/~8.32E-05 & [3 9 27 51 27 9 3 1] & 0.0303 & 0.17~/~7.07E-05 & [3 1 3 9 27 34 27 9] & 0.3285 & \textbf{1.00~/~1.01E-06} & [8  7  7  8  7  5  8  7] & 150 \\
    D     & 0.13~/~9.64E-05 & [3 9 26 38 26 9 3 1] & 0.023 & 0.13~/~9.31E-05 & [35 27 9 3 1 3 9 25] & 0.3173 & \textbf{1.00~/~1.83E-06} & [6  6  6  8  6  7  6  5] & 150 \\
    E     & 0.11~/~4.51E-05 & [3 9 27 36 26 9 3 1] & 0.0299 & 0.13~/~9.38E-05 & [20 26 9 3 1 3 9 26] & 0.3845 & \textbf{1.00~/~5.01E-07} & [6  6  6  6  5  6  6  6] & 150 \\
    \midrule
    \textbf{Methods} & \multicolumn{3}{c}{\textbf{TTOpt}~(\textit{R} = 2)} & \multicolumn{3}{c}{\textbf{TTOpt}~(\textit{R} = 3)} & \multicolumn{3}{c}{\textbf{TNGA}} \\
    \midrule
    \textit{Data} & \textit{Eff.}~/~RSE & \textit{Rank\_est} & \textit{[\#Eva.]} & \textit{Eff.}~/~RSE & \textit{Rank\_est} & \textit{[\#Eva.]} & \textit{Eff.}~/~RSE & \textit{Rank\_est} & \textit{[\#Eva.]} \\
    A 
        & \textbf{1.00~/~6.49E-07} & [8  8  8  5  7  6  8  8] & 540    
        & \textbf{1.00~/~5.88E-07} & [8  8  8  5  7  6  8  8] & 1710   
        & \textbf{1.00~/~9.99E-05} & [8  8  8  5  7  6  8  8] & 1020\\
    B  
        & \textbf{1.00~/~1.61E-07} & [6  5  7  7  6  5  6  5] & 1060      
        & \textbf{1.00~/~4.00E-07} & [6  5  7  7  6  5  6  5] & 2670   
        & \textbf{1.00~/~9.94E-05} & [6  5  7  7  6  5  6  5] & 660 \\
    C  
        & \textbf{1.00~/~1.32E-06} & [8  7  7  8  7  5  8  7] & 780     
        & \textbf{1.00~/~9.23E-07} & [8  7  7  8  7  5  8  7] & 1740   
        & \textbf{1.00~/~9.95E-05} & [8  7  7  8  7  5  8  7] & 1380\\
    D  
        & \textbf{1.00~/~3.40E-07} & [6  6  6  8  6  7  6  5] & 540      
        & \textbf{1.00~/~1.67E-06} & [6  6  6  8  6  7  6  5] & 1710   
        & \textbf{1.00~/~9.93E-05} & [6  6  6  8  6  7  6  5] & 1020\\
    E    
        & \textbf{1.00~/~8.02E-13} & [6  6  6  6  5  6  6  6] & 840     
        & \textbf{1.00~/~1.99E-07} & [6  6  6  6  5  6  6  6] & 1740   
        & \textbf{1.00~/~9.94E-05} & [6  6  6  6  5  6  6  6] & 420\\
    \midrule
    \textbf{Methods} &     \multicolumn{3}{c}{\textbf{TNLS}} &     \multicolumn{3}{c}{\textbf{TnALE (ours)}}\\
    \cmidrule{1-7}  \textit{Data} & \textit{Eff.}~/~RSE & \textit{Rank\_est} & \textit{[\#Eva.]} & \textit{Eff.}~/~RSE & \textit{Rank\_est} & \textit{[\#Eva.]}\\
    A  
        & \textbf{1.00~/~9.97E-05} & [8  8  8  5  7  6  8  8] & 540 
        & \textbf{1.00~/~9.96E-05} & [8  8  8  5  7  6  8  8] & 115 \\
    B   
        & \textbf{1.00~/~9.92E-05} & [6  5  7  7  6  5  6  5] & 720 
        & \textbf{1.00~/~9.91E-05} & [6  5  7  7  6  5  6  5] & 85 \\
    C   
        & \textbf{1.00~/~9.91E-05} & [8  7  7  8  7  5  8  7] & 480
        & \textbf{1.00~/~1.00E-04} & [8  7  7  8  7  5  8  7] & 150 \\
    D  
        & \textbf{1.00~/~9.93E-05} & [6  6  6  8  6  7  6  5] & 600 
        & \textbf{1.00~/~9.92E-05} & [6  6  6  8  6  7  6  5] & 160 \\
    E   
        & \textbf{1.00~/~9.92E-05} & [6  6  6  6  5  6  6  6] & 600
        & \textbf{1.00~/~9.93E-05} & [6  6  6  6  5  6  6  6] & 85 \\   
    \bottomrule
    \end{tabular}%
\end{tiny}
\vskip -0.1in
\end{table}%

\textbf{Results.} 
Based on the results presented in Table~\ref{apd:tab:lowerrank8} and Table~\ref{apd:tab:higherrank8}, it can be observed that in the lower rank regime, TR-BALS, TR-LM (Alg. 2), TTOpt, TNGA, TNLS, and TnALE~(ours) are able to successfully select the optimal TR-ranks as indicated by \textit{Eff.}$\geq{}1$ and RSE$\leq 10^{-4}$. However, in the higher rank regime, only TTOpt, TNGA, TNLS, and TnALE~(ours) are able to find the optimal ranks. In terms of the number of evaluations, TnALE~(ours) outperforms TNGA, TNLS, and TTOpt, requiring the fewest evaluations while still achieving successful rank selection. This highlights the superiority of TnALE in solving the TN-RS problem efficiently. Furthermore, the running time comparison in Figure ~\ref{apd:bar:RunningtimeTNRS} demonstrates that TnALE saves a significant amount of time compared to TNGA and TNLS, primarily due to its lower number of evaluations. This further emphasizes the advantage of TnALE in scenarios where computational resources are limited. In Figure ~\ref{apd:fig:TNRS}, the averaged log objective curves of TNLS and TnALE with varying evaluation numbers are illustrated. It can be observed that TnALE exhibits a faster descending trend and achieves lower objective values given the same number of evaluations compared to TNLS. This demonstrates the practical advantage of TnALE, particularly in scenarios with restricted computational resources.

\subsection{Details for the experiment of knowledge transfer.}
\begin{figure}[ht]
    \centering
    \includegraphics[width=1\columnwidth]{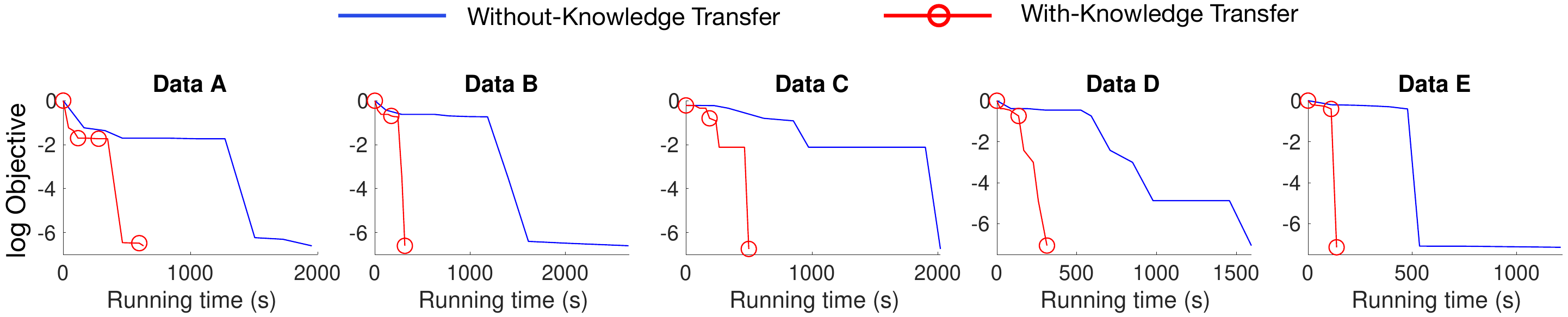}
    \vspace{-0.4cm}
    \caption{Objective (in the log form) curves with running time}
    \label{apd:fig:KT}
\end{figure}
\textbf{Goal.}
In this experiment, the goal is to investigate the acceleration effect of TnALE when employing the knowledge transfer trick.

\textbf{Data generation.}
We \emph{re-use} the data from the lower ranks group of the TN-RS experiment.

\textbf{Settings.}
In this experiment, we employ two variations of TnALE: one incorporates a knowledge transfer trick, while the other does not. Both methods share the same parameter settings, which are listed as follows: the rank searching range is set to $[1, 7]$, the trade-off parameter $\lambda$ is set to 200, the rank-related radius $r_{2}=2$. Additionally, we set the number of iterations in the initialization phase to 0 and the number of iterations in the searching phase to 30. For the number of round-trips of ALE, we set it to 1. The Adam optimizer is utilized with a learning rate of 0.001, and the core tensors are initialized using Gaussian distribution  $N(0, 1)$. Moreover, both methods are initialized with the same TN-ranks.

\textbf{Results.}
Figure ~\ref{apd:fig:KT} displays the objective curves as a function of running time. From the figures, it is evident that both methods start with identical log objectives but exhibit significant differences in their descent patterns. In comparison to TnALE without the knowledge transfer trick, TnALE with the knowledge transfer trick showcases a rapid decline in objectives, achieving approximately twice or even nearly five times faster progress than its counterpart.

\subsection{Details for the experiment of TGP (\textit{w.r.t.} Table~\ref{tab:GP}).}

\textbf{Goal.}
In this experiment, our goal is to utilize the proposed method TnALE to compress the learnable parameters of the TGP ~\cite{izmailov2018scalable}.

\textbf{Data generation.}
In this task, we select three univariate regression datasets from the UCI and LIBSVM archives. The datasets chosen are as follows: The Combined Cycle Power Plant (CCPP)\footnote{https://archive.ics.uci.edu/ml/datasets/Combined+Cycle+Power+Plant} dataset comprises 9569 data points collected from a power plant. It consists of 4 features and a single response. The MG\footnote{https://www.csie.ntu.edu.tw/~cjlin/libsvmtools/datasets/regression.html\#mg} dataset contains 1385 data points with 6 features and a single response.  The Protein\footnote{https://archive.ics.uci.edu/ml/datasets/Physicochemical+Properties+of+Protein+Tertiary+Structure} dataset consists of 45730 instances with 9 attributes and a single response. For each of the datasets, we begin by randomly selecting 80\% of the data for training purposes, while the remaining 20\% is reserved for testing. Subsequently, we standardize the training and testing sets respectively by removing the mean and scaling them to have unit variance. In the case of the CCPP dataset, we opt to use 12 inducing points on each feature, resulting in an order-4 tensor with dimensions of $12 \times 12 \times 12 \times 12$. For the MG dataset, we choose 8 inducing points, which leads to an order-6 tensor with dimensions of $8 \times 8 \times 8 \times 8 \times 8 \times 8$. Lastly, for the Protein dataset, we choose 4 inducing points, generating an order-9 tensor with dimensions of $4 \times 4 \times 4 \times 4 \times 4 \times 4 \times 4 \times 4 \times 4$. Across all datasets, we set the TT-ranks for the TGP~\cite{izmailov2018scalable} algorithm to 10. 

\textbf{Settings.}
In the comparison of methods, we employ the same objective function as used in the TN-PS experiment. Additionally, we set specific values for certain parameters, $\lambda=1\times10^{5},1\times10^{7},1\times10^{3}$ for CCPP, MG and Protein, respectively. Moreover, the following settings are common for all the methods being compared: the rank searching range, the learning rate of Adam, and the variance of the Gaussian distribution for core tensors initialization are set from 1 to 14, 0.001, and 0.01, respectively. For the TNGA method, we set the maximum number of generations to 30. The population in each generation is set to be 150, 190, and 300 for the TT variational mean of CCPP, MG, and Protein regression tasks. The elimination rate is set at $30\%$ and the reproduction number is set to 1. Additionally, we assign $\alpha = 20$ and $\beta = 1$. The chance for each gene to mutate after the recombination is $30\%$. For TNLS, the maximum iteration is limited to 20, and the tuning parameters $c_{1}$ = 0.9, $c_{2}$ = 0.9. For the TT variational mean of CCPP, MG, and Protein regression tasks, we determined the number of samples in the local sampling stage to be 150, 300, and 300 respectively.
For the proposed method TnALE, we consistently use the rank-related radius $r_{1}=3$ and $r_{2}=2$. In addition, we specifically designate the number of iterations in the initialization phase as 2 and the number of iterations in the searching phase as 30. Furthermore, we configure the number of round-trips in ALE to be 1.

In order to achieve more compact representations, we apply the TN-PS algorithms, which consist of TNGA, TNLS, and the proposed TnALE to TGP. The process involves training an initial TGP model with predefined TT-ranks and obtaining the TT representation of the variational mean.  Subsequently, the TN-PS algorithms are employed to search for alternative structures that have a reduced number of parameters for the TT variational mean. Upon completion of the TN-PS algorithms, we reintegrate the reparameterized variational mean back into the original TGP model for inference. The performance is evaluated by measuring the mean squared error (MSE) of the regression tasks conducted on the test datasets.


\subsection{Details for the experiment of natural images compression (\textit{w.r.t.} Table~\ref{tab:TNTS}).}

\textbf{Goal.}
In this experiment, we will investigate the effectiveness of the proposed TnALE method in tackling the TN-PS and TN-TS tasks associated with compressing natural images. Specifically, in TN-TS, our aim is to search for good TN-ranks and topologies for compressing images.

\textbf{Data generation.}
For this experiment, we select 4 natural images from the BSD500 dataset \cite{arbelaez2010contour}\footnote{https://www2.eecs.berkeley.edu/Research/Projects/CS/vision/bsds/BSDS300/html/dataset/images.html} at random, as shown in Figure \ref{fig:livecompression}. The selected images are first converted to grayscaled images of size $256\times256$ using the "rgb2gray" and ``resize'' functions in Matlab, then scaled to the range of $\lbrack0,1\rbrack$. Finally, we apply the Matlab function "reshape" directly to the preprocessed images to represent them as order-8 tensors of size $4 \times 4 \times 4 \times 4 \times 4 \times 4 \times 4 \times 4$.

\textbf{Settings.}
In the TN-PS task of the experiment, we use the same objective function as in the TN-PS experiment and set the tuning parameter $\lambda=5$. The rank searching range, the learning rate of Adam, and the variance of the Gaussian distribution for core tensors initialization are set from 1 to 14, 0.01, and 0.1, respectively. For TNLS, we set the maximum number of iterations to 20, and tuning parameters $c_{1}$ = 0.95, $c_{2}$ = 0.9, and the number of samples in the local sampling stage to 150. In TNGA, the maximum number of generations is set to 30, with a population of 300 per generation. The elimination is set at $10\%$ and the reproduction number is set to 1. We also set $\alpha$ = 25, $\beta$ = 1, and establish a $30\%$ chance for each gene to mutate following the recombination process. Regarding the proposed method TnALE, we set the rank-related radius as $r_{1}=3$ and $r_{2}=2$. We also set the number of iterations in the initialization phase to 1 and the number of iterations in the searching phase to 30. Finally, we set the number of round-trips of ALE to 1.

In the TN-TS task of the experiment, we use the same objective function, the learning rate of Adam, and the variance of the Gaussian distribution for core tensors initialization as in the TN-PS part, but set the rank searching range from 1 to 4. For TNLS, we set the maximum number of iterations to 20, tuning parameters $c_{1}$ = 0.99, and the number of samples in the local sampling stage to 100. For the parameter settings of TNGA, we only change the population number to 100 compared to the TN-PS part. For the proposed method TnALE, we set the rank-related radius $r_{2}$ to 1 and the number of iterations in the initialization phase to 0, while the number of iterations in the searching phase to 30. The number of round-trips of ALE is also set to 1. For Greedy, we set the RSE threshold to the same value as the result RSE of the proposed method TnALE.



\end{document}